\documentclass{article}

\usepackage{microtype}
\usepackage{graphicx}
\usepackage{subcaption}
\usepackage{booktabs} % for professional tables
\usepackage{array}
\usepackage{url}

\usepackage{breakurl}
\usepackage[breaklinks]{hyperref}
\usepackage[accepted]{icml2026}

\usepackage{amsmath}
\usepackage{amssymb}
\usepackage{mathtools}
\usepackage{amsthm}
\usepackage{float}
\usepackage{natbib}

\theoremstyle{plain}
\newtheorem{theorem}{Theorem}[section]
\newtheorem{proposition}[theorem]{Proposition}
\newtheorem{lemma}[theorem]{Lemma}

\theoremstyle{definition}

\theoremstyle{remark}

\usepackage{relsize}
\usepackage{stfloats}
\usepackage{dsfont}
\usepackage{subcaption}
\usepackage{tikz}
\usetikzlibrary{calc}

\icmltitlerunning{Random Process Flow Matching}

\begin{document}

\twocolumn[
    \icmltitle{\texorpdfstring{Random Process Flow Matching: \\Generative Implicit Representations of Multivariate Random Fields}{Random Process Flow Matching: Generative Implicit Representations of Multivariate Random Fields}}
    % \icmlsetsymbol{equal}{*}

    \begin{icmlauthorlist}
    \icmlauthor{Julien Lalanne}{navier,total,imagine}
    \icmlauthor{David Picard}{imagine}
    \icmlauthor{Lionel Boillot}{total}
    \icmlauthor{Lina-María Guayacán-Carrillo}{navier}
    \icmlauthor{Leon Barens}{total}
    \icmlauthor{Jean-Michel Pereira}{navier}
    \end{icmlauthorlist}

    \icmlaffiliation{navier}{Navier, CNRS, Univ Gustave Eiffel, ENPC, IP Paris, France}
    \icmlaffiliation{total}{TotalEnergies, OneTech, France}
    \icmlaffiliation{imagine}{LIGM, CNRS, Univ Gustave Eiffel, ENPC, IP Paris, France}

    \icmlcorrespondingauthor{Julien Lalanne}{julien.lalanne@enpc.fr}

    \icmlkeywords{Machine Learning, ICML}

    \vskip 0.3in
]

% \printAffiliationsAndNotice{\icmlEqualContribution}
\printAffiliationsAndNotice{}

% \newcommand{\TODO}{\colorbox{red}{TODO}}
% \newcommand{\dav}[1]{{\color{orange}#1}}
% \newcommand{\davv}[1]{{\color{orange}[DP:#1]}}
% \newcommand{\ju}[1]{{\color{purple}#1}}
% ========================================
% START OF THE PAPER
% ========================================
\begin{abstract}
    Generative modeling provides a powerful framework for learning data distributions. These models initially relied on probabilistic methods such as Gaussian Processes (GP) for uncertainty-aware predictions and shifted towards larger trainable models to learn more complex distributions. In this work, we introduce \textit{Random Process (RP) Flow}, a Flow Matching-based framework that represents the vector field as a neural implicit function. Unlike modern generative methods, our setting involves a single observed field, from which only sparse measurements are available. RP Flow uses Random Fourier Features to learn an implicit signal representation that can be queried at any arbitrary location from a limited set of observations, while encoding uncertainty through ensemble sampling. We propose constructing a Bayesian posterior by GP regression in the source space to generate high-quality samples. Our empirical results demonstrate that this framework generates realistic samples along with calibrated uncertainty estimates, even under challenging conditions such as high frequency, high sparsity, or high dimensionality. These findings position RP Flow as a milestone towards generative models for reconstruction tasks where data is scarce and uncertainty must remain traceable.
\end{abstract}

\section{Introduction}

\begin{figure}[!t]
    \centering
    \includegraphics[trim={5cm 0 8cm 0},clip, width=\columnwidth]{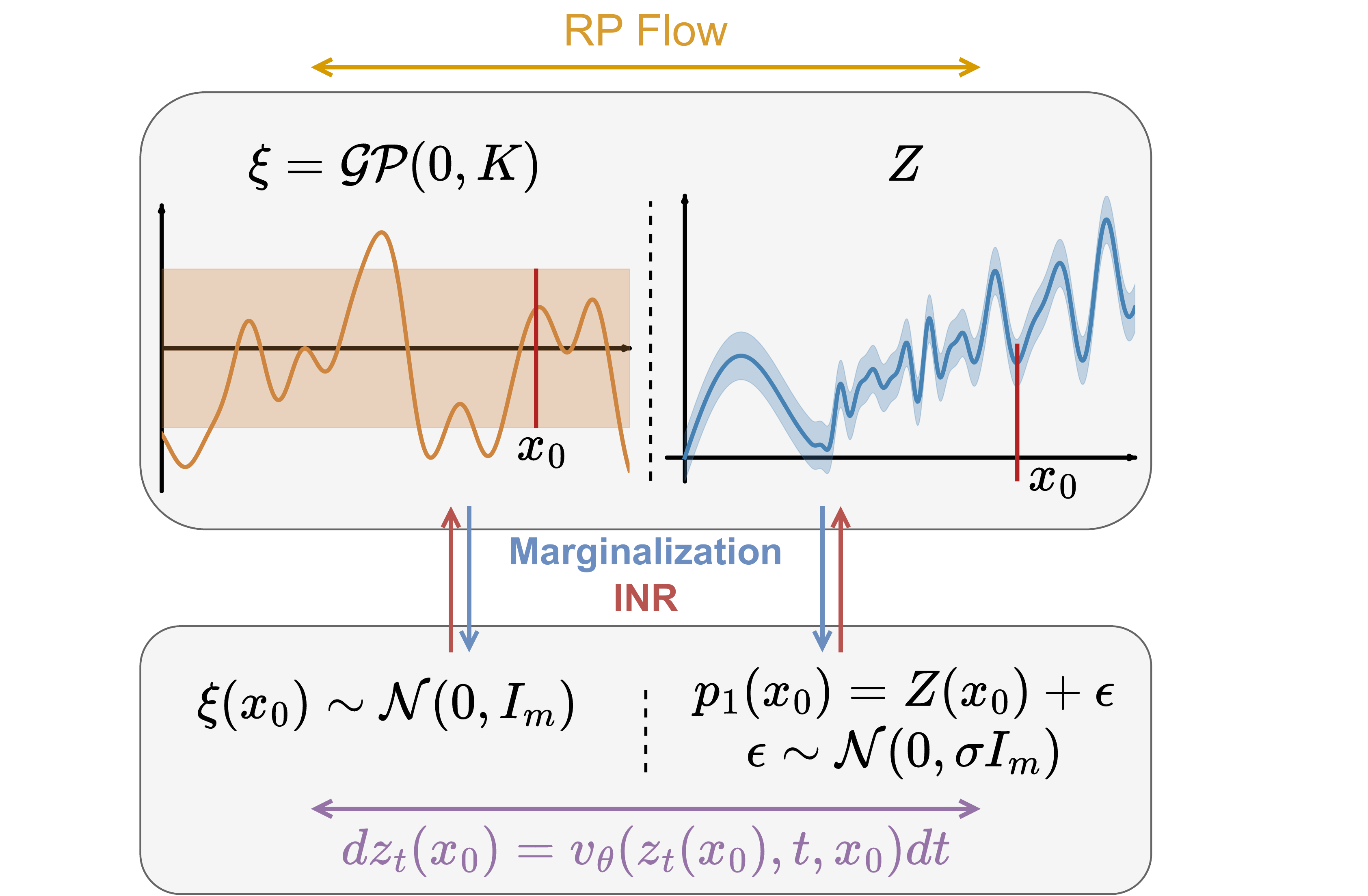}
    \caption{\textit{Framework of RP Flow.} The transport between the random processes is implicitly learned by marginalization of the source and target processes at each training position, using a position-conditioned Flow Matching model. Aleatoric uncertainty is modeled with random noise $\epsilon$ in the target process. Posterior processes are later defined using Gaussian Processes in the source space to enable conditioning on observations.}
    \label{fig:general_framework}
    \vspace{-.5cm}
\end{figure}

Generative models have significantly improved our ability to learn complex data distributions. In particular, diffusion \cite{sohl2015deep, ho2020denoising, songdenoising} and score-based models \cite{song2021score} have demonstrated remarkable success in high-dimensional domains such as image synthesis \cite{dhariwal2021diffusion, rombach2022high, lugmayr2022repaint}, video generation \cite{ho2022video, blattmann2023stable} or audio modeling \cite{chen2021wavegrad,kong2021diffwave} by learning an iterative mapping from data to a simple isotropic Gaussian distribution. Due to their simplicity and robustness \cite{chung2023diffusion}, these methods gained significant popularity for many real-world industrial application, including medical imaging \cite{moghadam2023morphology,lyu2022conversion}, weather forecasting \cite{price2025gencast,andrae2025continuous} and geosciences \cite{durall2023deep,federico2025latent}. 
Conditional Flow Matching \cite{lipman2023flow, ma2024sit} has more recently been introduced as a generalized framework of diffusion by constructing a probability path between a known source distribution and a target data distribution. This approach enables straightforward likelihood estimation, invertible mappings and generalized transformations between arbitrary distributions.

While powerful, these models usually treat entire signal observations as independent samples drawn from an underlying distribution. Although this formulation allows the models to efficiently learn the complex structure of the data, it also requires access to large datasets, which is not feasible for many real-world scenarios where data is sparse, incomplete or irregularly sampled.

In this work we propose Random Process (RP) Flow, a novel framework that models signals as realizations of random fields. This approach allows to model the marginal distribution of a single signal at observed locations using Flow Matching, while simultaneously capturing its structure through an implicit neural representation.
By expressing the learning task as a transductive self-supervised problem \cite{gui2024survey,tachella2026self}, we eliminate the need for large-scale training datasets, enabling interpolation and uncertainty quantification problems solvable in domains where comprehensive data acquisition is impractical or impossible.
Our method creates a bi-directional mapping between a predefined Gaussian source space and the target signal space, that can be conditioned on collocation points. By leveraging Gaussian Process Regression in conjunction with the invertibility of the Flow Matching ODE, our method defines a posterior source process that can be used to condition on observations when evaluating RP Flow at new locations, resulting in high-quality samples from the posterior target process. We first validate our method on transductive image regression tasks, where it demonstrates strong reconstruction accuracy and well-calibrated uncertainty estimates. We then evaluate its performance on a seismic data interpolation task, a challenging three-dimensional problem that can be formulated as a two-dimensional multivariate random field. In this high-dimensional and structured setting, RP Flow achieves state-of-the-art performance.

Our main contributions are summarized as follows:
\vspace{-.4cm}
\begin{itemize}
    \itemsep0em 
    \item We introduce RP Flow, a novel formulation of Flow Matching for spatial processes, using Implicit Neural Representations to model spatially-coupled probability paths. This approach enables learning from a single realization observed on sparse locations.
    \item We demonstrate that the learned transport map preserves both the regularity and the statistics of the source process.
    \item We propose a posterior‑sampling method based on Gaussian process regression in the source space, enabling conditioning on observed data while ensuring unbiased predictions.
    \item We empirically show that RP Flow produces high-quality interpolations and provides calibrated uncertainty estimates, even in high-frequency and high-dimensional multivariate settings.
\end{itemize}

\section{Background and Related Work}
Our approach builds upon recent advances in generative modeling, probabilistic inference, and neural representations. In this section, we outline the key concepts used in our framework: Conditional Flow Matching (CFM), Gaussian Process Regression (GPR), and Implicit Neural Representations (INRs).

\vspace{0.3cm}
\textbf{Conditional Flow Matching (CFM)} is a simulation-free framework for generative modeling that trains Continuous Normalizing Flows (CNF) \cite{chen2018neural} by regressing vector fields that transport a source distribution to a target distribution along a fully-described probability path \cite{lipman2023flow}. Let $p_0$ be a source distribution (e.g., $\mathcal{N}(0, I)$), and let $p_1$ be the target data distribution. CFM assumes a continuous probability path $(p_t)_{t \in [0,1]}$ interpolating between $p_0$ and $p_1$, governed by a velocity field $v_t$ via the continuity equation:
\begin{equation}
\partial_t p_t(z) + \nabla \cdot v_t(z_t) p_t(z_t) = 0
\label{eq:continuity}
\end{equation}

CFM learns a parameterized velocity field $v_\theta(z,t)$ such that the trajectories $z_t$ of particles initialized from $z_0 \sim p_0$ evolve according to the ordinary differential equation
\begin{equation}
dz_t = v_\theta(z_t, t) dt, \quad z_0 \sim p_0
\label{eq:flow_ode}
\end{equation}
ensuring that the evolving distribution of $z_t$ matches the path $(p_t)$ from $p_0$ to $p_1$.

During training, we consider a family of conditional distributions $p(z|c)$, where $c$ denotes a conditioning variable. The conditional probability path $p(z|t,c)$ is then governed by a conditional velocity field $v_t^{\text{cond}}(z,c)$. CFM learns $v_\theta(z,t)$ by minimizing the expected squared error between the model and the true conditional velocity field:

\begin{equation}
    \label{eq:flow_loss}
    \mathcal{L}_{\text{CFM}}(\theta) = \mathop{\mathlarger{\mathbb{E}}}_{\substack{
        t \sim \mathcal{U}[0,1],\\
        z \sim p(z|t,c)
    }}
    \left\| v_\theta(z, t) - v_t^{\text{cond}}(z, c) \right\|^2
\end{equation}

This regression-based objective avoids the need to evaluate densities or compute gradients of log-likelihoods, making it particularly suitable for high-dimensional or structured data. Moreover, CFM supports fast sampling by solving the ODE \eqref{eq:flow_ode} forward in time, as well as reverse sampling by solving it backward in time.

This framework positions CFM as a powerful alternative to score-based diffusion models \cite{ho2020denoising,song2021score}, particularly in scenarios where reverse sampling must remain doable or where the source distribution deviates from a standard Gaussian. 
Recent developments have extended CFM to incorporate optimal transport paths \cite{tong2024improving, pooladian2023multisample, kornilov2024optimal}, resulting in more efficient and stable training dynamics. Other approaches have investigated the use of alternative source distributions \cite{kollovieh2024flow} or leveraged the learned map as a prior for solving inverse problems \cite{zhang2024flow}.
However, samples from the data distribution are typically assumed to be independent, and using CFM for interpolation usually requires access to a large dataset of reconstructed samples. We aim to employ the CFM framework in a self-supervised manner using a single incomplete observation, removing the need for additional training data.

\vspace{0.3cm}
\textbf{Gaussian Process Regression (GPR)} is a class of nonparametric Bayesian models, particularly well-suited for spatial data due to their ability to encode prior beliefs about smoothness and correlation through a kernel function. A GP defines a distribution over functions such that any finite collection of function values follows a multivariate Gaussian distribution. The choice of kernel $K$ determines the properties of the sample paths, including regularity, stationarity and isotropy \cite{williams2006gaussian}.

In GPR, given a set of observations $\mathcal{D} = \{(x_i, z_i)\}_{i=1}^N$ where $z_i \sim \mathcal{N}(Z(x_i), \sigma^2I_m)$, the goal is to infer a posterior distribution over functions conditioned on the observations. Assuming a prior $Z \sim \mathcal{GP}(0, K)$, the posterior process $Z(x) | \mathcal{D}$ remains a Gaussian Process with mean and covariance functions given by:
\begin{equation}
    \begin{aligned}
        \mu(x) &= K(x, X) [K(X, X) + \sigma^2 I_m]^{-1} z\\
        \Sigma(x, x') &= K(x, x')\\
        &- K(x, X) [K(X, X) + \sigma^2 I_m]^{-1} K(X, x')
    \end{aligned}
    \label{eq:gp_post_mean}
\end{equation}

This posterior provides both predictions and uncertainty estimates at arbitrary test locations, making GPR a powerful tool for interpolation and probabilistic modeling \cite{capone2023sharp}.

However, despite their interpretability, GPR faces scalability challenges in high-dimensional settings due to the cubic cost of matrix inversion and the need to store dense covariance matrices. Moreover, for many applications, the data itself cannot be considered Gaussian ; and standard kernels such as the squared exponential assume stationarity and isotropy, which may not hold in complex spatial domains.

\vspace{0.3cm}
\textbf{Implicit Neural Representations (INRs)} 
\label{sec:inr} are models that encode signals such as images, shapes, or fields as continuous functions parameterized by neural networks. Instead of storing discrete samples on a grid, INRs learn a mapping from spatial coordinates to signal values. This paradigm enables high-resolution reconstruction, memory efficiency \cite{tancik2021learned,lee2021meta}, and spatial continuity, making INRs particularly suitable for modeling spatial processes and physical fields \cite{tancik2020fourier, sitzmann2020implicit}. Formally, an INR defines a function $f_\theta: \mathbb{R}^d \to \mathbb{R}^m$, where $f_\theta(x)$ predicts the signal value at location $x \in \mathbb{R}^d$. The parameters $\theta$ are learned from a set of observations $\{(x_i, z_i)\}_{i=1}^N$. This formulation allows querying the signal at arbitrary locations, which is essential for tasks such as interpolation, super-resolution, and inverse problems. Only limited research has explored the use of INRs within generative frameworks, for domain-agnostic architectures \cite{wang2025inrflow} or surface distribution learning \cite{ding2023pdf}.

To improve the expressivity of INRs, especially for high-frequency signals, it is common to augment the input coordinates with Random Fourier Features (RFFs) \cite{rahimi2007random}. This approach involves projecting the input $x$ into a higher-dimensional space that approximates a Gaussian kernel function with variance $\frac{1}{\sigma_{RFF}^2}$.  This choice is motivated by Bochner's theorem, which states that any shift-invariant kernel $K(x, x') = K(x - x')$ on $\mathbb{R}^d$ can be expressed as the Fourier transform of a non-negative measure \cite{bochner2005harmonic, rudin2017fourier}. In particular, this implies that a Gaussian kernel $K$ can be approximated via random features sampled from its spectral density\footnote{The usual $2\pi$ factor is omitted to maintain consistency with the Gaussian process literature.}:
\begin{equation}
    \label{eq:RFF_embed}
\gamma(x) = [\cos(Bx), \sin(Bx)]^T
\end{equation}
where $B \in \mathbb{R}^{m \times d}\sim\mathcal{N}(0, \sigma_{RFF}^2I_m)$. This transformation enables the model to better capture fine-grained variations and non-local dependencies in the signal. Recent works have demonstrated that RFFs significantly enhance the ability of INRs to represent complex spatial structures and generalize from sparse observations \cite{tancik2020fourier}.

\section{RP Flow}

Let $\mathcal{X} \subset \mathbb{R}^d$ denote a spatial domain (generally $d \in \{1, 2, 3\}$) and let $(\Omega, \mathcal{F}, p)$ be a probability space. Our objective is to construct a function that transports realizations of a source random process defined on $(\mathcal{X}, \Omega)$ to realizations of a target random process, given limited observations of the latter. The general framework of our proposed method, \textit{RP Flow}, is presented in Figure \ref{fig:general_framework}.
We employ the CFM framework to learn a transport map between the distributions of the source and target processes at each spatial position $x \in \mathcal{X}$. The spatial relationships between positions are represented through an INR. Finally, we use the invertibility of the learned flow to define posterior processes, enabling conditioning on the available observations. Under suitable assumptions on the learned flow, we demonstrate that the transport function preserves both the regularity and statistical properties of the processes.

\subsection{Transport of random processes}

Let $Z: \mathcal{X}\times\Omega \rightarrow \mathbb{R}^m$ be a random process, which we refer to as the \textit{target process}. With a slight abuse of notation, for all $x\in\mathcal{X}$, we denote $Z(x)$ the marginal distribution of $Z$ at position $x$ and for all $\omega\in\Omega$, we denote $Z(\cdot, \omega) = (x\mapsto Z(x, \omega))$ a realization of the random process.  In the self-supervised setting that we consider, we have access to a single, possibly noisy, realization $Z(\cdot, \omega_0)$ on a finite set of positions $\bar{X} = \{x_i\}_{i=1}^N\subset \mathcal{X}$. 

Let $\xi: \mathcal{X}\times\Omega \rightarrow \mathbb{R}^m$ be another random process that we can fully characterize and ideally sample from easily. We refer to it as the \textit{source process}. In this work, we choose to model $\xi$ as a zero-mean Gaussian process with Gaussian covariance $K$, so that the marginal distribution at any given position $x$ follows the standard Gaussian distribution. These choices enable straightforward training procedure using CFM in Section \ref{sec:rpf_cfm} and aligns naturally with RFF embeddings discussed in Section \ref{sec:rpf_rff}:

\begin{align}
    \xi \sim \mathcal{GP}(0, K), \quad K(x, x')=e^{\frac{-||x-x'||}{2\sigma_\xi^2}},\; \forall x, x'\in\mathcal{X}
\end{align}

We aim to learn a transport map that pushes samples of $\xi$ at any given position $x$ to  samples of $Z$ at this position. Formally, we seek a map $T_\theta: \mathbb{R}^m\times\mathcal{X}\mapsto\mathbb{R}^m$, parameterized by a neural network $\theta$, such that for each position $x\in\mathcal{X}$ the prediction of the target distribution at $x$ is obtained by the pushforward of $\xi(x)$: $\hat{Z}(x) \triangleq T_\theta(\cdot, x)\sharp \xi(x)$.

\subsection{Flow-based parametrization of $T_\theta$}
\label{sec:rpf_cfm}

The function $T_\theta$ is learned using the CFM framework, with the conditional probability path defined as a linear interpolation between samples \cite{albergo2022building, liu2022flow}.
Specifically, we consider a space-time dependent process $z_t(x)$, defined for $t \in [0,1]$ and $x \in \mathcal{X}$, such that
$z_0(x) \sim \xi(x)$ and $z_1(x) \sim Z(x)$. During training, we randomly sample $x\in\bar{X}, t \in [0,1]$ and construct the interpolated trajectory as
\begin{equation}
    z_t(x) = tz_1(x) + (1-t)z_0(x)
\end{equation}
with the corresponding velocity field
\begin{equation}
    v_t^{\text{cond}}(z_t(x), x) = z_1(x) - z_0(x)
\end{equation}

We train the model following stochastic optimization and sample $z_0(x)\sim\xi(x)$ independently for each training sample $z_1(x) \sim Z(x)$. In that specific case, $\mathcal{GP}(0, K)$ reduces to its marginal distribution at position $x$, chosen to be $\mathcal{N}(0,I_m)$, which greatly simplifies the training process.
In the case considered in this work, where only a single observation of the target process $Z(\cdot, \omega_0)$ is available, we sample $z_1(x) \text{ from } Z(x, \omega_0) + \epsilon, \text{ where } \epsilon \sim \mathcal{N}\left(0, \sigma^2 I_m\right)$ to model noisy observations. Similarly to GPR, this additive noise term acts as a regularization parameter which leads to a more reliable calibration of the model’s uncertainty estimates as shown in an ablation study in Appendix \ref{ap:img_reg}.
This setup enables training with the standard CFM loss described in Equation \ref{eq:flow_loss}, allowing a single model to learn the marginal distributions of a spatially dependent process across different locations.
Although, in principle, $T_\theta$ could be learned using a broad class of Neural ODE or CNF frameworks, the Flow Matching formulation aligns particularly well with our setting and leads to a simple and stable training procedure.

\subsection{Implicit source representation}
\label{sec:rpf_rff}

The source process $\xi$ is represented implicitly via a RFF embedding of the spatial position $x \in \mathcal{X}$. Given that the source process $\xi$ is a stationary Gaussian Process with kernel $K$, we approximate $K$ using the finite-dimensional RFF embedding $\gamma(x)$ from Equation \ref{eq:RFF_embed}, with frequencies sampled from the spectral density of $K$. The choice of a Gaussian covariance with a fixed lengthscale $\sigma_\xi$ leads to a simple procedure to draw the RFFs from $\mathcal{N}(0, \sigma_{RFF}^2 = \frac{1}{\sigma_\xi^2})$. In Appendix \ref{ap:impl_cvg}, we illustrate that this implicit representation converges at $t=0$ to samples from $\xi$. 

By decoupling the spatial dependence from the marginal distribution of the processes, RFFs enable to exploit the simple structure of $\xi(x)$ while implicitly embedding spatial covariance. This approach offers flexibility, as it imposes no spatial regularity requirements on the training points and allows sampling from the GP exclusively at generation time, where a realization of $\xi\sim\mathcal{GP}(0, K)$ is drawn and transported from $t=0$ to $t=1$ by integration of the Flow Matching ODE governed by the neural vector field $v_\theta$.

\begin{figure*}[htbp]
    \centering
    \includegraphics[width=2\columnwidth]{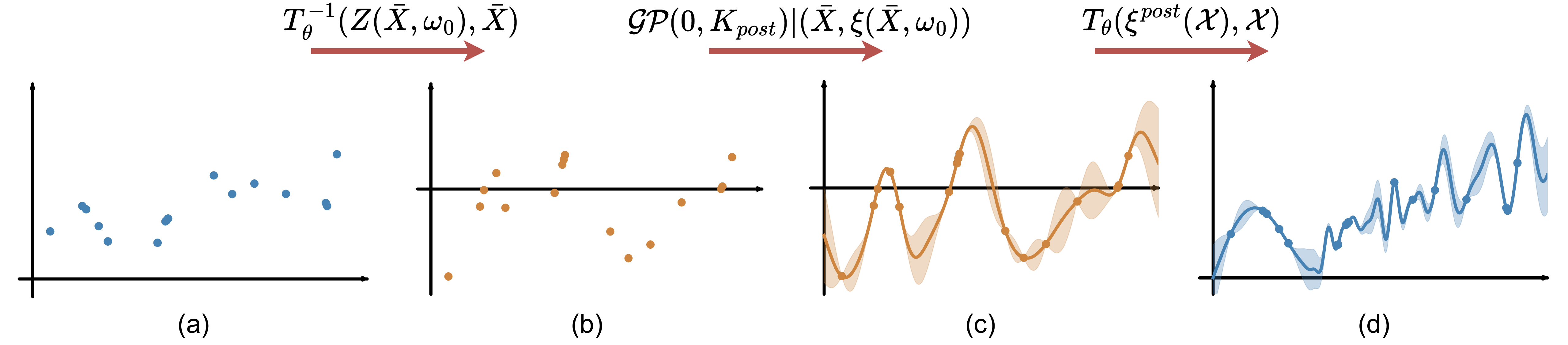}
    \caption{Posterior sampling pipeline of RP Flow. The target observations are first projected to the source domain via reverse ODE integration \textit{((a)$\rightarrow$(b))}. The created Gaussian source realizations are then interpolated using Gaussian Process regression \textit{((b)$\rightarrow$(c))}. The realizations from this posterior GP are finally mapped back to the target space through forward ODE integration \textit{((c)$\rightarrow$(d))}.}
    \label{fig:posterior_process}
\end{figure*}

\subsection{Posterior source and target processes}
\label{method:posterior_process}

\begin{algorithm}[tb]
   \caption{Posterior sampling from $\hat{Z}^\text{post}$}
   \label{alg:posterior_sampling}
\begin{algorithmic}
    \STATE {\textbf{Inputs:} $\bar{X}, Z(\bar{X}, \omega_0), X^\text{test}, v_\theta, \omega$}
    \vspace{.1cm}
    \STATE $\xi(\bar{X})\leftarrow\Phi_{t_0=1, t_1=0}\left(v_\theta, \bar{X}, Z(\bar{X}, \omega_0)\right)$
    \STATE \hspace{.2cm}$\triangleright$ Solving the ODE in reverse-time to get Gaussian source observations.
    \STATE ${\xi^{\text{post}}}\leftarrow\mathcal{GP}\left(0, \Sigma^\text{post}\right) | \left(\bar{X}, \xi(\bar{X})\right)$ 
    \STATE \hspace{.2cm}$\triangleright$ Defining the posterior source Gaussian process.
    \STATE ${\xi^{\text{post}}}(X^\text{test}, \omega) \leftarrow \text{Sample}({\xi^{\text{post}}}, X^\text{test}, \omega)$
    \STATE \hspace{.2cm}$\triangleright$ Sampling from the posterior at test positions.
    \STATE ${\hat{Z}^{\text{post}}}(X^\text{test}, \omega)\leftarrow\Phi_{t_0=0, t_1=1}\left(v_\theta, X^\text{test}, \xi^{\text{post}}(\mathcal{X}, \omega)\right)$
    \STATE \hspace{.2cm}$\triangleright$ Solving the ODE forward in time from Gaussian posterior samples.
    \vspace{.1cm}
    \STATE \textbf{Return:} ${\hat{Z}^{\text{post}}}(X^\text{test}, \omega)$
\end{algorithmic}
\end{algorithm}

The training procedure described in the previous sections constitutes a transport $T_\theta$ between prior processes $\xi$ and $\hat{Z}$, containing the assumptions on spatial covariance, regularity, and uncertainty over the measurements. To maximize reconstruction quality, we then want to predict conditionally to the observations $(\bar{X}, Z(\bar{X}))$ to enable access to an unbiased Bayesian posterior.
To this purpose, we propose to use the Gaussian properties of the source space as a proxy for the creation of a posterior target. 
For all $x\in\mathcal{X}, T_\theta(\cdot, x)$ is a diffeomorphism with inverse $T_\theta^{-1}(\cdot, x)$ defined as the integration of the ODE in reverse time. We can then define source observations $\xi(\bar{X}, \omega_0) \triangleq T_\theta^{-1}(Z(\bar{X}, \omega_0), \bar{X})$ as the inverse transport of the target observations $Z(\bar{X}, \omega_0)$. The source space is by construction a Gaussian Process since every finite collection of observation is Gaussian, making it a perfect space for Gaussian Process Regression. We construct the source posterior as a posterior GP $\xi^{\text{post}}$, conditioned on the observations $\xi(\bar{X}, \omega_0)$. We discuss the selection of the kernel used for the regression in Appendix \ref{ap:img_reg}.
Finally, a realization of the posterior target process is the transport by $T_\theta$ of a realization of the posterior source process: $\hat{Z}^{\text{post}}(\cdot, \omega) \triangleq T_\theta({\xi^{\text{post}}}(\cdot, \omega), \cdot)$.
Sampling from the posterior Gaussian process yields an unbiased target posterior process:
\begin{proposition}
    The target posterior process ${\hat{Z}^{\text{post}}}$ is unbiased and verifies $\forall x\in\bar{X}, \omega\in\Omega,$
    \begin{equation}
        \hat{Z}^{\text{post}}(x, \omega) = Z(x, \omega_0)
    \end{equation}
    \label{prop:unbiased_post}
\end{proposition}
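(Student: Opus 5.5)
The approach is to trace a single realization $\omega$ through the three maps in Algorithm \ref{alg:posterior_sampling}. The key facts are that $T_\theta(\cdot,x)$ and $T_\theta^{-1}(\cdot,x)$ are mutually inverse for each fixed $x$, and that a noiseless Gaussian Process posterior interpolates its conditioning data. Throughout I take the conditioning in the source space to be exact. This means the posterior GP $\xi^{\text{post}}$ is built from Equation \eqref{eq:gp_post_mean} with the regression noise set to zero, i.e.\ $\sigma^2=0$ in the source-space regression. The training noise $\epsilon$ only shapes $v_\theta$ and does not enter here.

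\textbf{Step 1: the source posterior reproduces the source observations.} Fix $x=x_i\in\bar{X}$. Since $K(x_i,\bar X)$ is the $i$-th row of $K(\bar X,\bar X)$, we get $K(x_i,\bar X)K(\bar X,\bar X)^{-1}=e_i^T$. Hence $\mu(x_i)=\xi(x_i,\omega_0)$. For the variance,
\begin{equation*}
\Sigma(x_i,x_i)=K(x_i,x_i)-e_i^TK(\bar X,x_i)=0 .
\end{equation*}
So the posterior marginal at $x_i$ is a Dirac mass, and for every $\omega$ we have $\xi^{\text{post}}(x_i,\omega)=\xi(x_i,\omega_0)$. For a joint sample, the same identity follows from the conditional covariance block on $\bar X$ being $0$.

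\textbf{Step 2: transport back.} By definition,
\begin{equation*}
\hat{Z}^{\text{post}}(x_i,\omega)=T_\theta(\xi^{\text{post}}(x_i,\omega),x_i)=T_\theta(T_\theta^{-1}(Z(x_i,\omega_0),x_i),x_i)=Z(x_i,\omega_0).
\end{equation*}
The last equality holds because the flow of the ODE \eqref{eq:flow_ode} with field $v_\theta(\cdot,\cdot,x)$, integrated from $1$ to $0$ and then from $0$ to $1$, is the identity. This relies on existence and uniqueness of solutions, which holds if $v_\theta$ is Lipschitz in $z$ (true for standard smooth activations), giving the diffeomorphism property the paper already asserts.

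\textbf{Step 3: unbiasedness.} Because the identity of Step 2 holds for every $\omega$, it holds in expectation: $\mathbb{E}_\omega[\hat Z^{\text{post}}(x,\omega)]=Z(x,\omega_0)$ on $\bar X$, with zero variance. So the posterior is unbiased at the observed locations, which is the sense in which the statement uses the word.

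\textbf{Main obstacle.} The delicate point is Step 1. It requires $K(\bar X,\bar X)$ to be invertible, which holds for the Gaussian kernel at distinct points, and it requires the source-space regression to be noise-free. If a nugget $\sigma^2>0$ were used, the posterior mean would only approximate the data and the equality would fail. I would therefore state this assumption explicitly. If needed for numerical stability, I would instead add the jitter only in the training noise and not in the source regression.
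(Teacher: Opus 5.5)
Your proposal is correct and follows the paper's argument: the noiseless source-space GP posterior has zero variance at the conditioning points, so $\xi^{\text{post}}(x,\omega)=T_\theta^{-1}(Z(x,\omega_0),x)$ for $x\in\bar X$, and applying $T_\theta(\cdot,x)$ recovers $Z(x,\omega_0)$ for every $\omega$. You spell out steps the paper only asserts: the $e_i^T$ computation showing exact interpolation, the invertibility of $K(\bar X,\bar X)$, the need for zero nugget in the source regression, and the well-posedness of the ODE that makes $T_\theta(\cdot,x)$ invertible.
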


In Algorithm \ref{alg:posterior_sampling}, we show the detailed procedure for posterior sampling and in Figure \ref{fig:posterior_process}, we display the main steps in the construction of the posterior processes. The source process has standard Gaussian marginals at each position. In the multivariate case, this property can significantly reduce the computational cost of computing the posterior source process compared to applying GPR directly in the target space. Specifically, due to the independence between source variables, it is possible to compute a posterior GP for each variable individually. This reduces the spatial complexity of the posterior GP from $\mathcal{O}(n^3)$ to $\mathcal{O}(n)$, and the temporal complexity from $\mathcal{O}(n^2)$ to $\mathcal{O}(n)$, for a process with $n$ variables, while preserving the original Gaussian process complexity with respect to the number of evaluated locations. An analysis of the method complexity is provided in Appendix \ref{ap:complexity}.

\subsection{Properties}
\label{sec:properties}

In this section, we analyze key properties of the proposed method to ensure its suitability for generative interpolation in a Neural Implicit framework.
Specifically, we aim to embed some desired properties into the source process to be transposed to the target. We suppose that $\forall t\in[0,1], \forall x\in\mathcal{X}$, both the true conditional velocity $v_t^{cond}(\cdot, x)$ and the learned one $v_\theta(\cdot, x, t)$ are $L_t$-Lipschitz continuous in $t$ for a certain constant $L_t$. Previous work from \citet{benton2024error} shows that under bounded source and target distributions, $v_t^{cond}(\cdot, x)$ is Lipschitz, and in practice $v_\theta(\cdot, x, t)$ is parametrized by a neural network which is a composed of Lipschitz operators and is therefore Lipschitz. The various proofs for this section are developed in Appendix \ref{ap:proofs}.

\begin{lemma}
    \label{lemma:lipschitz}{\cite{lions2024transport}} Let $x\in\mathcal{X}, z_0(x)\sim\xi(x), z_1(x)\sim Z(x)$. Let $T$ be the function that sends $z_0(x)$ to $z_1(x)$ following the ODE $dz_t(x) = v_t(z_t(x), x)dt$. Suppose that $\forall t\in[0,1], v_t$ is $L_t$-Lipschitz continuous in $z_t(x)$. Then $T$ is Lipschitz continuous in $z_0(x)$ with constant
    \begin{equation}
        L \leq \exp\left( \int_0^1 L_t \, dt \right)
    \end{equation}
\end{lemma}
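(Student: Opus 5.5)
The plan is a standard Grönwall argument applied to the difference of two trajectories of the same ODE. Fix $x\in\mathcal{X}$ and take two initial conditions $a,b\in\mathbb{R}^m$. Let $z^a_t$ and $z^b_t$ solve $dz_t = v_t(z_t,x)\,dt$ with $z^a_0=a$ and $z^b_0=b$. By definition $T(a)=z^a_1$ and $T(b)=z^b_1$. I will assume the ODE has unique global solutions on $[0,1]$. This follows from the Lipschitz hypothesis via Cauchy--Lipschitz, provided $t\mapsto v_t(z,x)$ is measurable and locally integrable and $t\mapsto L_t$ is integrable, so that $\int_0^1 L_t\,dt<\infty$. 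I will state these as standing assumptions.

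First I write the difference in integral form:
\begin{equation}
z^a_t - z^b_t = a-b + \int_0^t \bigl(v_s(z^a_s,x)-v_s(z^b_s,x)\bigr)\,ds .
\end{equation}
Set $u(t)=\|z^a_t-z^b_t\|$. The triangle inequality together with the $L_s$-Lipschitz property in the state variable gives
\begin{equation}
u(t)\le \|a-b\| + \int_0^t L_s\,u(s)\,ds .
\end{equation}

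The key step is to apply the integral form of Grönwall's lemma with a nonnegative integrable weight $L_s$. This yields
\begin{equation}
u(t)\le \|a-b\|\exp\Bigl(\int_0^t L_s\,ds\Bigr).
\end{equation}
Evaluating at $t=1$ gives $\|T(a)-T(b)\|\le \exp\bigl(\int_0^1 L_t\,dt\bigr)\|a-b\|$. Hence $T(\cdot)$, that is $T(\cdot,x)$, is Lipschitz with constant $L\le \exp(\int_0^1 L_t\,dt)$, as claimed. The randomness in $z_0(x)\sim\xi(x)$ plays no role: the bound is deterministic and holds pathwise for every pair of initial points. The same argument run backward in time gives the identical bound for $T^{-1}$.

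The main obstacle is not the estimate itself but its hypotheses. The paper's preamble says ``Lipschitz in $t$'', but the lemma needs Lipschitz continuity in the state variable $z$, uniformly in $x$ at each $t$, with $L_t$ integrable. I would make this reading explicit. If one prefers to avoid the integral Grönwall lemma, an alternative route is to differentiate $\tfrac12 u(t)^2$. This gives $\tfrac{d}{dt}\tfrac12u^2\le L_t u^2$, and the differential Grönwall inequality then yields the same exponential bound, with care taken where $u=0$.
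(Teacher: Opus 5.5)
Your proof is correct and follows essentially the same route as the paper: both bound the distance $\|z^a_t-z^b_t\|$ between two trajectories using the $L_t$-Lipschitz property of $v_t$ in the state variable, then apply Grönwall's inequality and evaluate at $t=1$. The only difference is that you use the integral form of Grönwall, whereas the paper differentiates $\delta_t=\|z^i_t-z^j_t\|$ directly, so your version also avoids the non-differentiability of the norm at points where $\delta_t=0$.
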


This lemma \ref{lemma:lipschitz} guarantees that it suffices for $v_\theta$ to be Lipschitz continuous, rather than requiring the same property for $T_\theta$.

\begin{theorem}
    \label{th:regularity}
    If $\forall x\in\mathcal{X}, \forall t\in[0,1],$ $v_\theta$ is continuous in $x$ and $t$, then the following regularity properties of the stochastic processes are preserved by the transport map $T_\theta$:
    \begin{enumerate}
        \item If $\forall x\in\mathcal{X}, \forall t\in[0,1],$ $v_\theta$ is continuous in $z_t(x)$, then $\xi$ has almost surely continuous sample paths if and only if $\hat{Z}$ also has almost surely continuous sample paths.
        \item If $\forall x\in\mathcal{X}, \forall t\in[0,1],$ $v_\theta$ is Lipschitz continuous in $z_t(x)$, then $\xi$ is mean-square continuous if and only if $\hat{Z}$ is mean-square continuous.
        \item If $\forall t\in[0,1], v_\theta(\cdot, \cdot, t) \in C^k(\mathbb{R}^m\times\mathcal{X}, \mathbb{R}^m)$, then $\xi$ has sample paths in $C^k(\mathcal{X}, \mathbb{R}^m)$ if and only if $\hat{Z}$ has sample paths in $C^k(\mathcal{X}, \mathbb{R}^m)$.
    \end{enumerate}
\end{theorem}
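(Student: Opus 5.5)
The plan is to express every sample path of $\hat{Z}$ as a composition with the flow map, $\hat{Z}(x,\omega) = T_\theta(\xi(x,\omega), x)$, where $T_\theta(z,x) = \Phi_{0\to1}(z;x)$ is the time-one map of the ODE $\dot z_t = v_\theta(z_t, x, t)$. Its inverse $T_\theta^{-1}(z,x) = \Phi_{1\to0}(z;x)$ is the time-one map of the reversed field $-v_\theta(\cdot, x, 1-t)$. The reversed field satisfies exactly the same hypotheses as $v_\theta$. So each equivalence reduces to one implication, ``$\xi$ regular $\Rightarrow \hat{Z}$ regular''. The converse then follows by applying the same argument to $T_\theta^{-1}$, using $\xi(x,\omega) = T_\theta^{-1}(\hat{Z}(x,\omega), x)$. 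The core step is to establish the regularity of $(z,x)\mapsto T_\theta(z,x)$ from standard ODE theory with $x$ treated as a parameter.

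\textbf{Item 1.} Here $v_\theta$ is jointly continuous in $(z,x,t)$. To get uniqueness of solutions I will use the Lipschitz assumption stated at the beginning of Section \ref{sec:properties}. Continuous dependence of ODE solutions on initial data and parameters then gives that $T_\theta$ is jointly continuous. For each $\omega$ with $x\mapsto\xi(x,\omega)$ continuous, the map $x\mapsto(\xi(x,\omega),x)$ is continuous, so its composition with $T_\theta$ is continuous. The full-probability event for $\xi$ is therefore contained in the corresponding event for $\hat{Z}$, which gives the almost-sure statement.

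\textbf{Item 3.} If $v_\theta(\cdot,\cdot,t)\in C^k$, I will invoke the differentiable-dependence theorem, treating $x$ as a parameter. It gives $T_\theta\in C^k(\mathbb{R}^m\times\mathcal{X})$; a little care is needed about uniformity in $t$, and I will assume the derivatives of $v_\theta$ are continuous in $t$. The chain rule then makes $x\mapsto T_\theta(\xi(x,\omega),x)$ a $C^k$ function whenever $\xi(\cdot,\omega)$ is $C^k$. The converse again uses $T_\theta^{-1}$.

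\textbf{Item 2} is the main obstacle, because mean-square continuity needs a quantitative bound rather than just continuity. I will split
\[
\|\hat{Z}(x')-\hat{Z}(x)\| \le \|T_\theta(\xi(x'),x')-T_\theta(\xi(x),x')\| + \|T_\theta(\xi(x),x')-T_\theta(\xi(x),x)\|.
\]
By Lemma \ref{lemma:lipschitz}, the first term is at most $L\|\xi(x')-\xi(x)\|$ with $L\le\exp(\int_0^1L_t\,dt)$, and its second moment tends to $0$ by mean-square continuity of $\xi$. For the second term I will use a Gr\"onwall argument on the difference of the two trajectories started at the same point. This bounds it by $L\int_0^1\|v_\theta(z_s,x',s)-v_\theta(z_s,x,s)\|\,ds$, which tends to $0$ pointwise as $x'\to x$. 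To pass to the expectation I need domination. The Lipschitz property gives linear growth, $\|T_\theta(z,x)\|\le C(1+\|z\|)$, with $C$ uniform locally in $x$ by continuity, and $\xi(x)$ is Gaussian with finite second moment. Dominated convergence then handles the second term. The delicate point is this uniform linear-growth constant; if needed I will add the mild assumption that $\sup_{x,t}\|v_\theta(0,x,t)\|<\infty$ locally. The reverse implication is the same argument with $T_\theta^{-1}$. It requires $\hat{Z}$ to have finite second moments, which follows from the same linear-growth bound.
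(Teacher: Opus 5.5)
Your argument is correct and follows the paper's overall structure. Both prove the forward implication by composing $\xi(\cdot,\omega)$ with $(z,x)\mapsto T_\theta(z,x)$, and obtain the converse from the reversed flow $T_\theta^{-1}$. Both use the chain rule for $C^k$ and Lemma~\ref{lemma:lipschitz} for the Lipschitz constant.

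You genuinely depart from the paper in Item 2. The paper bounds $\|\hat{Z}(x)-\hat{Z}(x')\|$ directly by $L\|\xi(x)-\xi(x')\|$, writing $T_\theta(\xi(x))$ and $T_\theta(\xi(x'))$ as if the same map acted at both positions. But $\hat{Z}(x')=T_\theta(\xi(x'),x')$, and a Lipschitz bound in $z$ at a fixed position says nothing about the change from $T_\theta(\cdot,x)$ to $T_\theta(\cdot,x')$. Your triangle-inequality split isolates exactly this missing parameter-variation term. The Gr\"onwall estimate $L\int_0^1\|v_\theta(z_s,x',s)-v_\theta(z_s,x,s)\|\,ds$, combined with dominated convergence via linear growth and finite second moments, is what actually makes the step valid. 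The first term relies on $L_t$ not depending on $x$, which the standing assumption of Section~\ref{sec:properties} provides.

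Your extra hypothesis that $\sup_{x,t}\|v_\theta(0,x,t)\|<\infty$ locally is not needed. Local boundedness of $x\mapsto T_\theta(0,x)$ already follows from the joint continuity you established in Item 1.

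In Items 1 and 3 you also make explicit two points the paper asserts without justification. First, the time-one map is only well defined under uniqueness, which requires the Lipschitz assumption rather than mere continuity in $z$. Second, $C^k$ dependence of $T_\theta$ on $(z,x)$ requires some regularity in $t$ of the derivatives of $v_\theta$.
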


For specific problems, prior knowledge about the regularity of the target field may be available. Theorem \ref{th:regularity} guarantees that the regularity of the source process is transferred to the target process, which can lead to design choices aimed at improving predictive performance. These equivalences allow RP Flow to explicitly model discontinuities within the source space. This indicates a substantial improvement over classical INRs which are typically bound to model continuous signals. In Appendix \ref{ap:discontinuities}, we empirically show this implication of the theorem on a 1D discontinuous signal.
\begin{theorem}
    \label{th:moments}
    Let $k \geq 1$. If $\forall \omega\in\Omega, \forall x\in\mathcal{X},$ $T_\theta$ is $L$-Lipschitz continuous in $\xi(x, \omega)$, and if $\xi(x)$ has finite centered moments up to order $k$, then $\forall x\in\mathcal{X}$:
    \begin{equation}
        \begin{aligned}[b]
            \left\| \mathbb{E}[T_\theta(\xi(x), x)] - T_\theta(\mathbb{E}[\xi(x)], x) \right\| \leq \qquad\qquad&\\
            \qquad\qquad L \cdot \left( \mathbb{E}\left[\|\xi(x) - \mathbb{E}[\xi(x)]\|^k\right] \right)^{1/k}
        \end{aligned}
    \end{equation}
    and
    \begin{equation}
        \begin{aligned}[b]
            \mathbb{E}\left[\|T_\theta(\xi(x), x) - \mathbb{E}[T_\theta(\xi(x), x)]\|^k\right] \leq \qquad\qquad&\\
             \qquad\qquad (2L)^k \cdot \mathbb{E}\left[\|\xi(x) - \mathbb{E}[\xi(x)]\|^k\right]
        \end{aligned}
    \end{equation}
\end{theorem}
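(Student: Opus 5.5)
Both inequalities follow from the Lipschitz property of $T_\theta(\cdot,x)$ combined with Jensen's inequality; the flow structure is never used beyond the Lipschitz constant, which Lemma \ref{lemma:lipschitz} supplies. Fix $x\in\mathcal{X}$ and write $\xi=\xi(x)$, $\mu=\mathbb{E}[\xi]$ and $T=T_\theta(\cdot,x)$. First, $\mathbb{E}[T(\xi)]$ is well defined. Indeed, $\|T(\xi)\|\le \|T(\mu)\|+L\|\xi-\mu\|$, and the right-hand side is integrable because the finite $k$-th centered moment with $k\ge1$ gives a finite first moment, by Jensen or Lyapunov.

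\textbf{First inequality.} Write $\mathbb{E}[T(\xi)]-T(\mu)=\mathbb{E}[T(\xi)-T(\mu)]$. Jensen's inequality for the convex norm gives
\[
\|\mathbb{E}[T(\xi)-T(\mu)]\|\le\mathbb{E}\|T(\xi)-T(\mu)\|\le L\,\mathbb{E}\|\xi-\mu\|.
\]
Jensen applied to the convex map $u\mapsto u^k$ on $[0,\infty)$ (equivalently, monotonicity of $L^p$ norms on a probability space) then gives $\mathbb{E}\|\xi-\mu\|\le(\mathbb{E}\|\xi-\mu\|^k)^{1/k}$.

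\textbf{Second inequality.} Use the triangle inequality through the point $T(\mu)$:
\[
\|T(\xi)-\mathbb{E}T(\xi)\|\le\|T(\xi)-T(\mu)\|+\|T(\mu)-\mathbb{E}T(\xi)\|\le L\|\xi-\mu\|+L\,(\mathbb{E}\|\xi-\mu\|^k)^{1/k},
\]
where the second term is bounded by the first inequality. Raise this to the $k$-th power and apply the convexity bound $(a+b)^k\le 2^{k-1}(a^k+b^k)$. Taking expectations, the first term contributes $L^k\,\mathbb{E}\|\xi-\mu\|^k$ and the second contributes exactly the same quantity, since it is deterministic. This gives the total bound
\[
2^{k-1}\cdot 2L^k\,\mathbb{E}\|\xi-\mu\|^k=(2L)^k\,\mathbb{E}\|\xi-\mu\|^k,
\]
which is the stated constant.

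Alternatively, one can take $L^k$ norms instead and use Minkowski's inequality: $\|T(\xi)-\mathbb{E}T(\xi)\|_{L^k}\le L\|\xi-\mu\|_{L^k}+L\|\xi-\mu\|_{L^k}$. Raising to the $k$-th power again yields $(2L)^k$.

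There is no real obstacle. The only care needed is the integrability remark above, and checking that the elementary convexity constant combines to give exactly $(2L)^k$.
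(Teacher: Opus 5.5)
Your proof is correct and follows essentially the same route as the paper: Jensen plus the Lipschitz bound plus $\mathbb{E}\|\xi(x)-\mathbb{E}[\xi(x)]\|\le(\mathbb{E}\|\xi(x)-\mathbb{E}[\xi(x)]\|^k)^{1/k}$ for the first inequality, and for the second the triangle inequality through $T_\theta(\mathbb{E}[\xi(x)],x)$ combined with $(a+b)^k\le 2^{k-1}(a^k+b^k)$. The only cosmetic difference is in the deterministic term: you bound it by reusing the first inequality (or via Minkowski), while the paper applies Jensen to get $\mathbb{E}\|T_\theta(\xi(x),x)-T_\theta(\mathbb{E}[\xi(x)],x)\|^k$ and then the Lipschitz bound, and both give the same constant $(2L)^k$.
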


RP Flow belongs to the class of ensemble methods, as we only have access to samples from $\hat{Z}$ and $\hat{Z}^\text{post}$. A limitation of our approach is the lack of direct access to the statistics of the constructed target process. However, Theorem \ref{th:moments} guarantees that the mean and higher-order moments of the target process at each location remain close to those of the source process. This result leads to convergence rates for Monte Carlo approximations of these statistics, as discussed in Appendix \ref{ap:montecarlo}. In particular, for the posterior target process, the theorem induces a reduced variance in regions closer to the observations. However, without control on the Lipschitz constant, these bounds may still be large.

\section{Experiments}

We evaluate RP Flow on two challenging spatial interpolation problems: image regression and seismic data interpolation. The first experiment demonstrates the capability of the proposed method to generate high-quality fields as well as predict calibrated uncertainties, and the second reveals the ability to generate high-quality samples in high dimension when data sparsity is greater. We evaluate reconstruction quality using the Peak Signal-to-Noise Ratio (PSNR) and the Structural Similarity Index Metric (SSIM), sample quality using the Wasserstein distance ($\mathcal{W}_1$) and calibration with the Probabilistic Calibration Error (PCE$_1$):

\begin{equation}
    PCE_p(\hat{Z}) = \int_0^1\left|
        \hat{c_t}(\hat{Z})-c_t
    \right|^p dt
\end{equation}
where $c_t$ is the confidence interval of order $t$, and $\hat{c_t}(\hat{Z})$ is the empirical coverage of this interval, based on the ensemble prediction $\hat{Z}$. As we can only sample from $\hat{Z}$, we need to rely on a Monte Carlo approximation of $PCE_p$.

This metric gives a quantitative value linking confidence sets with uncertainty: A $90\%$ prediction interval contains the true realization $90\%$ of the time \cite{dheur2023large, zhou2021estimating}.

For each task, we compare RP Flow with INR and Bayesian baselines. Specifically, we consider a Random Fourier Feature Network \cite{rudin2017fourier}, a SIREN Network \cite{sitzmann2020implicit}, two Gaussian Process Regressors: noiseless to maximize PSNR and calibrated to minimize PCE (via additive noise $\sigma$ in Equation \ref{eq:gp_post_mean}) and a Deep Gaussian Process \cite{damianou2013deep}.

\subsection{Image regression}

\begin{figure}[!t]
    \centering
    \begin{subfigure}[b]{0.24\columnwidth}
        \includegraphics[width=\linewidth]{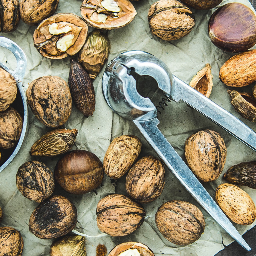}
    \end{subfigure}
    \hfill
    \begin{subfigure}[b]{0.24\columnwidth}
        \includegraphics[width=\linewidth]{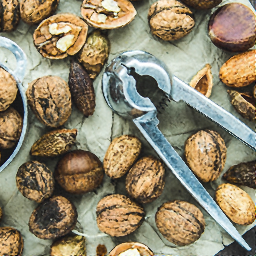}
    \end{subfigure}
    \hfill
    \begin{subfigure}[b]{0.24\columnwidth}
        \includegraphics[width=\linewidth]{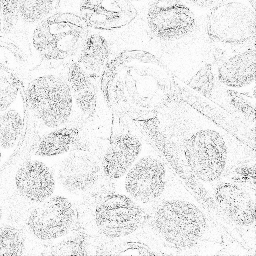}
    \end{subfigure}
    \hfill
    \begin{subfigure}[b]{0.24\columnwidth}
        \includegraphics[width=\linewidth]{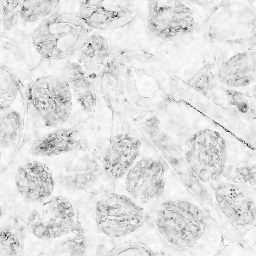}
    \end{subfigure}
    \\[0.2em]
    \begin{subfigure}[b]{0.24\columnwidth}
        \includegraphics[width=\linewidth]{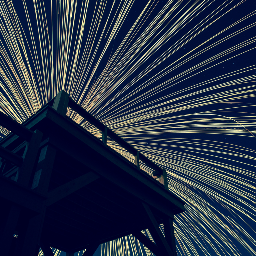}
        \caption{Truth}
    \end{subfigure}
    \hfill
    \begin{subfigure}[b]{0.24\columnwidth}
        \includegraphics[width=\linewidth]{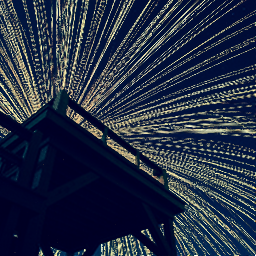}
        \caption{Prediction}
    \end{subfigure}
    \hfill
    \begin{subfigure}[b]{0.24\columnwidth}
        \includegraphics[width=\linewidth]{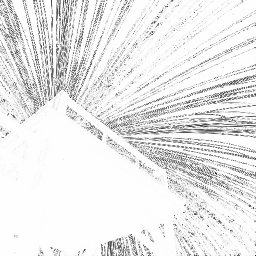}
        \caption{Error}
    \end{subfigure}
    \hfill
    \begin{subfigure}[b]{0.24\columnwidth}
        \includegraphics[width=\linewidth]{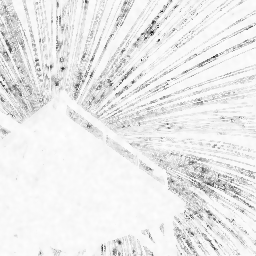}
        \caption{Uncertainty}
    \end{subfigure}
    \caption{Qualitative results of RP Flow for the image regression ($4\times$ upsampling) task. The posterior process $\hat{Z}^{post}$ enables high-quality predictions \textit{(b)}. The per-pixel predictive distribution $\hat{Z}(x)$ allows for calibrated uncertainty quantification \textit{(d)}.}
    \vspace{-0.3cm} 
    \label{fig:im_ups_results}
\end{figure}

To demonstrate the properties of our method, we evaluate RP Flow on two image regression tasks using the Div2K dataset \cite{Timofte2017CVPRWorkshops}. Specifically, we consider the $4\times$ upsampling problem and the reconstruction from 25$\%$ randomly sampled pixels from the image, on a subset of 32 images from the dataset. For each image, the models are trained on the $25\%$ training pixels and  are evaluated on all the remaining ones. 
All models use a Gaussian kernel for spatial covariance (explicitly or via RFFs). For RP Flow, we report the reconstruction results (PSNR and SSIM) on samples from the posterior process, as well as the calibration results (PCE) of the prior process. All models are tuned to achieve the best performance on 16 images and results are reported on the remaining test images. Additional results, including an ablation study of various parameters, are provided in Appendix \ref{ap:img_reg}.

\begin{table}[!t]
    \begin{center}
        \begin{small}
            \begin{sc}
                \resizebox{\columnwidth}{!}{\begin{tabular}{lcccr}
                    \toprule
                    Method & PSNR$\uparrow$ & SSIM$\uparrow$ & PCE$_1$$\downarrow$\\
                    \midrule
                    \multicolumn{4}{c}{\textbf{Upsampling}\vspace{.1cm}}\\
                    Rff Network             & \underline{25.20$\pm$4.93} & \textbf{0.82$\pm$0.08} &  \\
                    SIREN & 24.64$\pm$4.87 & 0.81$\pm$0.09 & \\
                    % GPR$_{\text{noiseless}}$   & 24.47$\pm$4.68 & 0.81$\pm$0.09 & 0.47$\pm$0.01 \\
                    GPR$_{\text{noiseless}}$   & 23.57$\pm$4.17 & 0.79$\pm$0.09 & 0.39$\pm$0.05 \\
                    % GPR$_{\text{calibrated}}$    & 18.97$\pm$1.73 & 0.37$\pm$0.06 & \textbf{0.09$\pm$0.05} \\
                    GPR$_{\text{calibrated}}$    & 22.42$\pm$3.09 & 0.58$\pm$0.07 & \underline{0.12$\pm$0.05} \\
                                        Deep GP & 19.09$\pm$2.93 & 0.36$\pm$0.11 & 0.35$\pm$0.04\\
                    RP Flow      & \textbf{25.30$\pm$5.05} & \textbf{0.82$\pm$0.08} & \textbf{0.09$\pm$0.05} \\
                    \midrule
                    \multicolumn{4}{c}{\textbf{Random}\vspace{.1cm}}\\
                    Rff Network             & \underline{23.67$\pm$4.66} & \textbf{0.78$\pm$0.09} &  \\
                    SIREN & 22.19$\pm$4.29 & 0.72$\pm$0.11 & \\
                    GPR$_{\text{noiseless}}$   & 17.94$\pm$3.69 & 0.58$\pm$0.11 & 0.41$\pm$0.04 \\
                    GPR$_{\text{calibrated}}$    & 20.36$\pm$2.70 & 0.49$\pm$0.06 & \underline{0.13$\pm$0.05} \\
                    Deep GP & 19.08$\pm$2.94 & 0.37$\pm$0.11 & 0.36$\pm$0.04 \\
                    RP Flow      & \textbf{23.70$\pm$4.64} & \underline{0.76$\pm$0.09} & \textbf{0.09$\pm$0.04} \\
                    \bottomrule
                \end{tabular}}
            \end{sc}
        \end{small}
    \end{center}
    \caption{Image Regression results on the Div2K dataset. All results are given in PSNR, SSIM and PCE. Results are shown in mean $\pm$ standard deviation. "Upsampling" represents the $4\times$ upsampling task and "Random" represents the scenario where 25$\%$ of the pixels are randomly sampled for training. 
    }
    \label{tab:im_ups}
\end{table}

Figure \ref{fig:im_ups_results} illustrates the performance of RP Flow on two test images from the dataset. The posterior process produces high-quality reconstructed samples, while the ensemble predictions provide calibrated uncertainty estimates. In particular, RP Flow induces higher uncertainty in high-frequency areas. Table \ref{tab:im_ups} shows that the proposed method achieves comparable results to the discriminative INRs in both upsampling and random scenarios while the access to a predictive distribution enables our model to capture uncertainty similarly to Gaussian Processes. Notably, at optimal calibration, the predictive variance of our method ($0.08\pm0.01$) is smaller than that of the GPR ($0.16\pm0.00$), illustrating reduced epistemic uncertainty. In this work, we focus on evaluating the model's calibration without applying any post-hoc recalibration, a setting that makes it difficult to obtain consistent reliability diagrams across a diverse high-frequency dataset (Figure \ref{fig:im_ups_calibration}).

\begin{figure}[!ht]
    \centering
    \begin{subfigure}[b]{0.49\columnwidth}
        \includegraphics[trim={.6cm .6cm 0.6cm 0.6cm},clip, width=\linewidth]{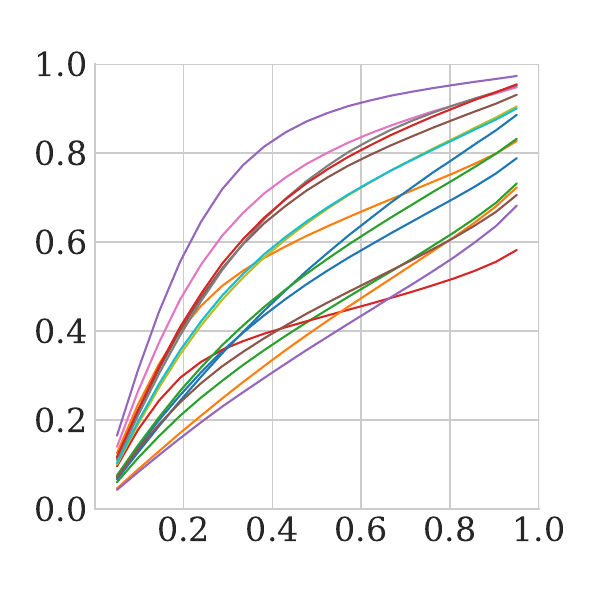}
        \caption{GPR (calibrated)}
    \end{subfigure}
    \hfill
    \begin{subfigure}[b]{0.49\columnwidth}
        \includegraphics[trim={.6cm .6cm 0.6cm 0.6cm},clip, width=\linewidth]{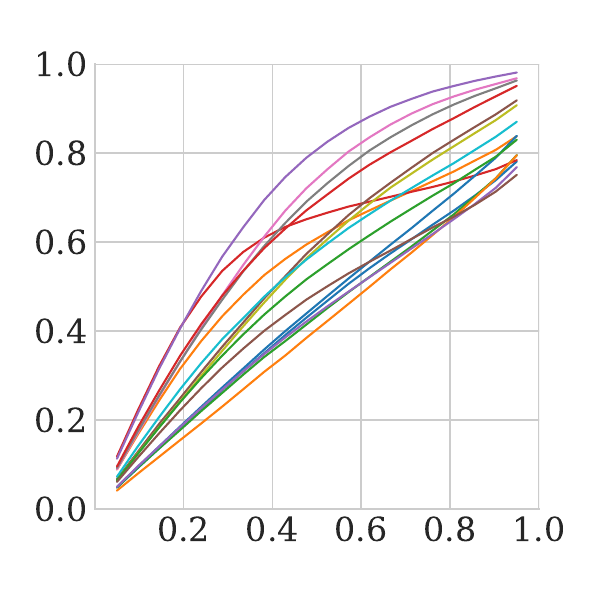}
        \caption{RP Flow}
    \end{subfigure}
    \caption{Reliability diagrams for RP Flow and calibrated GPR on the 16 test images in the $4\times$ upsampling setting. For each confidence level (shown on the $x$-axis), the corresponding empirical coverage achieved by the model is reported on the $y$-axis. Although the PCE results of both methods are close in expectation (Table~\ref{tab:im_ups}), RP Flow demonstrates greater consistency in per-image calibration.}
    % \vspace{-0.5cm}
    \label{fig:im_ups_calibration}
\end{figure}

\subsection{Seismic interpolation}
\label{sec:seismic}

Seismic imaging is a key technique in geophysics used to infer the structure of the Earth's subsurface. It involves sending acoustic waves into the ground and recording the reflected signals (called seismic traces) at various spatial locations resulting, after processing, in a 3D volume of data. Each vertical trace captures a complex, location-dependent signal that constitutes a high-dimensional sample. In practice, seismic data is often sparsely sampled on a regular 2D grid, while the goal is to reconstruct the dense 3D volume of traces. This task, known as seismic interpolation, is challenging because the distribution of traces varies significantly across space, and the underlying process exhibits strong spatial correlations with often the impossibility to access 3D data for supervision. To address this task, Gaussian Process regression, referred to as Kriging in the geostatistics literature \cite{christianson2023traditional}, is usually employed to provide both an estimation of the seismic data and an associated uncertainty at each location. The recent application of seismic data in emerging contexts, such as offshore wind farm development, has highlighted limitations in the use of Kriging. These limitations become particularly evident as the sparsity of 2D measurement grid increases. This setting is however interesting to demonstrate the capabilities of RP Flow in a high-dimensional, structured setting with sparsely sampled training points.

In this section, we parametrize RP Flow by position-conditioned U-Nets \cite{ronneberger2015u,dhariwal2021diffusion} with different sizes and compare them against the different baselines. Although 3D generative models could serve as baselines, they require numerous seismic volumes for training, which is impractical given the uniqueness of each volume. Therefore, we excluded them from our experiments and instead focus on comparing RP Flow exclusively against transductive methods. We empirically demonstrate that our model outperforms GPs and INRs in reconstruction quality (PSNR, SSIM), trace sample quality ($\mathcal{W}_1$) and uncertainty calibration (PCE$_1$). 
In this experiment, we utilize two datasets. First, we investigate the properties of RP Flow in a controlled setup and create a synthetic dataset of 32 seismic cubes using similar simulation properties \cite{merrifield2022synthetic}. Second, to demonstrate the capabilities of the model on real-world noisy data, we use the F3 open source dataset \cite{silva2019netherlands}. Each volume is cropped to $513\times513\times128$. For training, we consider a sparse grid of traces where the resulting 2D seismic sections are spaced 64 units apart in both directions. Additional settings such as irregular grid and randomly sampled training traces are discussed in Appendix \ref{ap:seismic} along with the implementation details for these experiments.

\usetikzlibrary{fit,calc,arrows.meta,positioning}
\usetikzlibrary{shapes.misc}

\tikzset{cross/.style={cross out, draw=purple, minimum size=2*(#1-\pgflinewidth), inner sep=0pt, outer sep=0pt},cross/.default={2.5pt}}

\begin{figure*}[!ht]
    \centering
    \begin{subfigure}[b]{0.1235\linewidth}
    \centering
    \begin{tikzpicture}[remember picture]
    \node[inner sep=0, outer sep=0, name=truthTop] {
        \begin{tikzpicture}
            \node[anchor=south west, inner sep=0] (img) at (0,0)
              {\includegraphics[width=\linewidth]{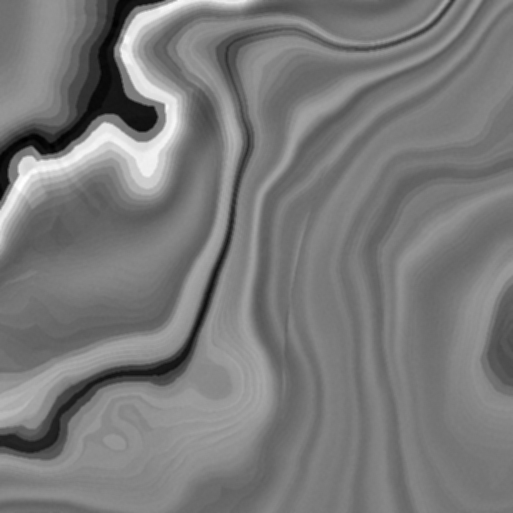}};
            \begin{scope}[x={(img.south east)}, y={(img.north west)}]
              \draw[brown, solid] (0.001, 0) -- (0.001, 1);
              \draw[brown, solid] (0.125, 0) -- (0.125, 1);
              \draw[brown, solid] (0.25, 0) -- (0.25, 1);
              \draw[brown, solid] (0.375, 0) -- (0.375, 1);
              \draw[brown, solid] (0.5, 0) -- (0.5, 1);
              \draw[brown, solid] (0.625, 0) -- (0.625, 1);
              \draw[brown, solid] (0.75, 0) -- (0.75, 1);
              \draw[brown, solid] (0.875, 0) -- (0.875, 1);
              \draw[brown, solid] (.997, 0) -- (0.997, 1);
              
              \draw[brown, solid] (0, 0.001) -- (1, 0.001);
              \draw[brown, solid] (0, 0.125) -- (1, 0.125);
              \draw[brown, solid] (0, 0.25) -- (1, 0.25);
              \draw[brown, solid] (0, 0.375) -- (1, 0.375);
              \draw[brown, solid] (0, 0.5) -- (1, 0.5);
              \draw[brown, solid] (0, 0.625) -- (1, 0.625);
              \draw[brown, solid] (0, 0.75) -- (1, 0.75);
              \draw[brown, solid] (0, 0.875) -- (1, 0.875);
              \draw[brown, solid] (0, .997) -- (1, 0.997);

              \draw[teal, very thick] (-0.015, -0.024) -- (-0.015, 1.023);
              \draw[teal, very thick] (1.013, -0.024) -- (1.013, 1.022);
              \draw[teal, very thick] (-0.024, -0.015) -- (1.022, -0.015);
              \draw[teal, very thick] (-0.024, 1.013) -- (1.022, 1.013);

              % \fill[purple] (0.31, 0.69) circle[radius=1.5pt];

              \draw (0.31, 0.69) node[cross=3.5pt,rotate=90, line width=1.5pt] {};
              
            \end{scope}
          \end{tikzpicture} };
      \end{tikzpicture}
    
    \end{subfigure}\hfill
    \begin{subfigure}[b]{0.1235\linewidth}
    \hspace{1pt}
    \end{subfigure}\hfill
    \begin{subfigure}[b]{0.1235\linewidth}
    \begin{tikzpicture}
    \node[anchor=south west, inner sep=0] (img) at (0,0)
      {
    \includegraphics[width=\linewidth]{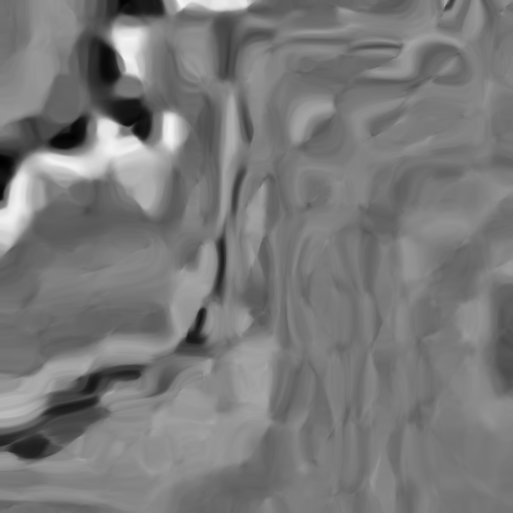}};
    \begin{scope}[x={(img.south east)}, y={(img.north west)}]
    \draw (0.31, 0.69) node[cross=3.5pt,rotate=90, line width=1.5pt] {};      
    \end{scope}
  \end{tikzpicture}
    \end{subfigure}\hfill
    \begin{subfigure}[b]{0.1235\linewidth}
    \begin{tikzpicture}
    \node[anchor=south west, inner sep=0] (img) at (0,0)
      {
    \includegraphics[width=\linewidth]{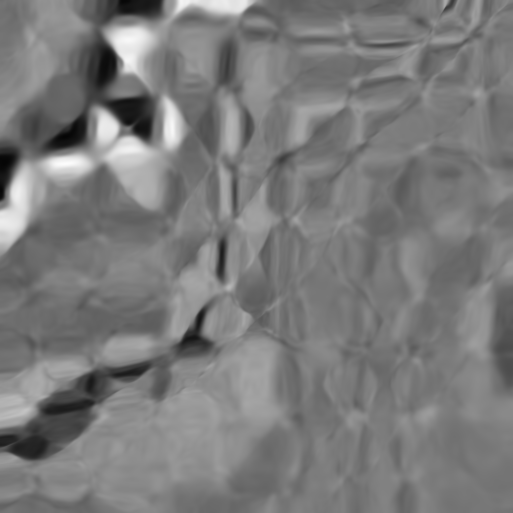}};
    \begin{scope}[x={(img.south east)}, y={(img.north west)}]
    \draw (0.31, 0.69) node[cross=3.5pt,rotate=90, line width=1.5pt] {};      
    \end{scope}
  \end{tikzpicture}
    \end{subfigure}\hfill
    \begin{subfigure}[b]{0.1235\linewidth}
    \begin{tikzpicture}
    \node[anchor=south west, inner sep=0] (img) at (0,0)
      {
    \includegraphics[width=\linewidth]{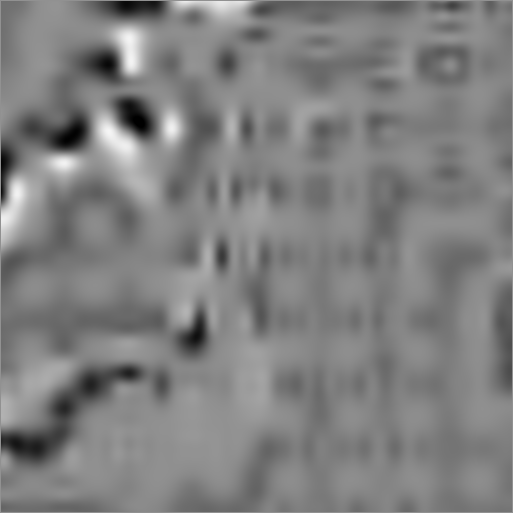}};
    \begin{scope}[x={(img.south east)}, y={(img.north west)}]
    \draw (0.31, 0.69) node[cross=3.5pt,rotate=90, line width=1.5pt] {};      
    \end{scope}
  \end{tikzpicture}
    \end{subfigure}\hfill
    \begin{subfigure}[b]{0.1235\linewidth}
    \begin{tikzpicture}
    \node[anchor=south west, inner sep=0] (img) at (0,0)
      {
    \includegraphics[width=\linewidth]{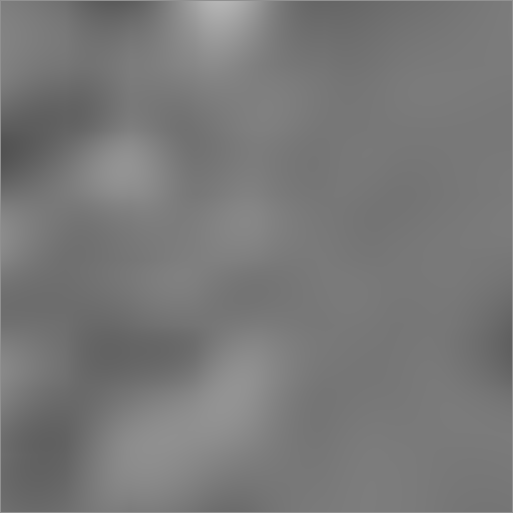}};
    \begin{scope}[x={(img.south east)}, y={(img.north west)}]
    \draw (0.31, 0.69) node[cross=3.5pt,rotate=90, line width=1.5pt] {};      
    \end{scope}
  \end{tikzpicture}
    \end{subfigure}\hfill
    \begin{subfigure}[b]{0.1235\linewidth}
    \begin{tikzpicture}
    \node[anchor=south west, inner sep=0] (img) at (0,0)
      {
    \includegraphics[width=\linewidth]{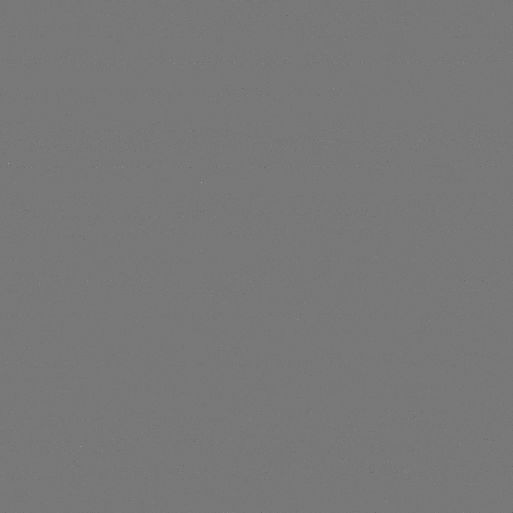}};
    \begin{scope}[x={(img.south east)}, y={(img.north west)}]
    \draw (0.31, 0.69) node[cross=3.5pt,rotate=90, line width=1.5pt] {};      
    \end{scope}
  \end{tikzpicture}
    \end{subfigure}\hfill
    \begin{subfigure}[b]{0.1235\linewidth}
    \begin{tikzpicture}
    \node[anchor=south west, inner sep=0] (img) at (0,0)
      {
    \includegraphics[width=\linewidth]{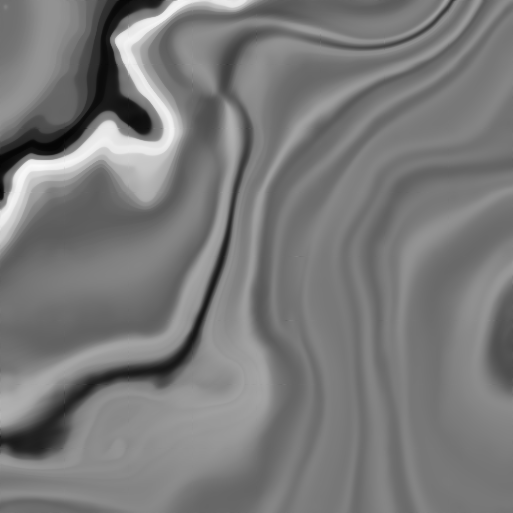}};
    \begin{scope}[x={(img.south east)}, y={(img.north west)}]
    \draw (0.31, 0.69) node[cross=3.5pt,rotate=90, line width=1.5pt] {};      
    \end{scope}
  \end{tikzpicture}
    \end{subfigure}
    \\[0.1em]

    \begin{subfigure}[b]{0.1235\linewidth}
    \centering
    \begin{tikzpicture}[remember picture]
        \node[inner sep=0, outer sep=0, name=rffTraceSynth] {
        \begin{tikzpicture}
            \node[anchor=south west, inner sep=0] (img) at (0,0)
              {\includegraphics[trim={.5cm 0 .5cm 0},clip, width=\linewidth]{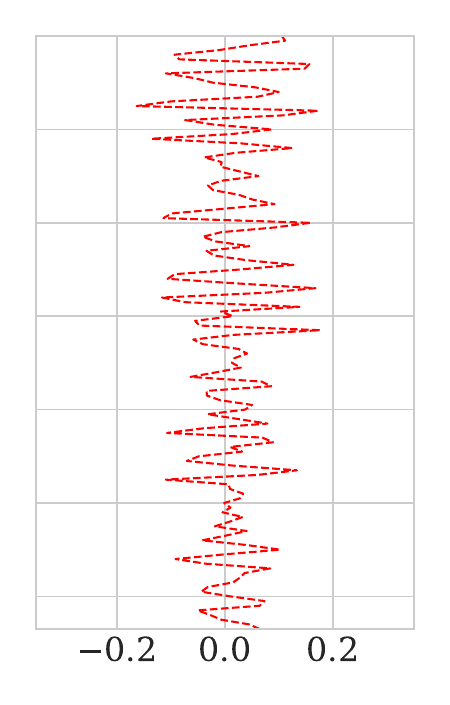}};
            \begin{scope}[x={(img.south east)}, y={(img.north west)}]
              \draw[very thick, teal] (0.02, 0.68) -- (0.98, 0.68);
            \end{scope}
    \end{tikzpicture}};
    \end{tikzpicture}
    \end{subfigure}\hfill
    \begin{subfigure}[b]{0.1235\linewidth}
    \hspace{1pt}
    \end{subfigure}\hfill
    \begin{subfigure}[b]{0.1235\linewidth}
    \begin{tikzpicture}
            \node[anchor=south west, inner sep=0] (img) at (0,0)
              {\includegraphics[trim={.5cm 0 .5cm 0},clip, width=\linewidth]{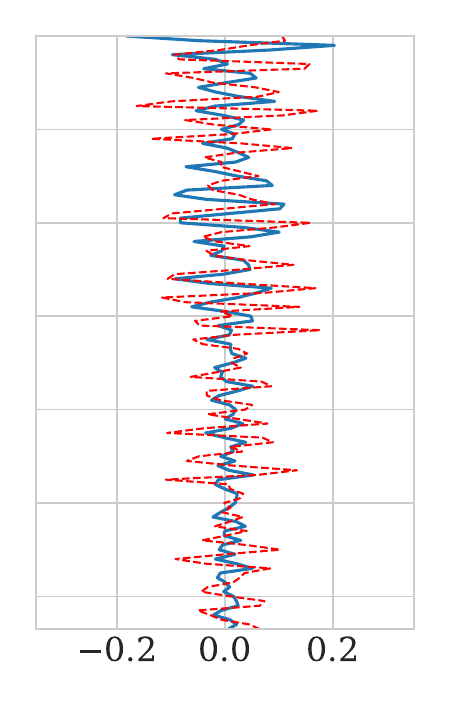}};
            \begin{scope}[x={(img.south east)}, y={(img.north west)}]
              \draw[very thick, teal] (0.02, 0.68) -- (0.98, 0.68);
            \end{scope}
    \end{tikzpicture}
    \end{subfigure}\hfill
    \begin{subfigure}[b]{0.1235\linewidth}
    \begin{tikzpicture}
            \node[anchor=south west, inner sep=0] (img) at (0,0)
              {
    \includegraphics[trim={.5cm 0 .5cm 0},clip, width=\linewidth]{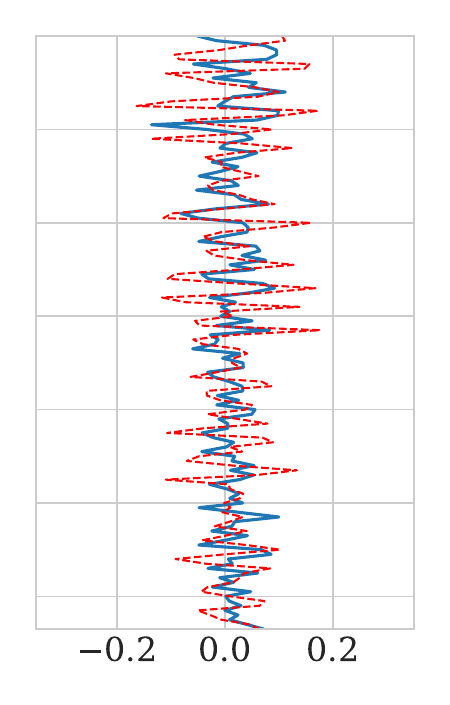}};
            \begin{scope}[x={(img.south east)}, y={(img.north west)}]
              \draw[very thick, teal] (0.02, 0.68) -- (0.98, 0.68);
            \end{scope}
    \end{tikzpicture}
    \end{subfigure}\hfill
    \begin{subfigure}[b]{0.1235\linewidth}
    \begin{tikzpicture}
            \node[anchor=south west, inner sep=0] (img) at (0,0)
              {
    \includegraphics[trim={.5cm 0 .5cm 0},clip, width=\linewidth]{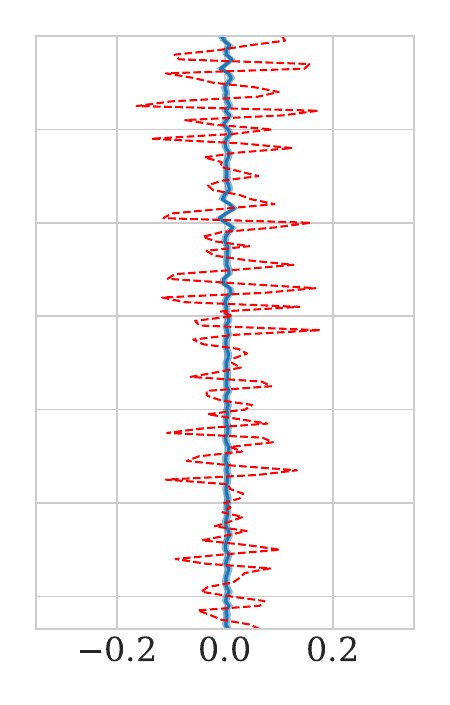}};
            \begin{scope}[x={(img.south east)}, y={(img.north west)}]
              \draw[very thick, teal] (0.02, 0.68) -- (0.98, 0.68);
            \end{scope}
    \end{tikzpicture}
    \end{subfigure}\hfill
    \begin{subfigure}[b]{0.1235\linewidth}
    \begin{tikzpicture}
            \node[anchor=south west, inner sep=0] (img) at (0,0)
              {
    \includegraphics[trim={.5cm 0 .5cm 0},clip, width=\linewidth]{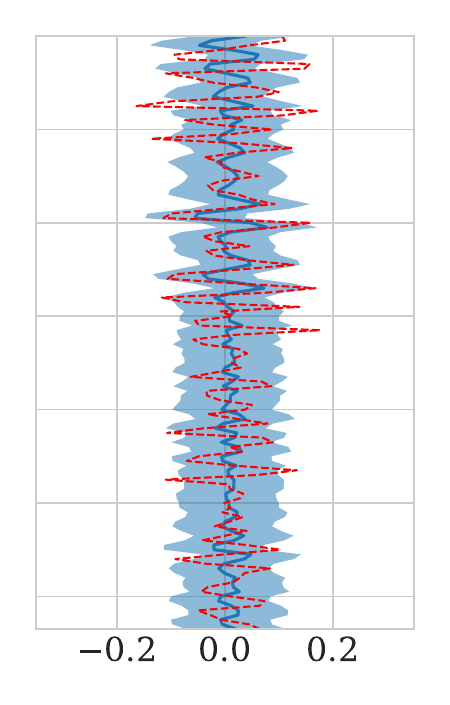}};
            \begin{scope}[x={(img.south east)}, y={(img.north west)}]
              \draw[very thick, teal] (0.02, 0.68) -- (0.98, 0.68);
            \end{scope}
    \end{tikzpicture}
    \end{subfigure}\hfill
    \begin{subfigure}[b]{0.1235\linewidth}
    \begin{tikzpicture}
            \node[anchor=south west, inner sep=0] (img) at (0,0)
              {
    \includegraphics[trim={.5cm 0 .5cm 0},clip, width=\linewidth]{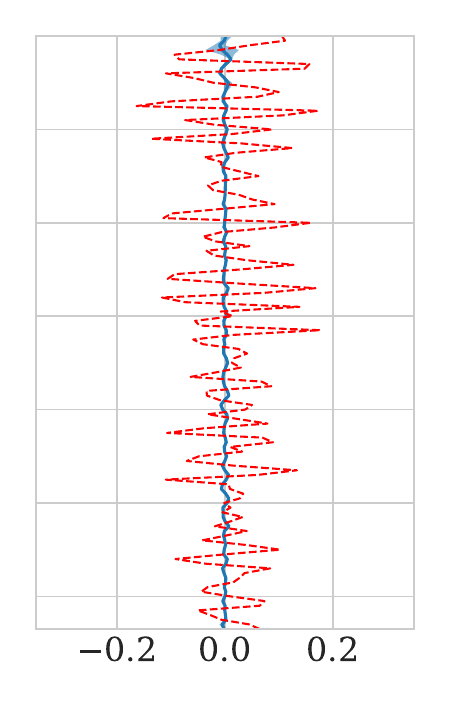}};
            \begin{scope}[x={(img.south east)}, y={(img.north west)}]
              \draw[very thick, teal] (0.02, 0.68) -- (0.98, 0.68);
            \end{scope}
    \end{tikzpicture}
    \end{subfigure}\hfill
    \begin{subfigure}[b]{0.1235\linewidth}
    \begin{tikzpicture}
            \node[anchor=south west, inner sep=0] (img) at (0,0)
              {
    \includegraphics[trim={.5cm 0 .5cm 0},clip, width=\linewidth]{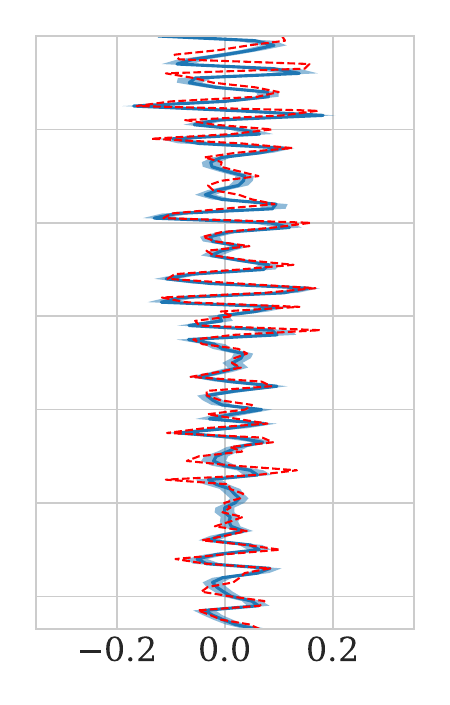}};
            \begin{scope}[x={(img.south east)}, y={(img.north west)}]
              \draw[very thick, teal] (0.02, 0.68) -- (0.98, 0.68);
            \end{scope}
    \end{tikzpicture}
    \end{subfigure}
    \\[0.1em]

    \begin{subfigure}[b]{0.1235\linewidth}
    \centering
    \begin{tikzpicture}[remember picture]
    \node[inner sep=0, outer sep=0, name=truthBottom] {
    \begin{tikzpicture}
    \node[anchor=south west, inner sep=0] (img) at (0,0)
      {\includegraphics[width=\linewidth]{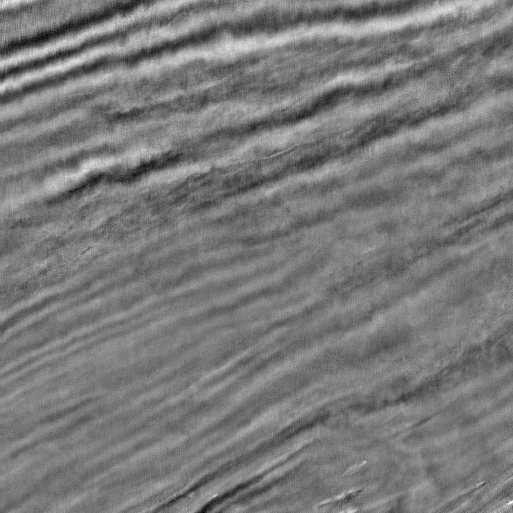}};
    \begin{scope}[x={(img.south east)}, y={(img.north west)}]
      \draw[brown, solid] (0.001, 0) -- (0.001, 1);
      \draw[brown, solid] (0.125, 0) -- (0.125, 1);
      \draw[brown, solid] (0.25, 0) -- (0.25, 1);
      \draw[brown, solid] (0.375, 0) -- (0.375, 1);
      \draw[brown, solid] (0.5, 0) -- (0.5, 1);
      \draw[brown, solid] (0.625, 0) -- (0.625, 1);
      \draw[brown, solid] (0.75, 0) -- (0.75, 1);
      \draw[brown, solid] (0.875, 0) -- (0.875, 1);
      \draw[brown, solid] (.997, 0) -- (0.997, 1);
      
      \draw[brown, solid] (0, 0.001) -- (1, 0.001);
      \draw[brown, solid] (0, 0.125) -- (1, 0.125);
      \draw[brown, solid] (0, 0.25) -- (1, 0.25);
      \draw[brown, solid] (0, 0.375) -- (1, 0.375);
      \draw[brown, solid] (0, 0.5) -- (1, 0.5);
      \draw[brown, solid] (0, 0.625) -- (1, 0.625);
      \draw[brown, solid] (0, 0.75) -- (1, 0.75);
      \draw[brown, solid] (0, 0.875) -- (1, 0.875);
      \draw[brown, solid] (0, .997) -- (1, 0.997);

      \draw[teal, very thick] (-0.015, -0.024) -- (-0.015, 1.023);
      \draw[teal, very thick] (1.013, -0.024) -- (1.013, 1.022);
      \draw[teal, very thick] (-0.024, -0.015) -- (1.022, -0.015);
      \draw[teal, very thick] (-0.024, 1.013) -- (1.022, 1.013);
      
      % \fill[purple] (0.31, 0.69) circle[radius=2pt];
      \draw (0.31, 0.69) node[cross=3.5pt,rotate=90, line width=1.5pt] {};      
    \end{scope}
  \end{tikzpicture}};
  \end{tikzpicture}
    \end{subfigure}\hfill
    \begin{subfigure}[b]{0.1235\linewidth}
    \hspace{1pt}
    \end{subfigure}\hfill
    \begin{subfigure}[b]{0.1235\linewidth}
    \begin{tikzpicture}
    \node[anchor=south west, inner sep=0] (img) at (0,0)
      {
    \includegraphics[width=\linewidth]{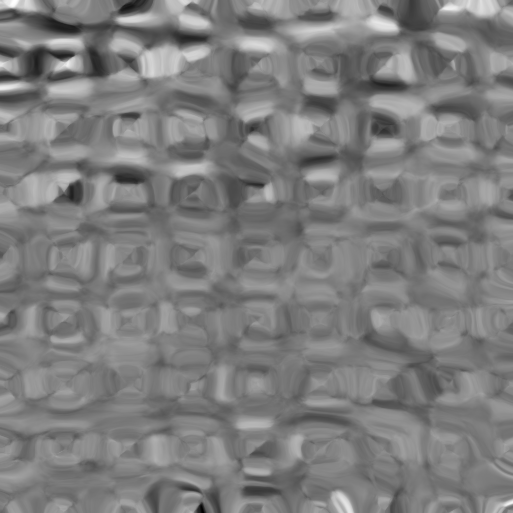}};
    \begin{scope}[x={(img.south east)}, y={(img.north west)}]
    \draw (0.31, 0.69) node[cross=3.5pt,rotate=90, line width=1.5pt] {};      
    \end{scope}
  \end{tikzpicture}
    \end{subfigure}\hfill
    \begin{subfigure}[b]{0.1235\linewidth}
    \begin{tikzpicture}
    \node[anchor=south west, inner sep=0] (img) at (0,0)
      {\includegraphics[width=\linewidth]{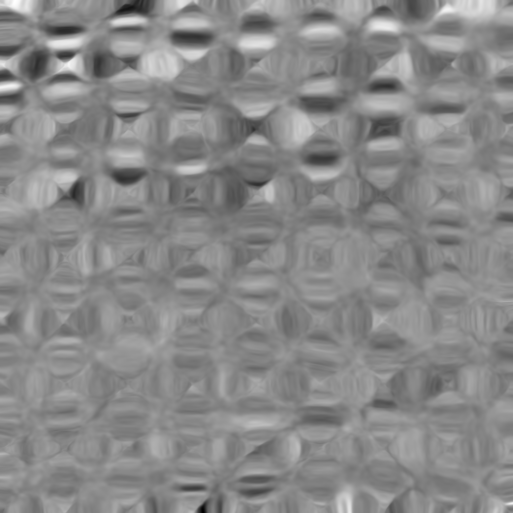}};
    \begin{scope}[x={(img.south east)}, y={(img.north west)}]
    \draw (0.31, 0.69) node[cross=3.5pt,rotate=90, line width=1.5pt] {};      
    \end{scope}
  \end{tikzpicture}
  \end{subfigure}\hfill
    \begin{subfigure}[b]{0.1235\linewidth}
    \begin{tikzpicture}
    \node[anchor=south west, inner sep=0] (img) at (0,0)
      {
    \includegraphics[width=\linewidth]{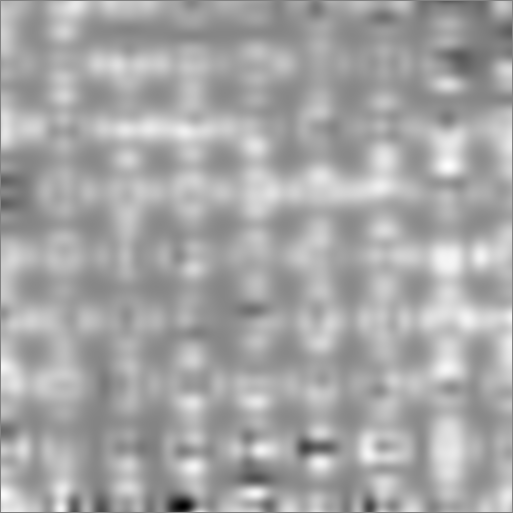}};
    \begin{scope}[x={(img.south east)}, y={(img.north west)}]
    \draw (0.31, 0.69) node[cross=3.5pt,rotate=90, line width=1.5pt] {};      
    \end{scope}
  \end{tikzpicture}
    \end{subfigure}\hfill
    \begin{subfigure}[b]{0.1235\linewidth}
    \begin{tikzpicture}
    \node[anchor=south west, inner sep=0] (img) at (0,0)
      {
    \includegraphics[width=\linewidth]{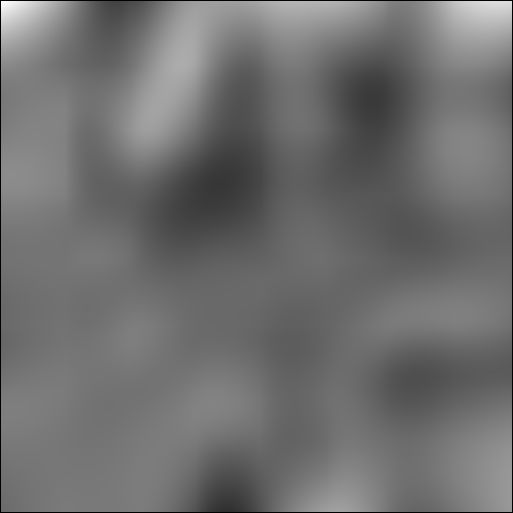}};
    \begin{scope}[x={(img.south east)}, y={(img.north west)}]
    \draw (0.31, 0.69) node[cross=3.5pt,rotate=90, line width=1.5pt] {};      
    \end{scope}
  \end{tikzpicture}
    \end{subfigure}\hfill
    \begin{subfigure}[b]{0.1235\linewidth}
    \begin{tikzpicture}
    \node[anchor=south west, inner sep=0] (img) at (0,0)
      {
    \includegraphics[width=\linewidth]{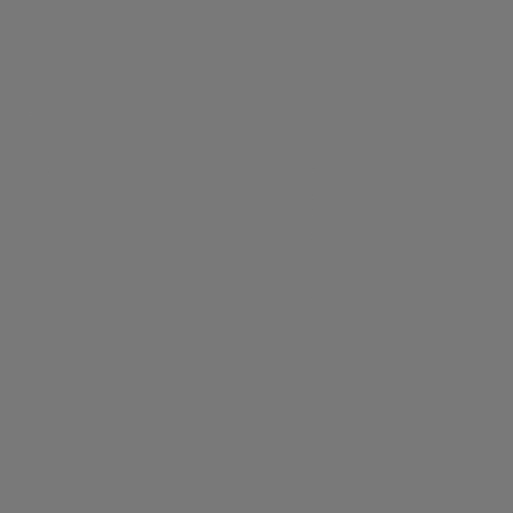}};
    \begin{scope}[x={(img.south east)}, y={(img.north west)}]
    \draw (0.31, 0.69) node[cross=3.5pt,rotate=90, line width=1.5pt] {};      
    \end{scope}
  \end{tikzpicture}
    \end{subfigure}\hfill
    \begin{subfigure}[b]{0.1235\linewidth}
    \begin{tikzpicture}
    \node[anchor=south west, inner sep=0] (img) at (0,0)
      {
    \includegraphics[width=\linewidth]{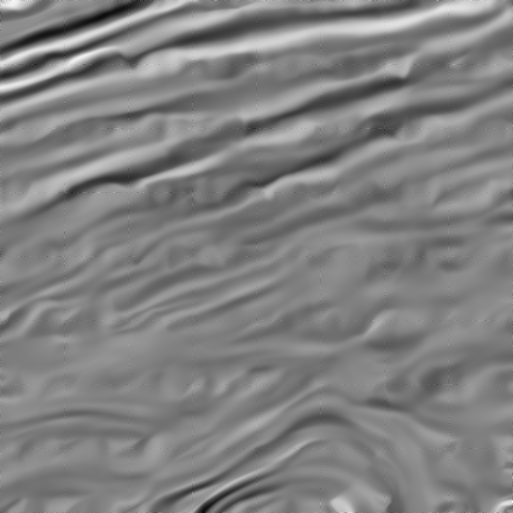}};
    \begin{scope}[x={(img.south east)}, y={(img.north west)}]
    \draw (0.31, 0.69) node[cross=3.5pt,rotate=90, line width=1.5pt] {};      
    \end{scope}
  \end{tikzpicture}
    \end{subfigure}
    \\[0.1em]

    \begin{subfigure}[b]{0.1235\linewidth}
    \centering
    \begin{tikzpicture}[remember picture]
        \node[inner sep=0, outer sep=0, name=rffTraceF3] {
        \begin{tikzpicture}
            \node[anchor=south west, inner sep=0] (img) at (0,0)
              {\includegraphics[trim={.5cm 0 .5cm 0},clip, width=\linewidth]{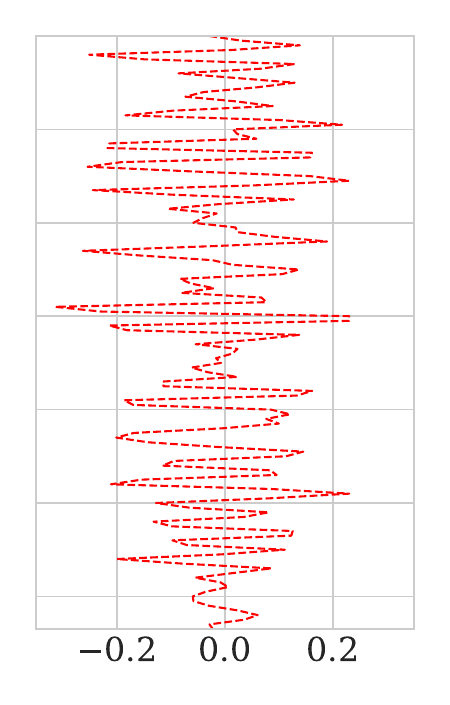}};
            \begin{scope}[x={(img.south east)}, y={(img.north west)}]
              \draw[very thick, teal] (0.02, 0.68) -- (0.98, 0.68);
            \end{scope}
    \end{tikzpicture}};
    \end{tikzpicture}
    \caption{\centering Ground\\Truth}
    \end{subfigure}\hfill
    \begin{subfigure}[b]{0.1235\linewidth}
    \hspace{1pt}
    \end{subfigure}\hfill
    \begin{subfigure}[b]{0.1235\linewidth}
    \begin{tikzpicture}
            \node[anchor=south west, inner sep=0] (img) at (0,0)
              {
    \includegraphics[trim={.5cm 0 .5cm 0},clip, width=\linewidth]{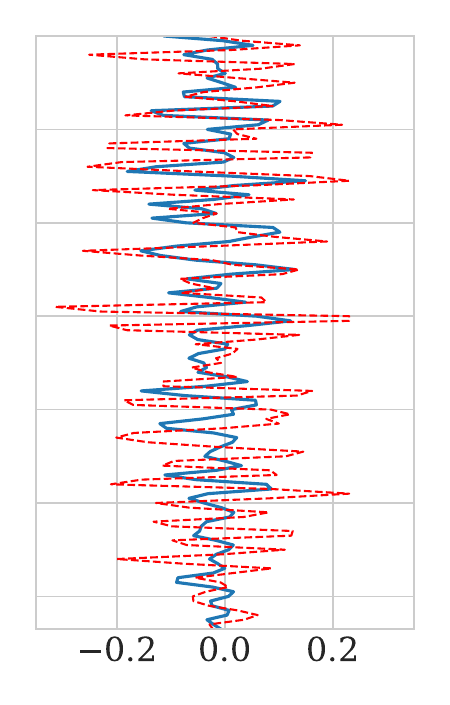}};
            \begin{scope}[x={(img.south east)}, y={(img.north west)}]
              \draw[very thick, teal] (0.02, 0.68) -- (0.98, 0.68);
            \end{scope}
    \end{tikzpicture}
    \caption{\centering RFF\\Network}
    \end{subfigure}\hfill
    \begin{subfigure}[b]{0.1235\linewidth}
    \begin{tikzpicture}
            \node[anchor=south west, inner sep=0] (img) at (0,0)
              {
    \includegraphics[trim={.5cm 0 .5cm 0},clip, width=\linewidth]{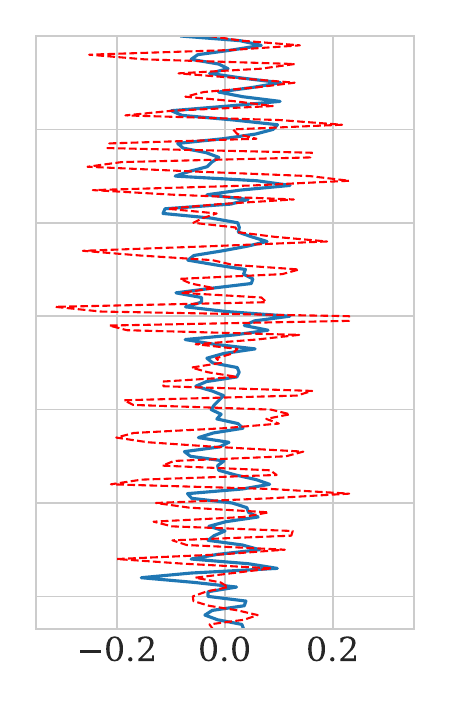}};
            \begin{scope}[x={(img.south east)}, y={(img.north west)}]
              \draw[very thick, teal] (0.02, 0.68) -- (0.98, 0.68);
            \end{scope}
    \end{tikzpicture}
    \caption{SIREN\\\hspace{1cm}}
    \end{subfigure}\hfill
    \begin{subfigure}[b]{0.1235\linewidth}
    \begin{tikzpicture}
            \node[anchor=south west, inner sep=0] (img) at (0,0)
              {
    \includegraphics[trim={.5cm 0 .5cm 0},clip, width=\linewidth]{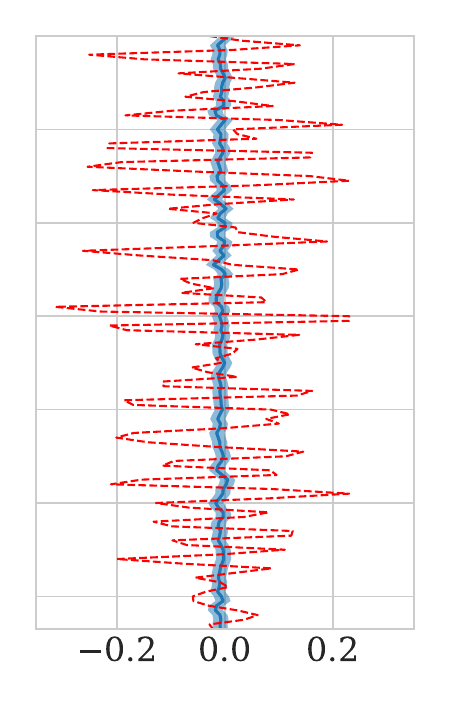}};
            \begin{scope}[x={(img.south east)}, y={(img.north west)}]
              \draw[very thick, teal] (0.02, 0.68) -- (0.98, 0.68);
            \end{scope}
    \end{tikzpicture}
    \caption{\centering GPR\\(noiseless)}
    \end{subfigure}\hfill
    \begin{subfigure}[b]{0.1235\linewidth}
    \begin{tikzpicture}
            \node[anchor=south west, inner sep=0] (img) at (0,0)
              {
    \includegraphics[trim={.5cm 0 .5cm 0},clip, width=\linewidth]{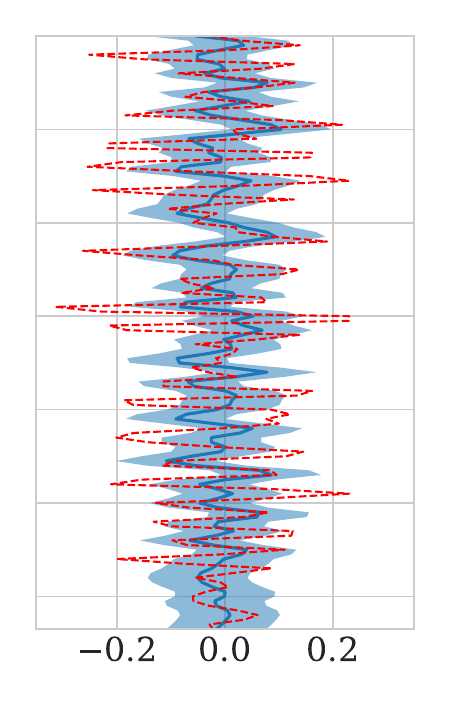}};
            \begin{scope}[x={(img.south east)}, y={(img.north west)}]
              \draw[very thick, teal] (0.02, 0.68) -- (0.98, 0.68);
            \end{scope}
    \end{tikzpicture}
    \caption{\centering GP\\(calibrated)}
    \end{subfigure}\hfill
    \begin{subfigure}[b]{0.1235\linewidth}
    \begin{tikzpicture}
            \node[anchor=south west, inner sep=0] (img) at (0,0)
              {
    \includegraphics[trim={.5cm 0 .5cm 0},clip, width=\linewidth]{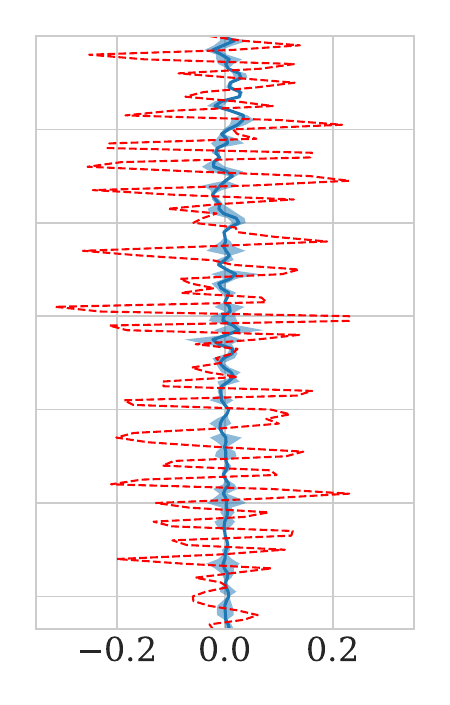}};
            \begin{scope}[x={(img.south east)}, y={(img.north west)}]
              \draw[very thick, teal] (0.02, 0.68) -- (0.98, 0.68);
            \end{scope}
    \end{tikzpicture}
    \caption{Deep GP\\\hspace{1cm}}
    \end{subfigure}\hfill
    \begin{subfigure}[b]{0.1235\linewidth}
    \begin{tikzpicture}
            \node[anchor=south west, inner sep=0] (img) at (0,0)
              {
    \includegraphics[trim={.5cm 0 .5cm 0},clip, width=\linewidth]{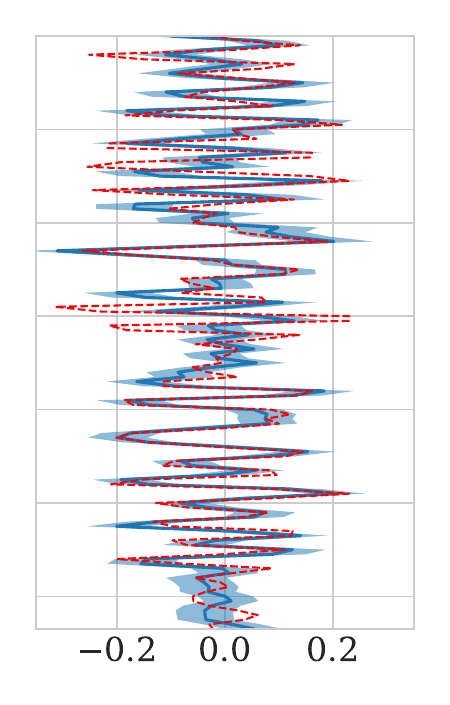}};
            \begin{scope}[x={(img.south east)}, y={(img.north west)}]
              \draw[very thick, teal] (0.02, 0.68) -- (0.98, 0.68);
            \end{scope}
    \end{tikzpicture}
    \caption{RP Flow M\\\hspace{1cm}}
    \end{subfigure}

    \caption{Qualitative results on a synthetic and real seismic volume. \textbf{Top:} Horizontal slice of the 3D volume predicted by the different models. \textbf{Bottom:} Predictive distribution of a vertical trace by the different models at a single test position. Ground truth trace is represented in dashed red, mean prediction in solid blue and standard deviation in light blue. On the Ground Truth \textit{(a)}, the \textbf{\textcolor{brown}{orange}} grid denotes the training positions. The \textbf{\textcolor{purple}{red}} cross on the slices indicate the location of the traces shown below. The \textbf{\textcolor{teal}{green}} line on the traces correspond to the depth of the slices displayed above.}
    \label{fig:seismic_slice_sample}
\end{figure*}

We report all quantitative results in Table \ref{tab:seismic}. In both the synthetic and real cases, every instance of RP Flow outperforms the baselines across all metrics, demonstrating superior reconstruction quality, sample quality and calibration. Notably, increasing the model size improves performance up to a certain point, beyond which the limited amount of available data constrains further learning. However, the continuity of the positional embedding serves as a regularization mechanism, mitigating overfitting in larger models.

\begin{table*}[!t]
    \begin{center}
        \begin{small}
            \begin{sc}
                \resizebox{\textwidth}{!}{\begin{tabular}{l|cccc|ccccr}
                    \toprule
                    \multicolumn{1}{l}{} &
                    \multicolumn{4}{|c|}{Synthetic} &
                    \multicolumn{4}{|c}{Real} \\
                    \midrule
                    Method & PSNR$\uparrow$ & SSIM$\uparrow$ & $\mathcal{W}_1\downarrow$ & PCE$\downarrow$ & PSNR$\uparrow$ & SSIM$\uparrow$ & $\mathcal{W}_1\downarrow$ & PCE$\downarrow$\\
                    \midrule
                    Rff Network & 
                    26.78$\pm$1.78 & 0.75$\pm$0.09 & 0.35$\pm$0.07 &   &  
                    26.11 & 0.65 & 0.47 &  \\

                    SIREN & 
                    27.05$\pm$1.54 & 0.73$\pm$0.07 & 0.36$\pm$0.07 & & 
                    26.29 & 0.65 & 0.47 & \\
                    
                    GPR (noiseless) & 
                    26.56$\pm$1.64 & 0.73$\pm$0.09 & 0.42$\pm$0.10 & 0.38$\pm$0.08 &  
                    25.78 & 0.59 & 0.51 & 0.47 \\
                    
                    GPR (calibrated)& 
                    26.22$\pm$1.74 & 0.62$\pm$0.11 & 0.50$\pm$0.10 & 0.13$\pm$0.07 &  
                    25.13 & 0.48 & 0.56 & 0.06 \\

                    Deep GP & 
                    26.10$\pm$1.63 & 0.60$\pm$0.11 & 0.54$\pm$0.10 & 0.40$\pm$0.01 & 
                    24.60 & 0.38 & 0.66 & 0.48 \\
                    
                    RP Flow S ${(\#params=1.7M)}$     
                    & 31.24$\pm$1.70 & 0.91$\pm$0.04 & 0.39$\pm$0.08 & \textbf{0.06$\pm$0.03} &  
                    28.47 & 0.78 & 0.40 & \textbf{0.01} \\
                    
                    RP Flow M ${(\#params=10M)}$     
                    & \textbf{32.08$\pm$1.73} & \textbf{0.93$\pm$0.04} & \textbf{0.33$\pm$0.05} & \textbf{0.06$\pm$0.04} & 
                    \textbf{28.77} & \textbf{0.80} & \textbf{0.39} & \underline{0.02} \\
                    
                    RP Flow L ${(\#params=50M)}$     
                    & \underline{31.84$\pm$1.76} & \underline{0.92$\pm$0.04} & \underline{0.34$\pm$0.06} & 0.07$\pm$0.03 & 
                    \underline{28.63} & \underline{0.79} & \textbf{0.39} & 0.04 \\

                    \bottomrule
                \end{tabular}}
            \end{sc}
        \end{small}
    \end{center}
    % \vskip -0.1in
\caption{Quantitative results of the seismic interpolation task on the two datasets. For the synthetic dataset, results are reported as mean $\pm$ standard deviation over 16 evaluation volumes, whereas for the real seismic dataset, the single scalar metric is provided. In both cases, each instance of RP Flow consistently outperforms the INR and Gaussian Processes baselines in terms of reconstruction quality, sample quality, and uncertainty calibration. Increasing the model size improves performance up to a point, beyond which the limited data quantity becomes a bottleneck for larger models.}
\label{tab:seismic}
\end{table*}

Figure \ref{fig:seismic_slice_sample} displays visual results of the models on both the synthetic and real datasets. We present a two-dimensional horizontal slice of the 3D predicted volumes reconstructed by the different models as well as the predictive distributions of the different models against a single trace extracted from the seismic volumes. Visually, RP Flow is the only model capable to interpolate accurately between the training traces, while providing calibrated and reduced uncertainty without losing the high-frequency complex patterns in the signal. More generally, this experiment demonstrates that the RP Flow framework performs effectively in high-dimensional settings where the sample structure is complex, yet the random process exhibits strong spatial correlation.

\section{Conclusions}

In this work we introduced RP Flow, a neural implicit generative model that uses Flow Matching to model the marginal distributions of a random process at each spatial position. Our approach employs an Implicit Neural Representation to  model the transport between the random processes, enabling learning from a single realization of the process at observed locations. We exploited the Gaussian properties of the source space to define posterior processes via Gaussian Process regression between source observations. The theoretical foundations of our method allow for modeling of various regularity assumptions on the process through the source process definition and ensure robust convergence of target statistics via ensemble sampling.

Our experimental results on both image regression and seismic interpolation demonstrates strong performances of RP Flow in terms of data reconstruction, sample quality and uncertainty calibration. In high-frequencies settings, we achieved comparable reconstruction results as the INR baselines while providing superior calibration compared to GPs. In high-dimensional settings, RP Flow outperforms the GP and INR baselines, achieving state-of-the-art results in sparse seismic interpolation without relying on external supervision. In conclusions, RP Flow represents a promising step towards generative models for reconstruction problems in which calibrated uncertainty must be quantified and additional data cannot be acquired.

\textbf{Limitations and Future work.} While RP Flow demonstrates strong performance in terms of reconstruction quality and uncertainty calibration, the requirement to compute a posterior Gaussian process constitutes a significant computational bottleneck. Future work could investigate sparse GP approximations to mitigate this cost. Additionally, extending RP Flow to indirect supervised tasks represents a promising direction, as does the application of meta-learning approaches to learn a prior across multiple RP Flows.

\section*{Acknowledgments}
We thank the anonymous reviewers for their valuable feedback. We acknowledge TotalEnergies for supporting this work and for permitting the publication of these results. Any opinions, findings, and conclusions expressed in this paper are those of the authors and do not necessarily reflect the views of TotalEnergies. This work was supported by the French National Association for Research and Technology (ANRT) through a CIFRE fellowship.

\section*{Impact Statement}
This paper addresses generative models for interpolation problems. While we expect limited societal impact from our contribution, risks can arise from potential over‑reliance on interpolated outputs. Users should treat results as probabilistic estimates rather than ground truth. Additionally, users should weigh the environmental and computational cost of GPU‑intensive inference against the practical benefits gained from applying the proposed method.
\bibliography{references}

@inproceedings{sohl2015deep,
  author       = {Jascha Sohl{-}Dickstein and
                  Eric A. Weiss and
                  Niru Maheswaranathan and
                  Surya Ganguli},
  title        = {Deep {U}nsupervised {L}earning using {N}onequilibrium {T}hermodynamics},
  booktitle    = {International Conference on Machine Learning},
  year         = {2015},
  url          = {http://proceedings.mlr.press/v37/sohl-dickstein15.html},
  bibsource    = {dblp computer science bibliography, https://dblp.org}
}

@inproceedings{ho2020denoising,
  author       = {Jonathan Ho and
                  Ajay Jain and
                  Pieter Abbeel},
  title        = {Denoising {D}iffusion {P}robabilistic {M}odels},
  booktitle    = {Advances in Neural Information Processing Systems},
  year         = {2020},
  url          = {https://proceedings.neurips.cc/paper/2020/hash/4c5bcfec8584af0d967f1ab10179ca4b-Abstract.html},
  bibsource    = {dblp computer science bibliography, https://dblp.org}
}

@inproceedings{song2021score,
  author       = {Yang Song and
                  Jascha Sohl{-}Dickstein and
                  Diederik P. Kingma and
                  Abhishek Kumar and
                  Stefano Ermon and
                  Ben Poole},
  title        = {Score-{B}ased {G}enerative {M}odeling through {S}tochastic {D}ifferential {E}quations},
  booktitle    = {International Conference on Learning Representations},
  year         = {2021},
  url          = {https://openreview.net/forum?id=PxTIG12RRHS},
  bibsource    = {dblp computer science bibliography, https://dblp.org}
}

@inproceedings{dhariwal2021diffusion,
  author       = {Prafulla Dhariwal and
                  Alexander Quinn Nichol},
  title        = {Diffusion {M}odels {B}eat {GAN}s on {I}mage {S}ynthesis},
  booktitle    = {Advances in Neural Information Processing Systems},
  year         = {2021},
  url          = {https://proceedings.neurips.cc/paper/2021/hash/49ad23d1ec9fa4bd8d77d02681df5cfa-Abstract.html},
  bibsource    = {dblp computer science bibliography, https://dblp.org}
}

@inproceedings{chung2023diffusion,
  author       = {Hyungjin Chung and
                  Jeongsol Kim and
                  Michael Thompson McCann and
                  Marc Louis Klasky and
                  Jong Chul Ye},
  title        = {Diffusion Posterior Sampling for General Noisy Inverse Problems},
  booktitle    = {International Conference on Learning Representations},
  year         = {2023},
  url          = {https://openreview.net/forum?id=OnD9zGAGT0k},
  bibsource    = {dblp computer science bibliography, https://dblp.org}
}

@inproceedings{lugmayr2022repaint,
  author       = {Andreas Lugmayr and
                  Martin Danelljan and
                  Andr{\'{e}}s Romero and
                  Fisher Yu and
                  Radu Timofte and
                  Luc Van Gool},
  title        = {Re{P}aint: {I}npainting using {D}enoising {D}iffusion {P}robabilistic {M}odels},
  booktitle    = {Conference on Computer Vision and Pattern Recognition},
  year         = {2022},
  url          = {https://doi.org/10.1109/CVPR52688.2022.01117},
  doi          = {10.1109/CVPR52688.2022.01117},
  bibsource    = {dblp computer science bibliography, https://dblp.org}
}

@inproceedings{rombach2022high,
  author       = {Robin Rombach and
                  Andreas Blattmann and
                  Dominik Lorenz and
                  Patrick Esser and
                  Bj{\"{o}}rn Ommer},
  title        = {High-{R}esolution {I}mage {S}ynthesis with {L}atent {D}iffusion {M}odels},
  booktitle    = {Conference on Computer Vision and Pattern Recognition},
  year         = {2022},
  url          = {https://doi.org/10.1109/CVPR52688.2022.01042},
  doi          = {10.1109/CVPR52688.2022.01042},
  bibsource    = {dblp computer science bibliography, https://dblp.org}
}

@inproceedings{songdenoising,
  author       = {Jiaming Song and
                  Chenlin Meng and
                  Stefano Ermon},
  title        = {Denoising {D}iffusion {I}mplicit {M}odels},
  booktitle    = {International Conference on Learning Representations},
  year         = {2021},
  url          = {https://openreview.net/forum?id=St1giarCHLP},
  bibsource    = {dblp computer science bibliography, https://dblp.org}
}

@inproceedings{ho2022video,
  author       = {Jonathan Ho and
                  Tim Salimans and
                  Alexey A. Gritsenko and
                  William Chan and
                  Mohammad Norouzi and
                  David J. Fleet},
  title        = {Video {D}iffusion {M}odels},
  booktitle    = {Advances in Neural Information Processing Systems},
  year         = {2022},
  url          = {http://papers.nips.cc/paper\_files/paper/2022/hash/39235c56aef13fb05a6adc95eb9d8d66-Abstract-Conference.html},
  bibsource    = {dblp computer science bibliography, https://dblp.org}
}

@article{blattmann2023stable,
  author       = {Andreas Blattmann and
                  Tim Dockhorn and
                  Sumith Kulal and
                  Daniel Mendelevitch and
                  Maciej Kilian and
                  Dominik Lorenz and
                  Yam Levi and
                  Zion English and
                  Vikram Voleti and
                  Adam Letts and
                  Varun Jampani and
                  Robin Rombach},
  title        = {Stable {V}ideo {D}iffusion: {S}caling {L}atent {V}ideo {D}iffusion {M}odels to {L}arge {D}atasets},
  journal      = {arXiv preprint},
  year         = {2023},
  url          = {https://doi.org/10.48550/arXiv.2311.15127},
  doi          = {10.48550/ARXIV.2311.15127},
  eprinttype    = {arXiv},
  eprint       = {2311.15127},
  bibsource    = {dblp computer science bibliography, https://dblp.org}
}

@inproceedings{chen2021wavegrad,
  author       = {Nanxin Chen and
                  Yu Zhang and
                  Heiga Zen and
                  Ron J. Weiss and
                  Mohammad Norouzi and
                  William Chan},
  title        = {Wave{G}rad: {E}stimating {G}radients for {W}aveform {G}eneration},
  booktitle    = {International Conference on Learning Representations},
  year         = {2021},
  url          = {https://openreview.net/forum?id=NsMLjcFaO8O},
  bibsource    = {dblp computer science bibliography, https://dblp.org}
}

@inproceedings{kong2021diffwave,
  author       = {Zhifeng Kong and
                  Wei Ping and
                  Jiaji Huang and
                  Kexin Zhao and
                  Bryan Catanzaro},
  title        = {Diff{W}ave: {A} {V}ersatile {D}iffusion {M}odel for {A}udio {S}ynthesis},
  booktitle    = {International Conference on Learning Representations},
  year         = {2021},
  url          = {https://openreview.net/forum?id=a-xFK8Ymz5J},
  bibsource    = {dblp computer science bibliography, https://dblp.org}
}

@inproceedings{moghadam2023morphology,
  author       = {Puria Azadi Moghadam and
                  Sanne Van Dalen and
                  Karina C. Martin and
                  Jochen K. Lennerz and
                  Stephen Yip and
                  Hossein Farahani and
                  Ali Bashashati},
  title        = {A {M}orphology {F}ocused {D}iffusion {P}robabilistic {M}odel for {S}ynthesis of {H}istopathology {I}mages},
  booktitle    = {Winter Conference on Applications of Computer Vision},
  year         = {2023},
  url          = {https://doi.org/10.1109/WACV56688.2023.00204},
  doi          = {10.1109/WACV56688.2023.00204},
  bibsource    = {dblp computer science bibliography, https://dblp.org}
}

@article{lyu2022conversion,
  author       = {Qing Lyu and
                  Ge Wang},
  title        = {Conversion {B}etween {CT} and {MRI} {I}mages {U}sing {D}iffusion and {S}core-{M}atching
                  {M}odels},
  journal      = {arXiv preprint},
  year         = {2022},
  url          = {https://doi.org/10.48550/arXiv.2209.12104},
  doi          = {10.48550/ARXIV.2209.12104},
  eprinttype    = {arXiv},
  eprint       = {2209.12104},
  bibsource    = {dblp computer science bibliography, https://dblp.org}
}

@article{price2025gencast,
  author       = {Ilan Price and
                  Alvaro Sanchez{-}Gonzalez and
                  Ferran Alet and
                  Timo Ewalds and
                  Andrew El{-}Kadi and
                  Jacklynn Stott and
                  Shakir Mohamed and
                  Peter W. Battaglia and
                  R{\'{e}}mi Lam and
                  Matthew Willson},
  title        = {Gen{C}ast: {D}iffusion-based ensemble forecasting for medium-range weather},
  journal      = {arXiv preprint},
  year         = {2023},
  url          = {https://doi.org/10.48550/arXiv.2312.15796},
  doi          = {10.48550/ARXIV.2312.15796},
  eprinttype    = {arXiv},
  eprint       = {2312.15796},
  bibsource    = {dblp computer science bibliography, https://dblp.org}
}

@inproceedings{andrae2025continuous,
  author       = {Martin Andrae and
                  Tomas Landelius and
                  Joel Oskarsson and
                  Fredrik Lindsten},
  title        = {Continuous {E}nsemble {W}eather {F}orecasting with {D}iffusion models},
  booktitle    = {International Conference on Learning Representations},
  year         = {2025},
  url          = {https://openreview.net/forum?id=ePEZvQNFDW},
  bibsource    = {dblp computer science bibliography, https://dblp.org}
}

@article{durall2023deep,
  author       = {Ricard Durall and
                  Ammar Ghanim and
                  Mario Ruben Fernandez and
                  Norman Ettrich and
                  Janis Keuper},
  title        = {Deep diffusion models for seismic processing},
  journal      = {Computers \& Geosciences},
  year         = {2023},
  url          = {https://doi.org/10.1016/j.cageo.2023.105377},
  doi          = {10.1016/J.CAGEO.2023.105377},
  bibsource    = {dblp computer science bibliography, https://dblp.org}
}

@article{federico2025latent,
  author       = {Guido Di Federico and
                  Louis J. Durlofsky},
  title        = {Latent diffusion models for parameterization of facies-based geomodels
                  and their use in data assimilation},
  journal      = {Computers \& Geosciences},
  year         = {2025},
  url          = {https://doi.org/10.1016/j.cageo.2024.105755},
  doi          = {10.1016/J.CAGEO.2024.105755},
  bibsource    = {dblp computer science bibliography, https://dblp.org}
}

@inproceedings{ding2023pdf,
  author       = {Yuhan Ding and
                  Fukun Yin and
                  Jiayuan Fan and
                  Hui Li and
                  Xin Chen and
                  Wen Liu and
                  Chongshan Lu and
                  Gang Yu and
                  Tao Chen},
  title        = {{PDF:} {P}oint {D}iffusion {I}mplicit {F}unction for {L}arge-scale {S}cene {N}eural
                  {R}epresentation},
  booktitle    = {Advances in Neural Information Processing Systems},
  year         = {2023},
  url          = {http://papers.nips.cc/paper\_files/paper/2023/hash/0073cc73e1873b35345209b50a3dab66-Abstract-Conference.html},
  bibsource    = {dblp computer science bibliography, https://dblp.org}
}

@inproceedings{chen2018neural,
  author       = {Ricky T. Q. Chen and
                  Brandon Amos and
                  Maximilian Nickel},
  title        = {Learning {N}eural {E}vent {F}unctions for {O}rdinary {D}ifferential {E}quations},
  booktitle    = {International Conference on Learning Representations},
  year         = {2021},
  url          = {https://openreview.net/forum?id=kW\_zpEmMLdP},
  bibsource    = {dblp computer science bibliography, https://dblp.org}
}

@inproceedings{lipman2023flow,
  author       = {Yaron Lipman and
                  Ricky T. Q. Chen and
                  Heli Ben{-}Hamu and
                  Maximilian Nickel and
                  Matthew Le},
  title        = {Flow {M}atching for {G}enerative {M}odeling},
  booktitle    = {International Conference on Learning Representations},
  year         = {2023},
  url          = {https://openreview.net/forum?id=PqvMRDCJT9t},
  bibsource    = {dblp computer science bibliography, https://dblp.org}
}

@inproceedings{albergo2022building,
  author       = {Michael S. Albergo and
                  Eric Vanden{-}Eijnden},
  title        = {Building {N}ormalizing {F}lows with {S}tochastic {I}nterpolants},
  booktitle    = {International Conference on Learning Representations},
  year         = {2023},
  url          = {https://openreview.net/forum?id=li7qeBbCR1t},
  bibsource    = {dblp computer science bibliography, https://dblp.org}
}

@inproceedings{liu2022flow,
  author       = {Xingchao Liu and
                  Chengyue Gong and
                  Qiang Liu},
  title        = {Flow {S}traight and {F}ast: {L}earning to {G}enerate and {T}ransfer {D}ata with
                  {R}ectified {F}low},
  booktitle    = {International Conference on Learning Representations},
  year         = {2023},
  url          = {https://openreview.net/forum?id=XVjTT1nw5z},
  bibsource    = {dblp computer science bibliography, https://dblp.org}
}

@inproceedings{pooladian2023multisample,
  author       = {Aram{-}Alexandre Pooladian and
                  Heli Ben{-}Hamu and
                  Carles Domingo{-}Enrich and
                  Brandon Amos and
                  Yaron Lipman and
                  Ricky T. Q. Chen},
  title        = {Multisample {F}low {M}atching: {S}traightening {F}lows with {M}inibatch {C}ouplings},
  booktitle    = {International Conference on Machine Learning},
  year         = {2023},
  url          = {https://proceedings.mlr.press/v202/pooladian23a.html},
  bibsource    = {dblp computer science bibliography, https://dblp.org}
}

@inproceedings{kornilov2024optimal,
  author       = {Nikita Maksimovich Kornilov and
                  Petr Mokrov and
                  Alexander Gasnikov and
                  Alexander Korotin},
  title        = {Optimal {F}low {M}atching: {L}earning {S}traight {T}rajectories in {J}ust {O}ne
                  {S}tep},
  booktitle    = {Advances in Neural Information Processing Systems},
  year         = {2024},
  url          = {http://papers.nips.cc/paper\_files/paper/2024/hash/bc8f76d9caadd48f77025b1c889d2e2d-Abstract-Conference.html},
  bibsource    = {dblp computer science bibliography, https://dblp.org}
}

@article{tong2024improving,
  author       = {Alexander Tong and
                  Kilian Fatras and
                  Nikolay Malkin and
                  Guillaume Huguet and
                  Yanlei Zhang and
                  Jarrid Rector{-}Brooks and
                  Guy Wolf and
                  Yoshua Bengio},
  title        = {Improving and generalizing flow-based generative models with minibatch
                  optimal transport},
  journal      = {Transactions in Machine Learning Research},
  year         = {2024},
  url          = {https://openreview.net/forum?id=CD9Snc73AW},
  bibsource    = {dblp computer science bibliography, https://dblp.org}
}

@article{benton2024error,
  author       = {Joe Benton and
                  George Deligiannidis and
                  Arnaud Doucet},
  title        = {Error {B}ounds for {F}low {M}atching {M}ethods},
  journal      = {Transactions in Machine Learning Research},
  year         = {2024},
  url          = {https://openreview.net/forum?id=uqQPyWFDhY},
  bibsource    = {dblp computer science bibliography, https://dblp.org}
}

@inproceedings{kollovieh2024flow,
  author       = {Marcel Kollovieh and
                  Marten Lienen and
                  David L{\"{u}}dke and
                  Leo Schwinn and
                  Stephan G{\"{u}}nnemann},
  title        = {Flow {M}atching with {G}aussian {P}rocess {P}riors for {P}robabilistic {T}ime
                  {S}eries {F}orecasting},
  booktitle    = {International Conference on Learning Representations},
  year         = {2025},
  url          = {https://openreview.net/forum?id=uxVBbSlKQ4},
  bibsource    = {dblp computer science bibliography, https://dblp.org}
}

@inproceedings{zhang2024flow,
  author       = {Yasi Zhang and
                  Peiyu Yu and
                  Yaxuan Zhu and
                  Yingshan Chang and
                  Feng Gao and
                  Ying Nian Wu and
                  Oscar Leong},
  title        = {Flow {P}riors for {L}inear {I}nverse {P}roblems via {I}terative {C}orrupted {T}rajectory
                  Matching},
  booktitle    = {Advances in Neural Information Processing Systems},
  year         = {2024},
  url          = {http://papers.nips.cc/paper\_files/paper/2024/hash/698570ae5ec88e07e9f4547f831b4593-Abstract-Conference.html},
  bibsource    = {dblp computer science bibliography, https://dblp.org}
}

@inproceedings{ma2024sit,
  author       = {Nanye Ma and
                  Mark Goldstein and
                  Michael S. Albergo and
                  Nicholas M. Boffi and
                  Eric Vanden{-}Eijnden and
                  Saining Xie},
  title        = {Si{T}: {E}xploring {F}low and {D}iffusion-{B}ased {G}enerative {M}odels with {S}calable
                  {I}nterpolant {T}ransformers},
  booktitle    = {European Conference on Computer Vision},
  year         = {2024},
  url          = {https://doi.org/10.1007/978-3-031-72980-5\_2},
  doi          = {10.1007/978-3-031-72980-5\_2},
  bibsource    = {dblp computer science bibliography, https://dblp.org}
}

@inproceedings{wang2025inrflow,
  author       = {Yuyang Wang and
                  Anurag Ranjan and
                  Joshua M. Susskind and
                  Miguel {\'{A}}ngel Bautista},
  title        = {{INRF}low: {F}low {M}atching for {INR}s in {A}mbient {S}pace},
  booktitle    = {International Conference on Machine Learning},
  year         = {2025},
  url          = {https://openreview.net/forum?id=qIzYWidWZH},
  bibsource    = {dblp computer science bibliography, https://dblp.org}
}

@inproceedings{tancik2020fourier,
  author       = {Matthew Tancik and
                  Pratul P. Srinivasan and
                  Ben Mildenhall and
                  Sara Fridovich{-}Keil and
                  Nithin Raghavan and
                  Utkarsh Singhal and
                  Ravi Ramamoorthi and
                  Jonathan T. Barron and
                  Ren Ng},
  title        = {Fourier {F}eatures {L}et {N}etworks {L}earn {H}igh {F}requency {F}unctions in {L}ow
                  {D}imensional {D}omains},
  booktitle    = {Advances in Neural Information Processing Systems},
  year         = {2020},
  url          = {https://proceedings.neurips.cc/paper/2020/hash/55053683268957697aa39fba6f231c68-Abstract.html},
  bibsource    = {dblp computer science bibliography, https://dblp.org}
}

@inproceedings{sitzmann2020implicit,
  author       = {Vincent Sitzmann and
                  Julien N. P. Martel and
                  Alexander W. Bergman and
                  David B. Lindell and
                  Gordon Wetzstein},
  title        = {Implicit {N}eural {R}epresentations with {P}eriodic {A}ctivation {F}unctions},
  booktitle    = {Advances in Neural Information Processing Systems},
  year         = {2020},
  url          = {https://proceedings.neurips.cc/paper/2020/hash/53c04118df112c13a8c34b38343b9c10-Abstract.html},
  bibsource    = {dblp computer science bibliography, https://dblp.org}
}

@inproceedings{rahimi2007random,
  author       = {Ali Rahimi and
                  Benjamin Recht},
  title        = {Random {F}eatures for {L}arge-{S}cale {K}ernel {M}achines},
  booktitle    = {Advances in Neural Information Processing Systems},
  year         = {2007},
  url          = {https://proceedings.neurips.cc/paper/2007/hash/013a006f03dbc5392effeb8f18fda755-Abstract.html},
  bibsource    = {dblp computer science bibliography, https://dblp.org}
}

@article{tachella2026self,
  author={Tachella, Juli{\'a}n and 
          Davies, Mike},
  title={Self-{S}upervised {L}earning from {N}oisy and {I}ncomplete {D}ata},
  journal      = {arXiv preprint},
  year         = {2026},
  url          = {http://arxiv.org/abs/2601.03244},
}

@inproceedings{tancik2021learned,
  author       = {Matthew Tancik and
                  Ben Mildenhall and
                  Terrance Wang and
                  Divi Schmidt and
                  Pratul P. Srinivasan and
                  Jonathan T. Barron and
                  Ren Ng},
  title        = {Learned {I}nitializations for {O}ptimizing {C}oordinate-{B}ased {N}eural {R}epresentations},
  booktitle    = {Conference on Computer Vision and Pattern Recognition},
  year         = {2021},
  url          = {https://openaccess.thecvf.com/content/CVPR2021/html/Tancik\_Learned\_Initializations\_for\_Optimizing\_Coordinate-Based\_Neural\_Representations\_CVPR\_2021\_paper.html},
  doi          = {10.1109/CVPR46437.2021.00287},
  bibsource    = {dblp computer science bibliography, https://dblp.org}
}

@inproceedings{lee2021meta,
  author       = {Jaeho Lee and
                  Jihoon Tack and
                  Namhoon Lee and
                  Jinwoo Shin},
  title        = {Meta-{L}earning {S}parse {I}mplicit {N}eural {R}epresentations},
  booktitle    = {Advances in Neural Information Processing Systems},
  year         = {2021},
  url          = {https://proceedings.neurips.cc/paper/2021/hash/61b1fb3f59e28c67f3925f3c79be81a1-Abstract.html},
  bibsource    = {dblp computer science bibliography, https://dblp.org}
}

@book{williams2006gaussian,
  author       = {Carl Edward Rasmussen and
                  Christopher K. I. Williams},
  title        = {Gaussian processes for machine learning},
  series       = {Adaptive computation and machine learning},
  publisher    = {{MIT} Press},
  year         = {2006},
  url          = {https://www.worldcat.org/oclc/61285753},
  bibsource    = {dblp computer science bibliography, https://dblp.org}
}

@article{bochner2005harmonic,
  title={Harmonic {A}nalysis and the {T}heory of {P}robability},
  author={J. L. B. Cooper and Salomon Bochner},
  journal={The Mathematical Gazette},
  year={1957},
  url={https://api.semanticscholar.org/CorpusID:121662867}
}

@article{rudin2017fourier,
  title={Fourier {A}nalysis on {G}roups},
  author={S. E. Puckette and Walter Rudin},
  journal={American Mathematical Monthly},
  year={1965},
  url={https://api.semanticscholar.org/CorpusID:120596714}
}

@inproceedings{capone2023sharp,
  author       = {Alexandre Capone and
                  Sandra Hirche and
                  Geoff Pleiss},
  title        = {Sharp {C}alibrated {G}aussian {P}rocesses},
  booktitle    = {Advances in Neural Information Processing Systems},
  year         = {2023},
  url          = {http://papers.nips.cc/paper\_files/paper/2023/hash/7319b7561ffe5e2f6419acd4a2f52d6b-Abstract-Conference.html},
  bibsource    = {dblp computer science bibliography, https://dblp.org}
}

@article{christianson2023traditional,
  title={Traditional kriging versus modern {G}aussian processes for large-scale mining data},
  author={Christianson, Ryan B and Pollyea, Ryan M and Gramacy, Robert B},
  journal={Statistical Analysis and Data Mining},
  volume={16},
  number={5},
  pages={488--506},
  year={2023},
  url={https://doi.org/10.1002/sam.11635}
}

@inproceedings{damianou2013deep,
  author       = {Andreas C. Damianou and
                  Neil D. Lawrence},
  title        = {Deep {G}aussian {P}rocesses},
  booktitle    = {International Conference on Artificial
                  Intelligence and Statistics},
  year         = {2013},
  url          = {http://proceedings.mlr.press/v31/damianou13a.html},
  bibsource    = {dblp computer science bibliography, https://dblp.org}
}

@inproceedings{dheur2023large,
  author       = {Victor Dheur and
                  Souhaib Ben Taieb},
  title        = {A {L}arge-{S}cale {S}tudy of {P}robabilistic {C}alibration in {N}eural {N}etwork
                  {R}egression},
  booktitle    = {International Conference on Machine Learning},
  year         = {2023},
  url          = {https://proceedings.mlr.press/v202/dheur23a.html},
  bibsource    = {dblp computer science bibliography, https://dblp.org}
}

@article{zhou2021estimating,
  author       = {Tianhui Zhou and
                  Yitong Li and
                  Yuan Wu and
                  David E. Carlson},
  title        = {Estimating {U}ncertainty {I}ntervals from {C}ollaborating {N}etworks},
  journal      = {Journal of Machine Learning Research},
  year         = {2021},
  url          = {https://jmlr.org/papers/v22/20-1100.html},
  bibsource    = {dblp computer science bibliography, https://dblp.org}
}

@article{jones2014seismic,
  title={Seismic imaging in and around salt bodies},
  author={Jones, Ian F and Davison, Ian},
  journal={Interpretation},
  year={2014},
  publisher={Society of Exploration Geophysicists and American Association of Petroleum~…},
  url={https://doi.org/10.1190/int-2014-0033.1}
}

@article{ahmed2025evolution,
  title={The {E}volution of {S}eismic {T}omography in {E}arth {S}ciences—{A}dvancements, {L}imitations, and {I}ts {AI}-{E}nabled {F}uture ({A} {C}ritical {R}eview)},
  author={Ahmed, Shaheen Mohammed Saleh and G{\"u}neyli, Hakan},
  journal={Earthquake Research Advances},
  pages={100425},
  year={2025},
  publisher={Elsevier},
  url={https://doi.org/10.1016/j.eqrea.2025.100425}
}

@article{mandelli2019interpolation,
  author       = {Sara Mandelli and
                  Vincenzo Lipari and
                  Paolo Bestagini and
                  Stefano Tubaro},
  title        = {Interpolation and {D}enoising of {S}eismic {D}ata using {C}onvolutional {N}eural
                  {N}etworks},
  journal      = {arXiv preprint},
  year         = {2019},
  url          = {http://arxiv.org/abs/1901.07927},
  eprinttype    = {arXiv},
  eprint       = {1901.07927},
  bibsource    = {dblp computer science bibliography, https://dblp.org}
}

@article{gui2024survey,
  author       = {Jie Gui and
                  Tuo Chen and
                  Jing Zhang and
                  Qiong Cao and
                  Zhenan Sun and
                  Hao Luo and
                  Dacheng Tao},
  title        = {A {S}urvey on {S}elf-Supervised {L}earning: {A}lgorithms, {A}pplications, and
                  {F}uture {T}rends},
  journal      = {Transactions on Pattern Analysis and Machine Intelligence},
  year         = {2024},
  url          = {https://doi.org/10.1109/TPAMI.2024.3415112},
  doi          = {10.1109/TPAMI.2024.3415112},
  bibsource    = {dblp computer science bibliography, https://dblp.org}
}

@inproceedings{kingma2014adam,
  author       = {Diederik P. Kingma and
                  Jimmy Ba},
  title        = {Adam: {A} {M}ethod for {S}tochastic {O}ptimization},
  booktitle    = {International Conference on Learning Representations},
  year         = {2015},
  url          = {http://arxiv.org/abs/1412.6980},
  bibsource    = {dblp computer science bibliography, https://dblp.org}
}

@article{schweiger1999optical,
  title={Optical tomographic reconstruction in a complex head model using a priori region boundary information},
  author={Schweiger, M and Arridge, SR},
  journal={Physics in medicine \& biology},
  year={1999},
  publisher={IOP Publishing},
  url = {https://doi.org/10.1088/0031-9155/44/11/302}
}

@inproceedings{ronneberger2015u,
  author       = {Olaf Ronneberger and
                  Philipp Fischer and
                  Thomas Brox},
  title        = {U-{N}et: {C}onvolutional {N}etworks for {B}iomedical {I}mage {S}egmentation},
  booktitle    = {Medical Image Computing and Computer-Assisted Intervention},
  year         = {2015},
  url          = {https://doi.org/10.1007/978-3-319-24574-4\_28},
  doi          = {10.1007/978-3-319-24574-4\_28},
  bibsource    = {dblp computer science bibliography, https://dblp.org}
}

@article{lions2024transport,
  title={Transport equations and flows with one-sided {L}ipschitz velocity fields},
  author={Lions, Pierre-Louis and Seeger, Benjamin},
  journal={Archive for Rational Mechanics and Analysis},
  year={2024},
  publisher={Springer},
  url={https://link.springer.com/article/10.1007/s00205-024-02029-0}
}

@inproceedings{he2016deep,
  author       = {Kaiming He and
                  Xiangyu Zhang and
                  Shaoqing Ren and
                  Jian Sun},
  title        = {Deep {R}esidual {L}earning for {I}mage {R}ecognition},
  booktitle    = {Conference on Computer Vision and Pattern Recognition},
  year         = {2016},
  url          = {https://doi.org/10.1109/CVPR.2016.90},
  doi          = {10.1109/CVPR.2016.90},
  bibsource    = {dblp computer science bibliography, https://dblp.org}
}

@inproceedings{vaswani2017attention,
  author       = {Ashish Vaswani and
                  Noam Shazeer and
                  Niki Parmar and
                  Jakob Uszkoreit and
                  Llion Jones and
                  Aidan N. Gomez and
                  Lukasz Kaiser and
                  Illia Polosukhin},
  title        = {Attention is {A}ll you {N}eed},
  booktitle    = {Advances in Neural Information Processing Systems},
  year         = {2017},
  url          = {https://proceedings.neurips.cc/paper/2017/hash/3f5ee243547dee91fbd053c1c4a845aa-Abstract.html},
  bibsource    = {dblp computer science bibliography, https://dblp.org}
}

@article{elfwing2018sigmoid,
  author       = {Stefan Elfwing and
                  Eiji Uchibe and
                  Kenji Doya},
  title        = {Sigmoid-weighted linear units for neural network function approximation
                  in reinforcement learning},
  journal      = {Neural Networks},
  year         = {2018},
  url          = {https://doi.org/10.1016/j.neunet.2017.12.012},
  doi          = {10.1016/J.NEUNET.2017.12.012},
  bibsource    = {dblp computer science bibliography, https://dblp.org}
}

@article{polyak1992acceleration,
  title={Acceleration of stochastic approximation by averaging},
  author={Polyak, Boris T and Juditsky, Anatoli B},
  journal={SIAM journal on control and optimization},
  year={1992},
  publisher={SIAM},
url={https://doi.org/10.1137/0330046}
}

@inproceedings{paszke2019pytorch,
  author       = {Adam Paszke and
                  Sam Gross and
                  Francisco Massa and
                  Adam Lerer and
                  James Bradbury and
                  Gregory Chanan and
                  Trevor Killeen and
                  Zeming Lin and
                  Natalia Gimelshein and
                  Luca Antiga and
                  others
                 },
  title        = {Py{T}orch: {A}n {I}mperative {S}tyle, {H}igh-{P}erformance {D}eep {L}earning {L}ibrary},
  booktitle    = {Advances in Neural Information Processing Systems},
  year         = {2019},
  url          = {https://proceedings.neurips.cc/paper/2019/hash/bdbca288fee7f92f2bfa9f7012727740-Abstract.html},
  bibsource    = {dblp computer science bibliography, https://dblp.org}
}

@inproceedings{gardner2018gpytorch,
  title={{GP}y{T}orch: {B}lackbox {M}atrix-{M}atrix {G}aussian {P}rocess {I}nference with {GPU} {A}cceleration},
  author={Gardner, Jacob R and Pleiss, Geoff and Bindel, David and Weinberger, Kilian Q and Wilson, Andrew Gordon},
  booktitle={Advances in Neural Information Processing Systems},
  year={2018},
  url={https://papers.nips.cc/paper_files/paper/2018/hash/27e8e17134dd7083b050476733207ea1-Abstract.html}
}

@inproceedings{Timofte2017CVPRWorkshops,
  author       = {Timofte, Radu and 
                  Agustsson, Eirikur and 
                  Van Gool, Luc and 
                  Yang, Ming-Hsuan and 
                  Zhang, Lei},
  title        = {{NTIRE} 2017 {C}hallenge on {S}ingle {I}mage {S}uper-{R}esolution: {M}ethods and
                  {R}esults},
  booktitle    = {Conference on Computer Vision and Pattern Recognition
                  Workshops},
  year         = {2017},
  url          = {https://doi.org/10.1109/CVPRW.2017.149},
  doi          = {10.1109/CVPRW.2017.149},
  bibsource    = {dblp computer science bibliography, https://dblp.org}
}

@article{merrifield2022synthetic,
  title={Synthetic seismic data for training deep learning networks},
  author={Merrifield, Tom P and Griffith, Donald P and Zamanian, S Ahmad and Gesbert, Stephane and Sen, Satyakee and De La Torre Guzman, Jorge and Potter, R David and Kuehl, Henning},
  journal={Interpretation},
  year={2022},
  publisher={Society of Exploration Geophysicists and American Association of Petroleum~…},
  url={https://doi.org/10.1190/int-2021-0193.1}
}

@article{silva2019netherlands,
  author       = {Reinaldo Mozart Silva and
                  La{\'{\i}}s Baroni and
                  Rodrigo S. Ferreira and
                  Daniel Civitarese and
                  Daniela Szwarcman and
                  Emilio Vital Brazil},
  title        = {Netherlands {D}ataset: {A} {N}ew {P}ublic {D}ataset for {M}achine {L}earning in
                  {S}eismic {I}nterpretation},
  journal      = {arXiv preprint},
  year         = {2019},
  url          = {http://arxiv.org/abs/1904.00770},
  eprinttype    = {arXiv},
  eprint       = {1904.00770},
  bibsource    = {dblp computer science bibliography, https://dblp.org}
}
\bibliographystyle{icml2026}

%%%%%%%%%%%%%%%%%%%%%%%%%%%%%%%%%%%%%%%%%%%%%%%%%%%%%%%%%%%%%%%%%%%%%%%%%%%%%%%
%%%%%%%%%%%%%%%%%%%%%%%%%%%%%%%%%%%%%%%%%%%%%%%%%%%%%%%%%%%%%%%%%%%%%%%%%%%%%%%
% APPENDIX
%%%%%%%%%%%%%%%%%%%%%%%%%%%%%%%%%%%%%%%%%%%%%%%%%%%%%%%%%%%%%%%%%%%%%%%%%%%%%%%
%%%%%%%%%%%%%%%%%%%%%%%%%%%%%%%%%%%%%%%%%%%%%%%%%%%%%%%%%%%%%%%%%%%%%%%%%%%%%%%
\newpage
\appendix
\onecolumn

\section{Proofs}
\label{ap:proofs}

\begin{proof} Proposition \ref{prop:unbiased_post}.\\
    Let $x\in\bar{X}$ and $\omega\in\Omega$.
    We have:
    \begin{equation}
        {\hat{Z}^{\text{post}}}(x, \omega) \triangleq T_\theta\left(x, {\xi^{\text{post}}}(x, \omega)\right)
    \end{equation}
    We define ${\xi^{\text{post}}}$ as a posterior Gaussian Process, conditioned on $\left(\bar{X}, T_\theta^{-1}(\bar{X}, Z(\bar{X}, \omega))\right)$, defined in Equation \ref{eq:gp_post_mean} in the noiseless case.
    Therefore, since the variance at conditioning positions is zero we have:
    \begin{equation}
        {\xi^{\text{post}}}(x, \omega) = T_\theta^{-1}(x, Z(x, \omega_0))
    \end{equation}
    Then,
    \begin{equation}
        \begin{aligned}
            {\hat{Z}^{\text{post}}}(x, \omega) &= T_\theta\left(x, T_\theta^{-1}(x, Z(x, \omega_0))\right)\\
                                   &=Z(x, \omega_0)
        \end{aligned}
    \end{equation}

    Finally, ${\hat{Z}^{\text{post}}}$ is unbiased.
\end{proof}
\vspace{1cm}

\begin{proof} Lemma \ref{lemma:lipschitz} \cite{lions2024transport}.\\
    Let $z^i_t$ and $z^j_t$ be two solutions at time $t$ of the ODE with initial conditions $z^i_0$ and $z^j_0$. Let
    \begin{equation}
        \delta_t \triangleq \| z^i_t - z^j_t \|
    \end{equation}
    Using the triangle inequality for the differentiation of the norm, we have
    \begin{equation}
        \begin{aligned}
            \frac{d}{dt} \delta_t &= \frac{d}{dt} \| z^i_t - z^j_t \| \\
            &\leq \| v_t(z^i_t) - v_t(z^j_t) \|
        \end{aligned}
    \end{equation}
    Then, by applying the Lipschitz constraint on $v_t$,
    \begin{equation}
        \begin{aligned}
            \frac{d}{dt} \delta_t &\leq L_t \| z^i_t - z^j_t \| \\
            &\leq L_t \delta_t        
        \end{aligned}
    \end{equation}
    Finally, by Grönwall's inequality,
    \begin{equation}
        \delta_t \leq \delta_0 \exp\!\left( \int_0^t L_s \, ds \right)
    \end{equation}

    Evaluating at $t = 1$ gives
    \begin{equation}
        \| T(z^i_0) - T(z^j_0) \| = \delta_1 \leq \delta_0 \exp\!\left( \int_0^1 L_s \, ds \right)
    \end{equation}
    Therefore, $T$ is Lipschitz with constant $L \leq \exp\!\left( \int_0^1 L_t \, dt \right)$.
\end{proof}

\vspace{1cm}

\begin{proof} Theorem \ref{th:regularity}.\\
    Let $T_\theta$ be the transport map defined as in Lemma~\ref{lemma:lipschitz}. By assumption, for all $t \in [0,1]$ and $x \in \mathcal{X}$, the vector field $v_\theta$ is continuous in $(x,t)$. Then, $T_\theta$ is continuous in $x$.

    Consider the stochastic process $\xi : \mathcal{X}\times\Omega \to \mathbb{R}^m$ and define $\hat{Z}(x, \omega) \triangleq T_\theta(\xi(x, \omega), x)$. We check the forward implication of each property:

    \paragraph{Almost sure continuity of sample paths.}  
    Suppose that $\forall x\in\mathcal{X}, \forall t\in[0,1],$ $v_\theta$ is continuous in $z_t(x)$. Then $\forall x\in\mathcal{X}, T_\theta$ is continuous in $\xi(x)$. If $\xi$ has almost surely continuous sample paths, then for almost every $\omega$, the map $x \mapsto \xi(x,\omega)$ is continuous. Since $T_\theta$ is continuous in $(z,x)$, the composition $x \mapsto T_\theta(\xi(x,\omega),x)$ is also continuous. Hence, $\hat{Z}$ has almost surely continuous sample paths. The equivalence can be established in the same manner using the inverse transport $T_\theta^{-1}$.

    \paragraph{Mean-square continuity.}  

     Suppose that $\forall x\in\mathcal{X}, \forall t\in[0,1],$ $v_\theta$ is Lipschitz continuous in $z_t(x)$. Then from Lemma~\ref{lemma:lipschitz}, $T_\theta$ is Lipschitz continuous in $z$.

    Assume $\xi$ is mean-square continuous, i.e.,
    \begin{equation}
        \lim_{x' \to x}\mathbb{E}\big[\|\xi(x) - \xi(x')\|^2\big] = 0
    \end{equation}
    Since $T_\theta$ is Lipschitz in $z$ with constant $L$ (Lemma~\ref{lemma:lipschitz}), we have
    \begin{equation}
        \begin{aligned}
            \|\hat{Z}(x) - \hat{Z}(x')\| &= \|T_\theta(\xi(x)) - T_\theta(\xi(x'))\| \\
            &\leq L \|\xi(x) - \xi(x')\|
        \end{aligned}
    \end{equation}
    
    Squaring and taking expectation gives
    \begin{equation}
        \begin{aligned} 
            \mathbb{E}\big[\|\hat{Z}(x) - \hat{Z}(x')\|^2\big] &\leq L^2 \mathbb{E}\big[\|\xi(x) - \xi(x')\|^2\big] \\
            &\xrightarrow[x' \to x]{} 0
        \end{aligned}
    \end{equation}
    
    so $\hat{Z}$ is mean-square continuous.

    The equivalence can be established in the same manner using the inverse transport $T_\theta^{-1}$.

    \paragraph{$C^k$ regularity of sample paths.}  
    Suppose that $\forall t\in[0,1], v_\theta(\cdot, \cdot, t) \in C^k(\mathbb{R}^m\times\mathcal{X}, \mathbb{R}^m)$. Then $T_\theta \in C^k(\mathbb{R}^m \times \mathcal{X}, \mathbb{R}^m)$. If $\xi$ has sample paths in $C^k(\mathcal{X}, \mathbb{R}^m)$, then for almost every $\omega$, the composition
    \begin{equation}
        x \mapsto T_\theta(\xi(x,\omega), x)
    \end{equation}
    
    is $C^k$ by the chain rule for multivariate functions. Hence, $\hat{Z}$ has sample paths in $C^k(\mathcal{X}, \mathbb{R}^m)$.

    The equivalence can be established in the same manner using the inverse transport $T_\theta^{-1}$.
\end{proof}
\vspace{1cm}

\begin{proof} Theorem \ref{th:moments}.\\
    Assume $T$ is $L$-Lipschitz. Let $x\in\mathcal{X}$. $\xi(x)$ is a random variable with finite moments up to order $k \geq 1$. Let $\mu(x) = \mathbb{E}[\xi(x)]$.

    \textbf{First inequality.}
    We have:
    \begin{equation}
        \begin{aligned}
            \left\| \mathbb{E}[T(\xi(x), x)] - T(\mu(x)) \right\| &= \left\| \mathbb{E}[T(\xi(x), x) - T(\mu(x))] \right\| \\
            &\leq \mathbb{E}\big[ \| T(\xi(x), x) - T(\mu(x)) \| \big]
        \end{aligned}
    \end{equation}
    Since $T$ is $L$-Lipschitz in $\xi(x, \omega)$, $\forall\omega\in\Omega$,
    \begin{equation}
        \| T(\xi(x, \omega), x) - T(\mu(x)) \| \leq L \| \xi(x, \omega) - \mu(x) \|
    \end{equation}
    Thus,
    \begin{equation}
        \left\| \mathbb{E}[T(\xi(x), x)] - T(\mu(x)) \right\| \leq L \, \mathbb{E}\big[ \| \xi(x) - \mu(x) \| \big]
    \end{equation}
    Applying Hölder's inequality with exponent $k$ gives
    \begin{equation}        
        \mathbb{E}\big[ \| \xi(x) - \mu(x) \| \big] \leq \left( \mathbb{E}\big[ \| \xi(x) - \mu(x) \|^k \big] \right)^{1/k}
    \end{equation}

    Therefore,
    \begin{equation}        
        \left\| \mathbb{E}[T(\xi(x), x)] - T(\mu(x)) \right\| \leq L \cdot \left( \mathbb{E}\big[ \| \xi(x) - \mu(x) \|^k \big] \right)^{1/k}
    \end{equation}

    \textbf{Second inequality.}
    Let $\omega\in\Omega$. We have:
    \begin{equation}
        \begin{aligned}
            \| T(\xi(x, \omega), x) - \mathbb{E}[T(\xi(x), x)] \| &= \| T(\xi(x, \omega), x) -  T(\mu(x)) +  T(\mu(x)) - \mathbb{E}[T(\xi(x), x)] \|\\
            &\leq \| T(\xi(x, \omega), x) -  T(\mu(x))\| +  \|T(\mu(x)) - \mathbb{E}[T(\xi(x), x)] \|
            % &\leq  \| T(\xi(x, \omega), x) - T(\mu(x)) \|
        \end{aligned}
    \end{equation}
    Raising to the power $k$ and using that $|a+b|^k \leq 2^{k-1} (|a|^k+|b|^k)$, we get
    \begin{equation}
            \| T(\xi(x, \omega), x) - \mathbb{E}[T(\xi(x), x)] \|^k \leq  2^{k-1}\left(\| T(\xi(x, \omega), x) - T(\mu(x)) \|^k + \|T(\mu(x)) - \mathbb{E}[T(\xi(x), x)] \|^k\right)
    \label{eq:proof_moments_1}
    \end{equation}
    Using Jensen's inequality, we have
    \begin{align}
        \|T(\mu(x)) - \mathbb{E}[T(\xi(x), x)] \|^k &= \|\mathbb{E}[T(\mu(x)) - T(\xi(x), x)] \|^k \\
        &\leq \left(\mathbb{E}[\|T(\mu(x)) - T(\xi(x), x)\|]\right)^k\\
        &\leq \mathbb{E}[\|T(\mu(x)) - T(\xi(x), x)\|^k]
    \end{align}

    Plugging this into Equation \ref{eq:proof_moments_1} and taking expectation yields

    \begin{equation}
         \mathbb{E}\big[ \| T(\xi(x, \omega), x) - \mathbb{E}[T(\xi(x), x)] \|^k\big] \leq  2^{k}\mathbb{E}\big[ \| T(\xi(x, \omega), x) - T(\mu(x)) \|^k\big]
    \end{equation}

    Finally, using the Lipschitz property,
    \begin{equation}
        \mathbb{E}\big[ \| T(\xi(x, \omega), x) - T(\mu(x)) \|^k\big] \leq (2L)^k \mathbb{E}\big[ \|\xi(x, \omega) - \mu(x) \|^k\big]
    \end{equation}

\end{proof}

\section{Properties of RP Flow}
\label{ap:theory_choices}

In this section, we illustrate the properties of the presented method, described in Section \ref{sec:properties}. For this purpose, we consider a 1D stochastic process and parametrize RP Flow with a ReLU MLP. More specifically, we illustrate that in this case the learned function converges implicitly to the right Gaussian Process in the source space, that Theorem \ref{th:regularity} allows us to model known discontinuities in the target process and that Theorem \ref{th:moments} leads to convergence rates for the moments of $\hat{Z}$.

\subsection{Implicit convergence at $t=0$}
\label{ap:impl_cvg}

\begin{figure}[!ht]
    \centering
    \begin{minipage}{0.33\textwidth}
    \centering
    \includegraphics[width=0.9\textwidth]{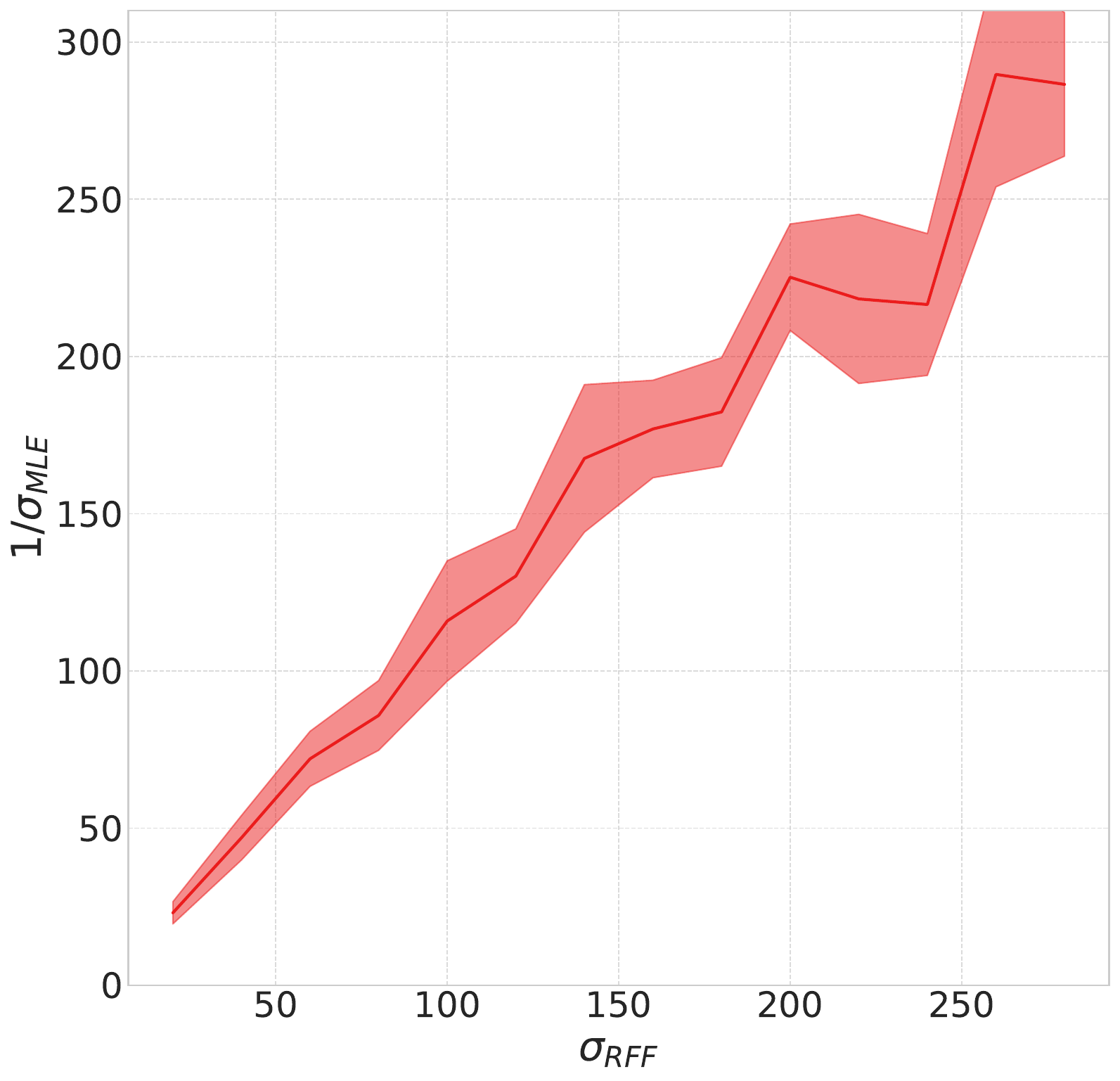}
    \end{minipage}
    \hfill
    \begin{minipage}{0.63\textwidth}
    \centering
    \begin{minipage}{0.48\textwidth}
        \centering
        \includegraphics[width=\textwidth]{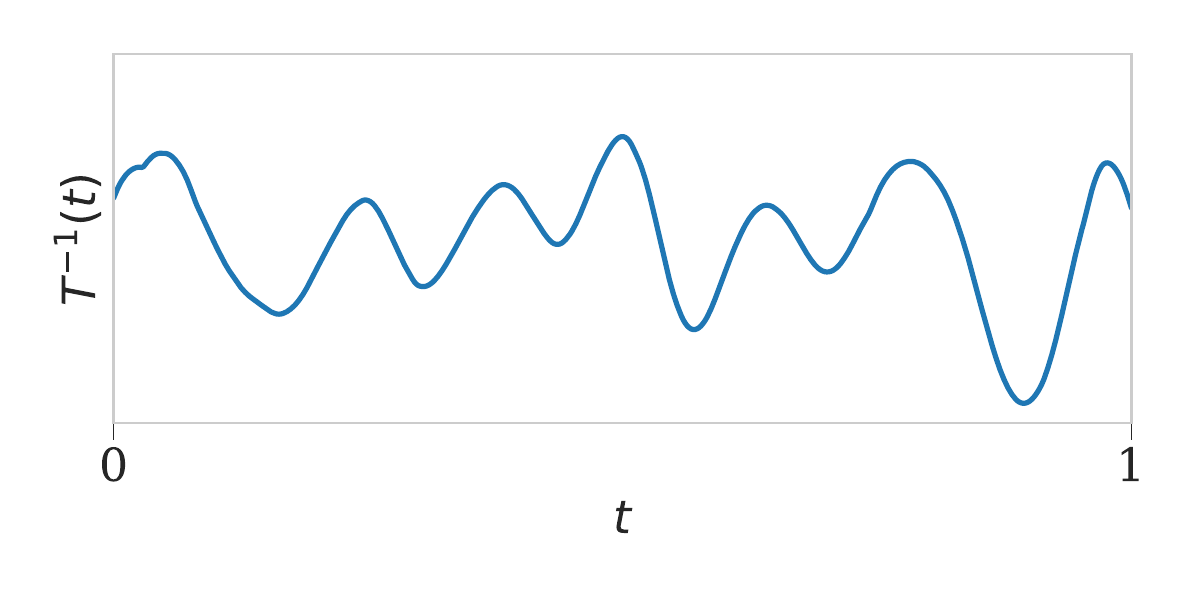}
    \end{minipage}
    \begin{minipage}{0.48\textwidth}
        \centering
        \includegraphics[width=\textwidth]{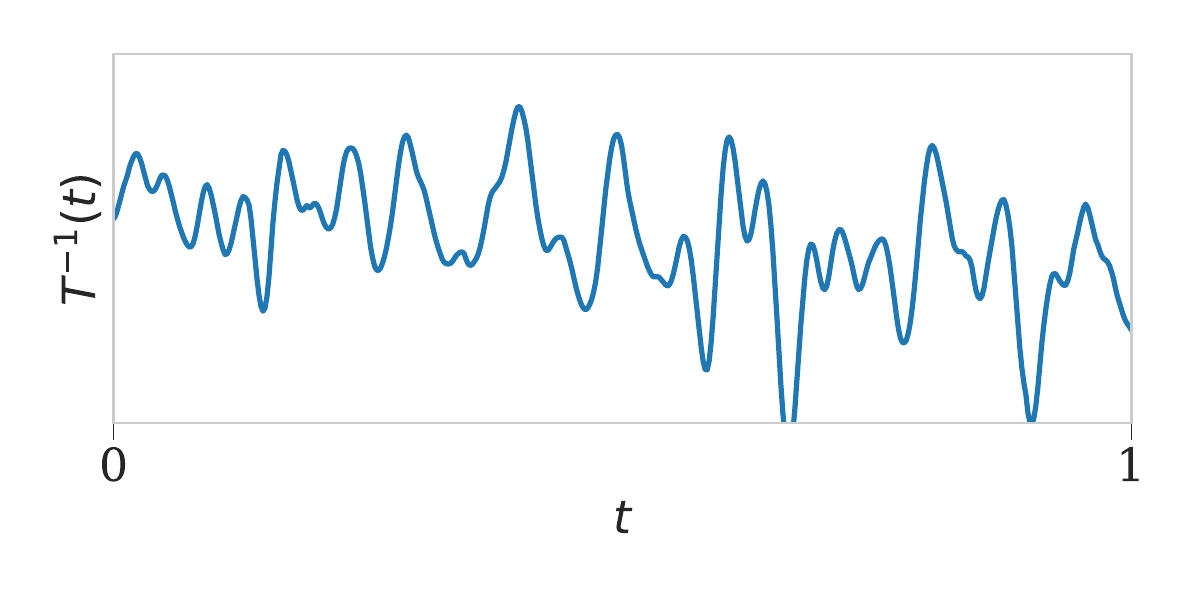}
    \end{minipage}
    \begin{minipage}{0.48\textwidth}
        \centering
        \includegraphics[width=\textwidth]{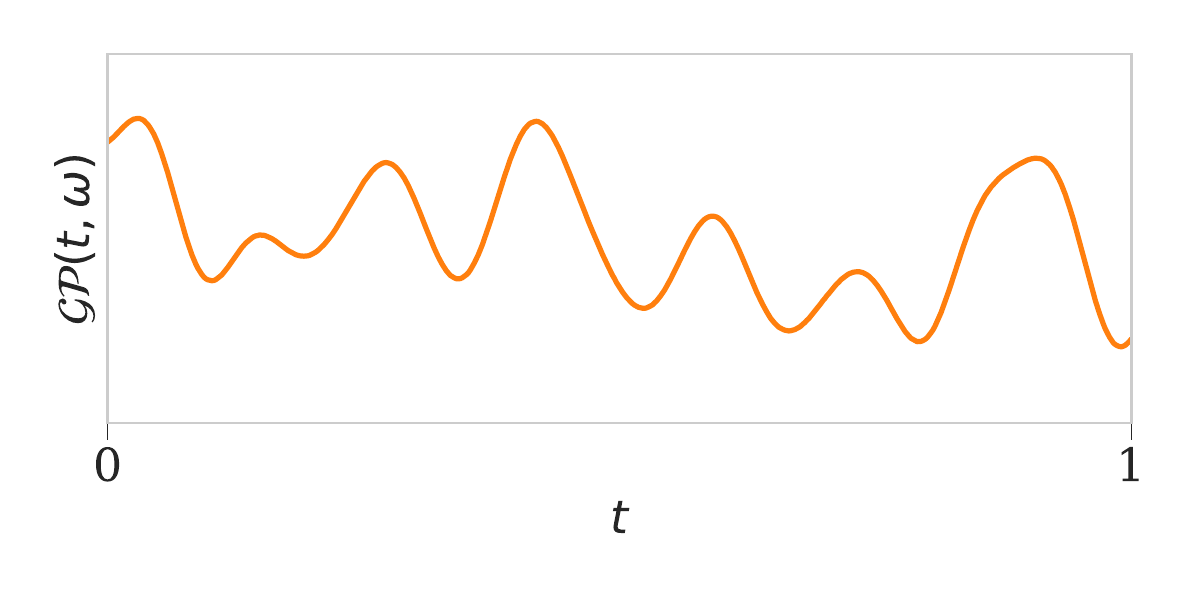}
    \end{minipage}
    \begin{minipage}{0.48\textwidth}
        \centering
        \includegraphics[width=\textwidth]{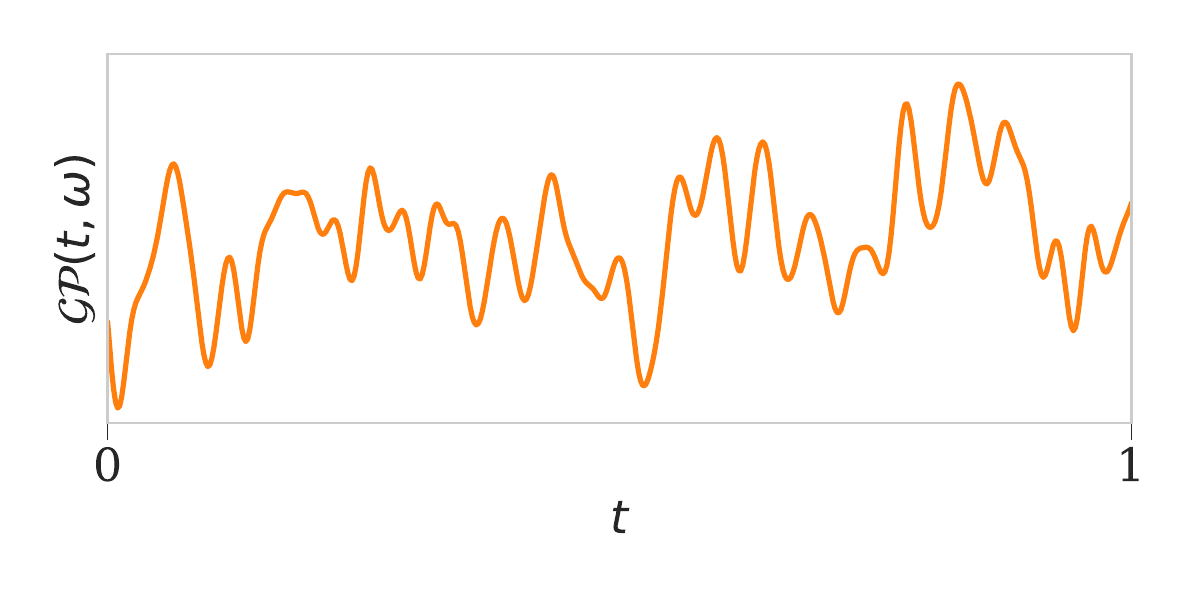}
    \end{minipage}
    \end{minipage}
    \caption{RP Flow converges to a sample from a Gaussian Process with Gaussian covariance and lengthscale $\sigma_{GP} = 1/\sigma_{RFF}$ in the case of a constant target process $Z$. Left figure shows the lengthscale $\sigma$ that maximize the likelihood of $T^{-1}(Z)$ to be a sample from $\mathcal{N}(0, \sigma)$, for different values of $\sigma_{RFF}$. Right figure displays samples from $T^{-1}(Z)$ and $\mathcal{N}(0, 1/\sigma_{RFF})$ for $\sigma_{RFF}=20$ and $\sigma_{RFF}=80$.}
    \label{fig:cvg_t0}
\end{figure}

In Section~\ref{sec:rpf_rff}, the source process is not explicitly provided to the model during training. Instead, the source samples $\xi(x)$ are drawn from a standard Gaussian distribution. However, by definition, the source space corresponds to a Gaussian process, and Bochner's theorem provides intuition regarding its covariance function. Since the position $x$ is introduced to the model through Random Fourier Features, we hypothesize that the source process should exhibit a covariance function associated with the position embedding $B$, as defined in Section~\ref{sec:inr}.

We validate that in a simple 1D case. For this purpose, we consider the case where the target process $Z$ is constant equal to zero: $\forall x\in\mathcal{X}, \forall\omega\in\Omega, Z(x,\omega)=0$. For different values of $\sigma_{RFF}$, we analyze $T_\theta^{-1}(Z)$ after training. Figure \ref{fig:cvg_t0} displays the lengthscale of a Gaussian covariance $K_{\sigma_{GP}}$ that maximize the likelihood of $T_\theta^{-1}(Z)$ to be drawn from $\mathcal{N}(0, K_{\sigma_{GP}})$ on the left as well as samples from $T^{-1}(Z)$ and $\mathcal{N}(0, 1/\sigma_{RFF})$ for different values of $\sigma_{RFF}$ on the right. The experiment is ran 64 times per lengthscale value across a wide range of $\sigma_{RFF}$ and systematically, the models converge to samples from a Gaussian process with the correct covariance.

\subsection{Modeling discontinuities}
\label{ap:discontinuities}

Generally, the parametrization of INRs by continuous neural networks bounds them to represent continuous signals. However, Theorem \ref{th:regularity} demonstrates that RP Flow transports the regularity of the source process in the target space. In particular, by designing a source process that exhibits a clear discontinuity (either with the almost-surely or the mean-square definition), we can model prior assumptions on the continuity of the target process to learn. 

In contrast to other experiments, where we modeled $\xi$ as a Gaussian process with almost-surely continuous sample paths, we now compare the performance of two design choices for $\xi$. The first model is trained and evaluated using a source process that is almost-surely continuous, while the second is trained using a source process that is almost-surely discontinuous at $t=0.5$. Both models aim to approximate a target process that is itself almost-surely discontinuous at $t=0.5$. The results are plotted in Figure \ref{fig:discontinuous}. At $t=0.5$, the first model produces a continuous representation of $Z$, which does not reflect the true behavior of the process being modeled. In contrast, the second model correctly exhibits a discontinuity at this point.

This property can be crucial in the modeling of inverse problems where regularity assumptions can be made on the target process such as Tomographic reconstruction with known object boundaries \cite{schweiger1999optical} or Seismic inversion with known body location \cite{jones2014seismic,ahmed2025evolution}.

{
\captionsetup[subfloat]{justification=centering}
\begin{figure}[!ht]
\centering
\subfloat[Almost-surely continuous source process.]{\includegraphics[width=0.35\textwidth]{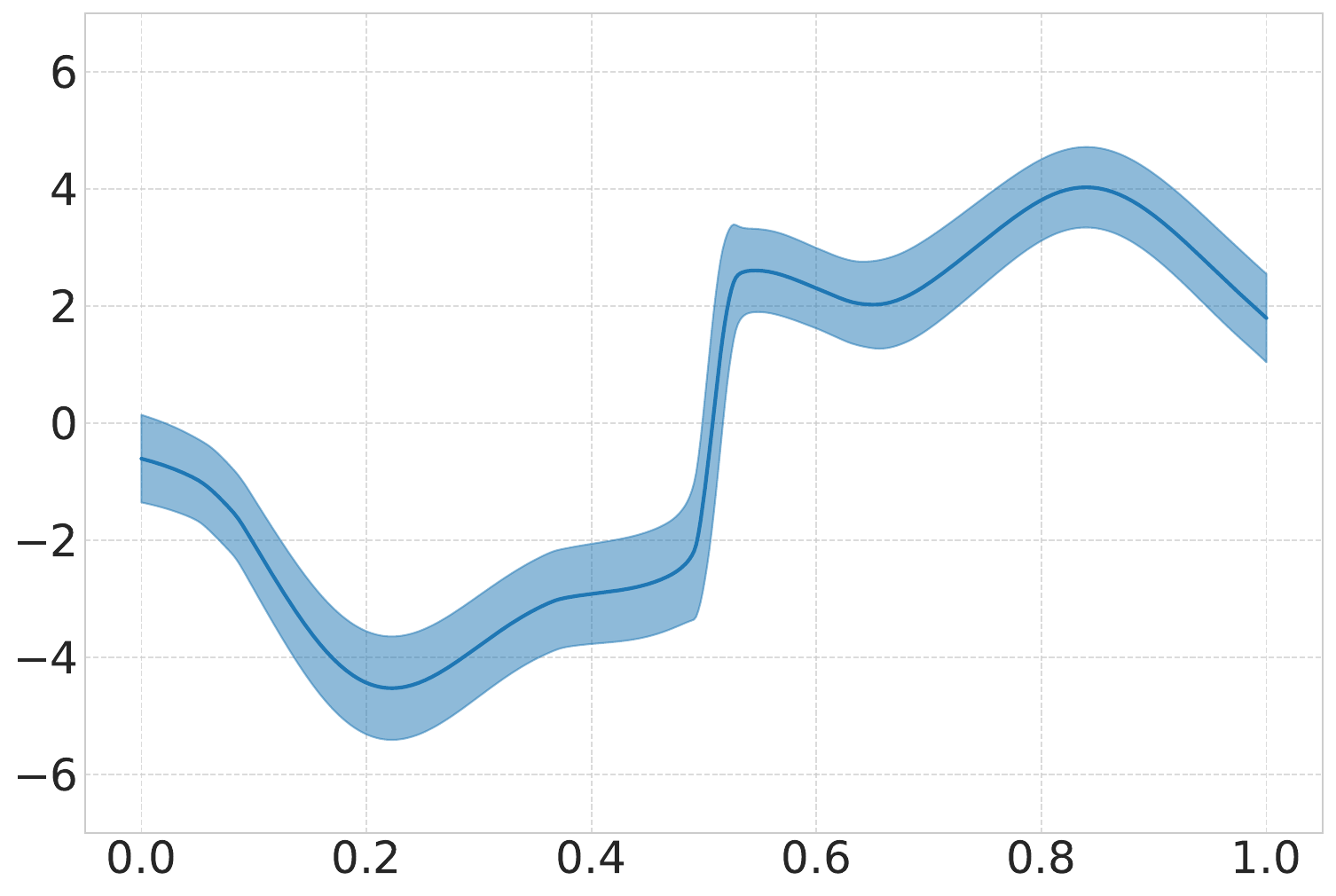}}%
\hspace{1cm}
\subfloat[Almost-surely discontinuity of the\\source process at $t=0.5$.]{\includegraphics[width=0.35\textwidth]{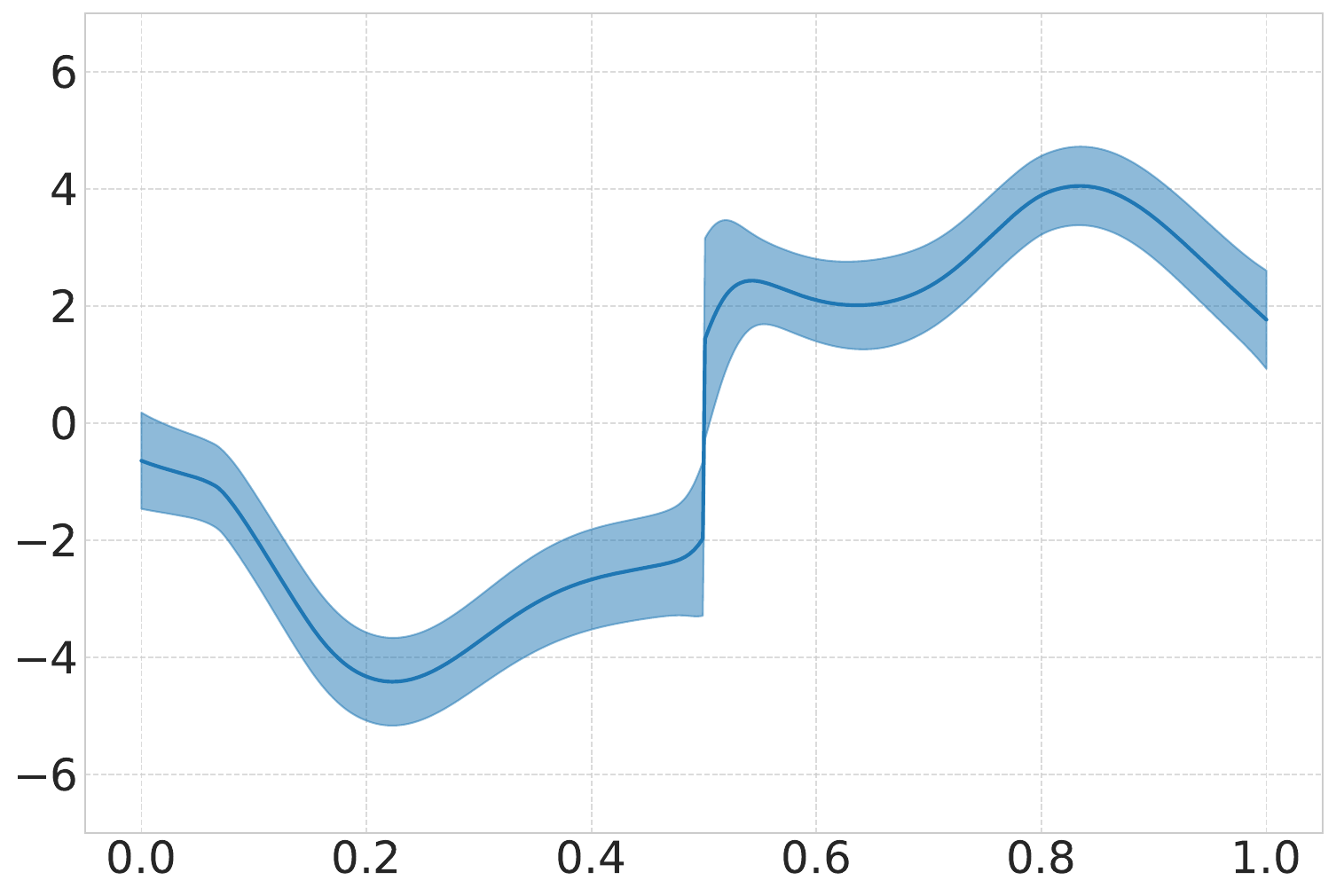}}%
\caption{RP Flow transports the regularity of the source process. In the case where the target process $Z$ exhibits a clear discontinuity at $x=0.5$, we parametrize RP Flow with an almost-surely continuous source process \textit{(a)}, and a source process that is almost-surely discontinuous at $t=0.5$ \textit{(b)}. The resulting predicted fields have the same regularity as the source parametrization, as ensured by Theorem \ref{th:regularity}.}
\label{fig:discontinuous}
\end{figure}
}

\subsection{Estimation of mean and higher-order moments}
\label{ap:montecarlo}
In this section, we examine the implications of Theorem~\ref{th:moments} for deriving a tail bound on the sample mean estimator, as stated in Proposition~\ref{prop:mc_bounds}.

\begin{proposition}[Sample mean Tail Bound]
\label{prop:mc_bounds}
Let $x\in\mathcal{X}$. Let $\xi(x)$ be a random variable in $\mathbb{R}^m$ with finite non-zero variance, and let $T$
be $L$-Lipschitz in $\xi(x)$, in the sense of Theorem~\ref{th:moments}.
Define
\begin{equation}
    \mu(x) \triangleq \mathbb{E}[T(\xi(x), x)]
    \qquad
    \hat{\mu}_N(x) \triangleq \frac1N\sum_{i=1}^N T(\xi_i(x), x)
\end{equation}

where $\xi_1(x),\dots,\xi_N(x)$ are i.i.d.\ samples from $\xi(x)$.
Then for all $t>0$,
\begin{equation}
\label{eq:mc_chebyshev_bound}
\mathbb{P}\!\left(\|\hat{\mu}_N(x) - \mu(x)\| > t\right)
\;\le\;
\frac{\mathrm{Var}(T(\xi(x), x))}{N\,t^2}
\;\le\;
\frac{4L^2\,\mathrm{Var}(\xi(x))}{N\,t^2}
\end{equation}
\end{proposition}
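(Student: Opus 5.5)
The argument combines Chebyshev's inequality for the sample mean with the second-moment bound of Theorem~\ref{th:moments} at order $k=2$.

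Throughout, $\mathrm{Var}$ of a random vector $Y\in\mathbb{R}^m$ means the total variance $\mathbb{E}\|Y-\mathbb{E}Y\|^2$, the trace of the covariance.

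\textbf{Step 1: variance of the sample mean.} Write $Y_i \triangleq T(\xi_i(x),x)$. These are i.i.d.\ copies of $Y=T(\xi(x),x)$. Since $\xi(x)$ has finite variance and $T$ is $L$-Lipschitz, Theorem~\ref{th:moments} with $k=2$ shows that $Y$ also has finite variance, so $\mu(x)$ is well defined. Expanding $\|\hat{\mu}_N(x)-\mu(x)\|^2 = N^{-2}\sum_{i,j}\langle Y_i-\mu, Y_j-\mu\rangle$, the cross terms have zero expectation by independence. This gives
\[
\mathbb{E}\|\hat{\mu}_N(x)-\mu(x)\|^2 = \mathrm{Var}(Y)/N .
\]

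\textbf{Step 2: Chebyshev.} Apply Markov's inequality to the nonnegative variable $\|\hat{\mu}_N(x)-\mu(x)\|^2$ at level $t^2$:
\[
\mathbb{P}\big(\|\hat{\mu}_N(x)-\mu(x)\|>t\big) \le \frac{\mathbb{E}\|\hat{\mu}_N(x)-\mu(x)\|^2}{t^2} = \frac{\mathrm{Var}(Y)}{N t^2}.
\]
This is the first inequality in \eqref{eq:mc_chebyshev_bound}.

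\textbf{Step 3: transfer to the source.} The second inequality of Theorem~\ref{th:moments} with $k=2$ reads
\[
\mathbb{E}\|T(\xi(x),x)-\mathbb{E}[T(\xi(x),x)]\|^2 \le (2L)^2\, \mathbb{E}\|\xi(x)-\mathbb{E}[\xi(x)]\|^2 ,
\]
that is, $\mathrm{Var}(Y)\le 4L^2\,\mathrm{Var}(\xi(x))$. Substituting this into Step 2 gives the second inequality.

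The only step needing care is the convention for $\mathrm{Var}$ in the vector case. It must be the trace of the covariance so that it matches the centered moment in Theorem~\ref{th:moments}. Beyond this, the argument is routine. The constant $4L^2$ is loose, since $\mathrm{Var}(Y)\le\mathbb{E}\|Y-T(\mu_\xi)\|^2\le L^2\mathrm{Var}(\xi(x))$, but the stated bound is the one inherited from the theorem.
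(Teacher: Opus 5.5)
Your proof is correct and follows the paper's route: Chebyshev's inequality for the sample mean, then the $k=2$ case of Theorem~\ref{th:moments} to pass from $\mathrm{Var}(T(\xi(x),x))$ to $4L^2\,\mathrm{Var}(\xi(x))$. You also supply details the paper leaves implicit: the $1/N$ variance reduction from independence, and reading $\mathrm{Var}$ as the trace of the covariance. Your remark that the constant can be sharpened to $L^2$ is also correct.
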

\begin{proof}
    This proposition is a direct application of Chebyshev's inequality extended from $T(\xi)$ to $\xi$ by Theorem \ref{th:moments}.
\end{proof}

This result illustrates the type of bounds on Monte Carlo estimators that can be extended from $\xi$ to $T(\xi)$ using Theorem \ref{th:moments}. It shows that the process generated by RP Flow admits a characterization in terms of ensemble sampling.
Similarly, the same type of bound can be derived for the sample moment estimators $\tilde{m}_{k,N}(x) \;\triangleq\; \frac1N\sum_{i=1}^N \big\|T(\xi_i(x), x)-\mu(x)\big\|^k$ but is harder to achieve in the case of the plug-in moment estimators $\hat{m}_{k,N}(x) \;\triangleq\; \frac1N\sum_{i=1}^N \big\|T(\xi_i(x), x)-\hat{\mu}_N(x)\big\|^k$.
While additional assumptions on $T(\xi)$, such as sub-Gaussian tail behavior, could yield sharper bounds and similar bounds for plug-in moment estimators, a full development of these refinements lies outside the scope of this work. Our aim here is simply to demonstrate that the method already enjoys robust properties in the general setting.

\subsection{Computational complexity}
\label{ap:complexity}

RP Flow requires solving an ODE to transport samples and constructing a posterior Gaussian Process in the source space. Both components are computationally expensive. In this section, we analyze the method’s complexity in terms of both temporal and spatial complexity in Table \ref{tab:complexity}, and we report the computation time required for each experiment in Table \ref{tab:compute_time}.

We consider a multivariate process with $n$ variables, evaluated in $N=N_{train}+N_{test}$ different locations. We also consider $k$ Euler steps to integrate the Flow Matching ODE. The temporal and spatial complexities of the posterior sampling method are reported in Table \ref{tab:complexity}. In this setting, the $n$ variables of the process are not assumed to be independent. The general computational cost of evaluating a posterior GP then scales as $\mathcal{O}((nN)^3)$ in space and $\mathcal{O}((nN)^2)$ in time. However, when sampling from the RP Flow posterior, the posterior GP is computed in the source space which is by construction independent across variables (e.g.\ channels of an image, depth slices of a seismic volume). This allows one to consider a separate posterior GP for each variable, thereby reducing the computational cost of posterior GP evaluation in the source space to $\mathcal{O}(nN^3)$ in space and $\mathcal{O}(nN^2)$ in time. Although the method still exhibits cubic memory scaling in space, the per-variable independence in the source space already represents a substantial computational improvement. For instance, for a spatial process with $n = 10^{3}$ variables (which can occur in multi-spectral imaging or seismic imaging), the computational cost of RP Flow is reduced by a factor of $10^{6}$.

\begin{table}[ht]
    \begin{center}
        \begin{small}
            \begin{sc}
                % \resizebox{0.5\textwidth}{!}{
                \begin{tabular}{lcc}
                    \toprule
                    & Temporal Complexity & Spatial Complexity \\
                    \midrule
                    GP Regression & $\mathcal{O}((nN)^3)$ & $\mathcal{O}((nN)^2)$\\
                    \midrule        
                    Reverse ODE integration & $\mathcal{O}(kN)$ & $\mathcal{O}(N)$\\
                    Posterior source GP & $\mathcal{O}(nN^3)$ & $\mathcal{O}(nN^2)$\\
                    Forward ODE integration & $\mathcal{O}(kN)$ & $\mathcal{O}(N)$\\
                    Total (posterior sampling) & $\mathcal{O}((k+nN)N^2)$ & $\mathcal{O}(nN^2)$\\                  
                    \bottomrule
                \end{tabular}
            \end{sc}
        \end{small}
    \end{center}
    \caption{RP Flow temporal and spatial complexity. The results are reported for a multivariate process with $n$ variables, evaluated in $N=N_{train}+N_{test}$ different locations using $k$ Euler steps to integrate the Flow Matching ODE. The general complexity of Gaussian Process regression as well as the three main steps used to sample from the posterior are displayed.}
    \label{tab:complexity}
\end{table}

We also report the empirical time required to compute the experiments presented in the main paper in Table \ref{tab:compute_time}. In this setting, we fix $n=3$ for the image regression task and $n=128$ for the seismic interpolation task. We consider $N=512^2$ for images, $N=513^2$ for seismic volumes and use $k=100$ Euler integration steps to integrate the ODE (both forward and backward).

Although training is fast in both settings, sampling becomes computationally expensive in the seismic interpolation case. The choice of $k = 100$ is made to reduce approximation errors arising from backward and forward ODE integration. However, $k$ can be reduced in practice, and we observe that $k = 10$ is sufficient to achieve good reconstruction quality, while reducing the overall computation time by a factor $10$. One could also consider different $k_{backward}$ and $k_{forward}$, as the backward integration of the ODE is harder but requires only to be done once. More generally, we observe that RP Flow has similar training times than the INRs while retaining the Gaussian Process computation times when sampling. This constitutes a computational bottleneck of our method, which could potentially be solved by considering sparse GP approximations or inducing points. However, such approximations are beyond the scope of this work. Consistent to our experiments, both the GP baselines and the GP component of \textsc{RP Flow} are assumed to be variable-wise independent in this comparison.

\begin{table}[ht]
    \begin{center}
        \begin{small}
            \begin{sc}
                % \resizebox{0.5\textwidth}{!}{
                \begin{tabular}{lcc}
                    \toprule
                    & Image regression & Seismic interpolation \\
                    \midrule
                    \textbf{RFF Network} \\
                    Training & 1:45$\pm$0:02 & 4:48$\pm$0:02 \\
                    Sampling & 0:03$\pm$0:00 & 0:03$\pm$0:00 \\
                    \midrule
                    
                    \textbf{SIREN} \\
                    Training & 1:48$\pm$0:02 & 4:57$\pm$0:02 \\
                    Sampling & 0:03$\pm$0:00 & 0:04$\pm$0:00 \\
                    \midrule
                    
                    \textbf{GP Regressors} \\
                    Training & 0:51$\pm$0:01 & 1:58$\pm$0:02 \\
                    Sampling & 0:10$\pm$0:01 & 0:47$\pm$0:01 \\
                    \midrule
                    
                    \textbf{Deep GP} \\
                    Training & 14:22$\pm$0:01 & 2:38:45$\pm$0:12\hspace{1.45em} \\
                    Sampling & 0:06$\pm$0:00 & 0:54$\pm$0:00 \\
                    \midrule
                    
                    \textbf{RP Flow} \\
                    Training & 1:46$\pm$0:02 & 5:53$\pm$0:02 \\
                    Prior sampling & 2:11$\pm$0:01 & 3:46$\pm$0:01 \\
                    Reverse ODE integration &0:02$\pm$0:00 & 0:15$\pm$0:00 \\                
                    Posterior source GP & 1:11$\pm$0:02 & 2:46$\pm$0:01 \\                
                    Forward ODE integration & 2:11$\pm$0:01 & 3:45$\pm$0:41 \\                
                    Posterior sampling (Total) & 3:24$\pm$0:03 & 6:46$\pm$0:02 \\                
                    \bottomrule
                \end{tabular}
            \end{sc}
        \end{small}
    \end{center}
    \caption{Computation time of the different methods. The results are reported in (hours : )minutes : seconds and we display the mean$\pm$std over 16 different runs. Every experiment is computed on a single NVIDIA L40s (48 GB memory).}
    \label{tab:compute_time}
\end{table}

\section{Additional results for the image regression tasks}
\label{ap:img_reg}

In this section, we give the details of implementation for the different models in the image regression tasks, as well as an ablation of the different parameters. We develop the details for RP Flow as well as the RFF Network and the GP Regressors which can be considered as ablations of our method.

\subsection{Implementation details}
This experiment is conducted on $32$ images from the validation set of the Div2K dataset. Specifically, $16$ of these images are sampled at random and used to tune the hyperparameters of each model, and the remaining $16$ are used to report the results in the main paper. Each image is cropped at its center to a $1024\times1024$ patch that is resized to a $512\times512$ grid using bilinear resampling. For the $4\times$ upsampling task, the models are trained on a subsampled grid of pixels, and on $25\%$ randomly sampled pixels for the "Random" task. The metrics are computed on the remaining pixels, except the SSIM which is computed on the $512\times512$ reconstructed image. The visual samples from Figures \ref{fig:im_ups_results}, \ref{fig:im_ups_visual_rff}, \ref{fig:im_ups_all_eval} and \ref{fig:im_ups_visual_training_noise} are shown in the $4\times$ upsampling task, on a subsampled grid with an offset of one pixel in each direction.

Both RP Flow and the RFF Network are parameterized by a $4$ layers MLP with ReLU activations (sigmoid is used as an output function for the RFF MLP), and trained with the MSE loss for $10000$ iterations using the Adam optimizer \cite{kingma2014adam} with default parameters and a learning rate of $10^{-3}$. We tried to match the experimental setup introduced by \citet{tancik2020fourier} as much as possible. However, RP Flow converges more slowly than the RFF Network, and we decided to train all models over more iterations. We made sure that these additional epochs did not introduce overfitting into the baselines. To match the architecture' size between models, the RFF Network receives $512$ RFF frequencies while RP Flow receives $256$, along with an embedding of $t\in[0,1]$, and $z_t(x)$ solution of the flow at time $t$. The GPR baseline is considered with a RBF kernel to match the Random Fourier Features and its lengthscale is tuned to maximize PSNR. Since we have access to additional images for tuning the models' hyperparameters, we choose to tune the models with respect to PSNR rather than MLE, although the latter could also be applied in a manner analogous to Gaussian Processes.

\subsection{Ablation study}

The main hyperparameters for each model are its lengthscale ($\sigma_{RFF}$ for RP Flow and the RFF Network and $\sigma_{GP}$ for the GPs) and the calibration parameter $\sigma_{Noise}$ ($\sigma$ in the main paper). For each model, we adopt a two-steps strategy where we first tune the lengthscale to maximize PSNR/SSIM and fix it to the optimal choice to tune $\sigma_{Noise}$. Figure \ref{fig:im_ups_psnr_ssim_fct_rff} displays PSNR and SSIM as functions of the lengthscale of the model, on the 16 training images. Each model exhibits a range of lengthscales that yield high reconstruction quality, but the optimal choice is different for each model. For RP Flow, as we will sample from the posterior to maximize reconstruction quality, we use $\sigma_{RFF}=40$, an optimal value that is lower than when considering prior samples. For the RFF Network, we recover the optimal $\sigma_{RFF}=64$ found in the original paper (they rescale $\sigma_{RFF}$ by a factor $2\pi$, which leads to an optimal value of $10$ in their paper). Finally, the GP lengthscale is chosen in spatial domain instead of frequency, and we found $\sigma_{GP} \approx 0.007$ to be optimal. Figure \ref{fig:im_ups_visual_rff} shows samples from each model across different lengthscales.

{
\captionsetup[subfloat]{labelformat=empty,justification=centering}
\begin{figure}[!ht]
    \centering
    \begin{minipage}{0.8\textwidth}
    
    \subfloat{\includegraphics[width=0.25\textwidth]{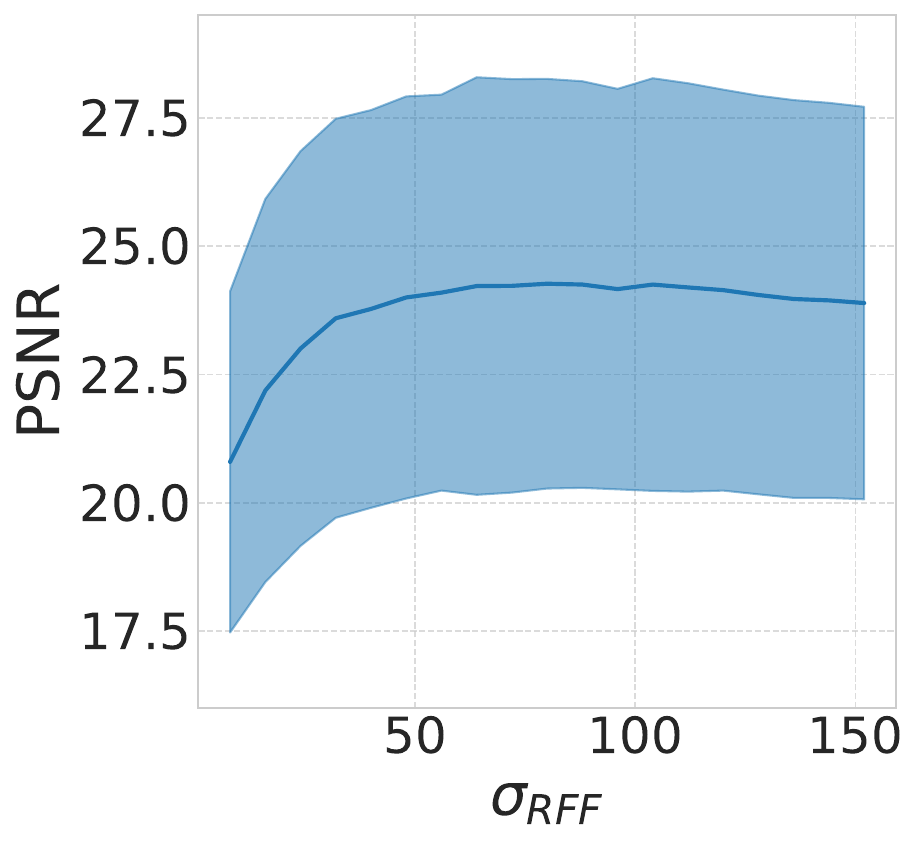}}%
    \hfill
    \subfloat{\includegraphics[width=0.25\textwidth]{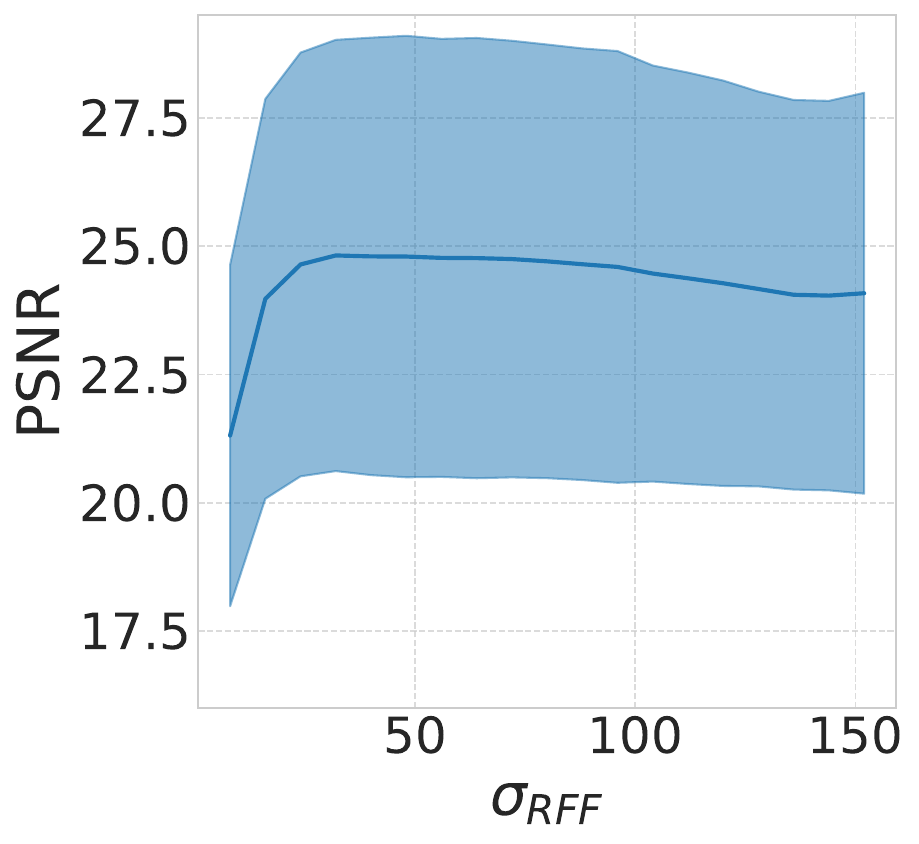}}%
    \hfill
    \subfloat{\includegraphics[width=0.25\textwidth]{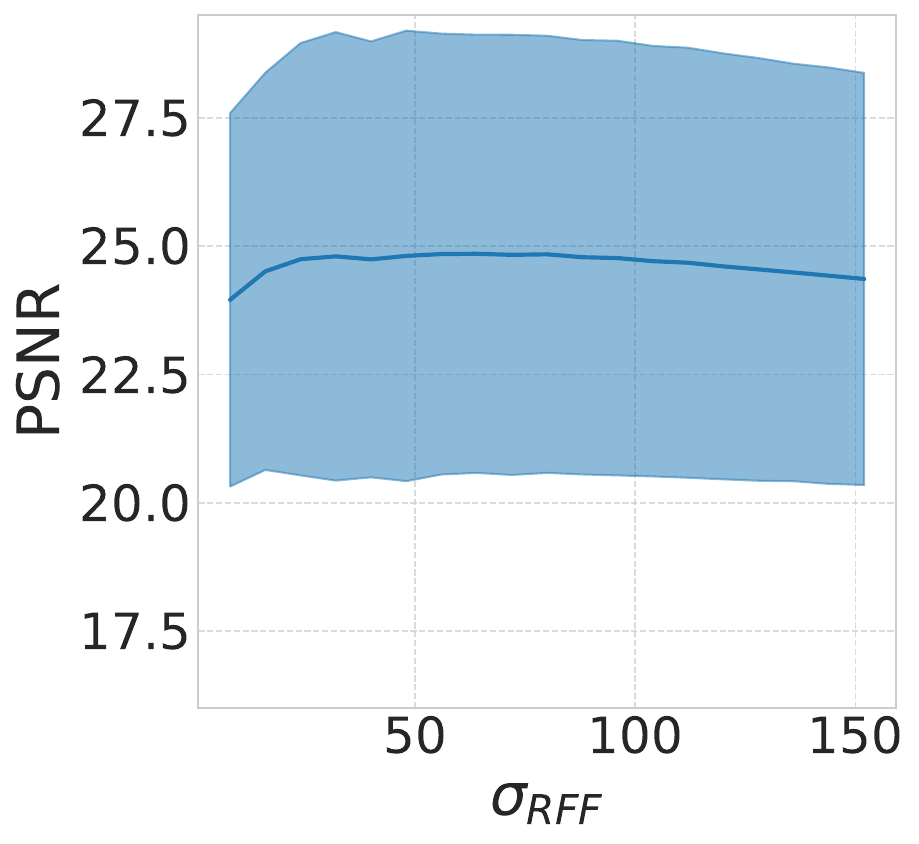}}%
    \hfill
    \subfloat{\includegraphics[width=0.25\textwidth]{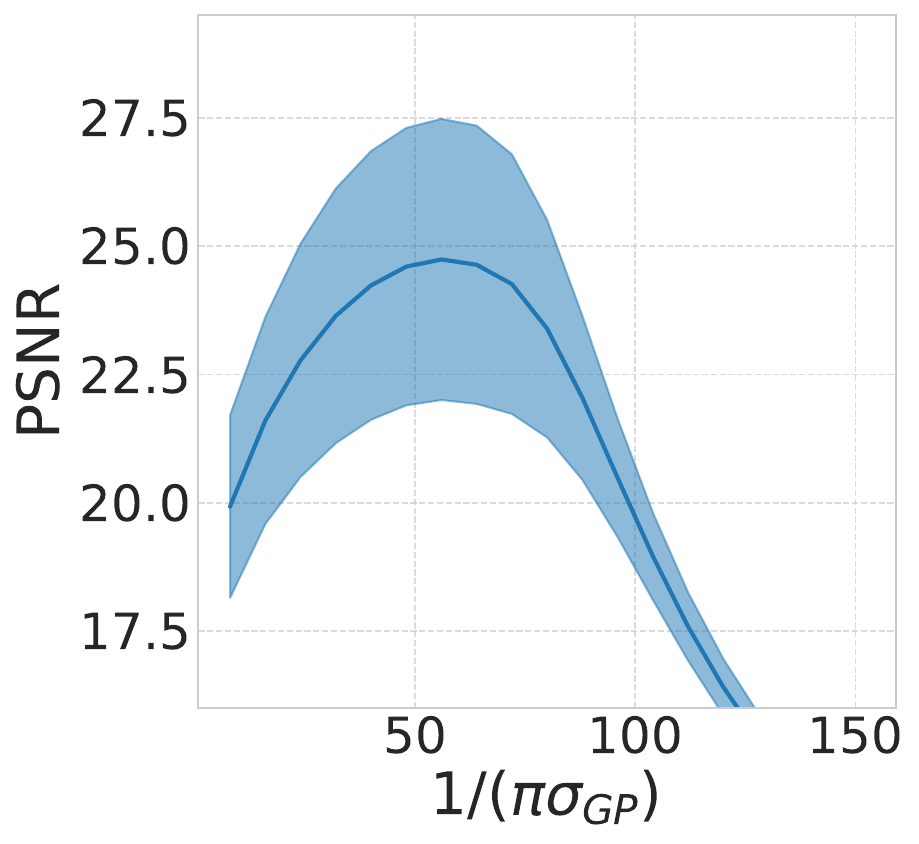}}
    
    \subfloat[RP Flow prior samples]{\includegraphics[width=0.25\textwidth]{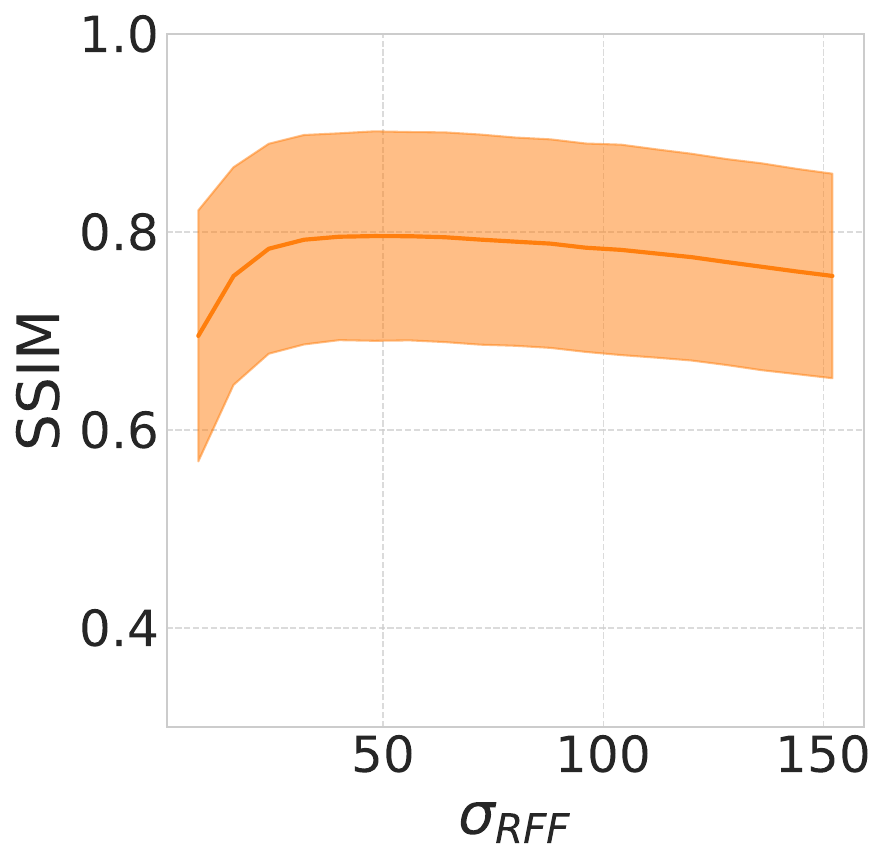}}%
    \hfill
    \subfloat[RP Flow posterior samples]{\includegraphics[width=0.25\textwidth]{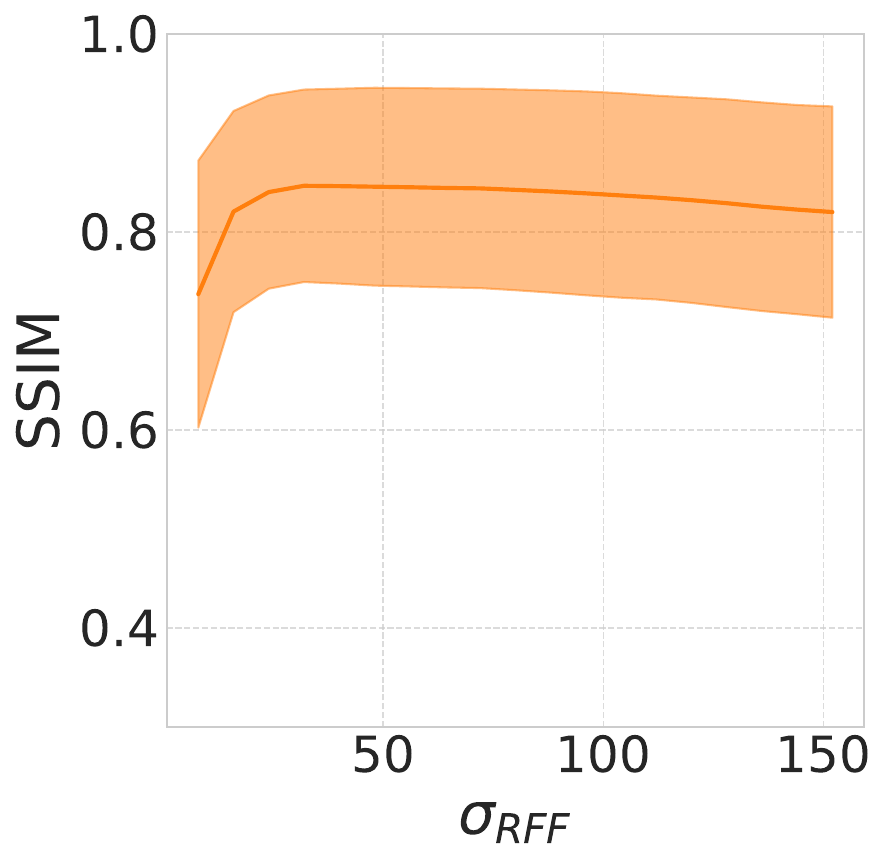}}%
    \hfill
    \subfloat[RFF MLP regression]{\includegraphics[width=0.25\textwidth]{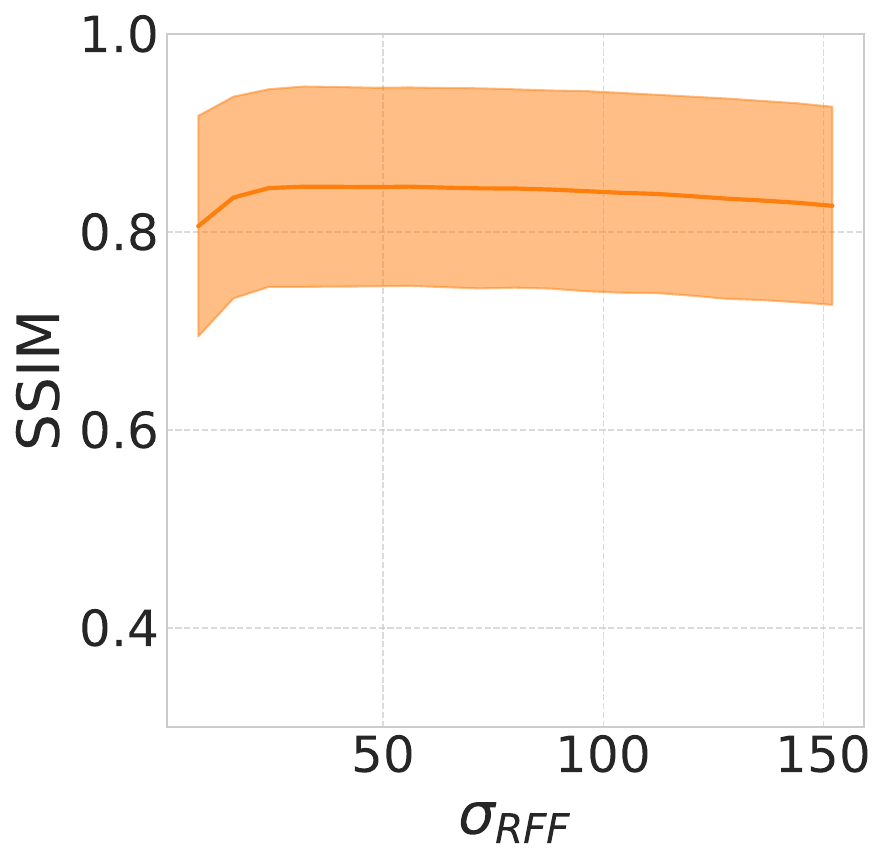}}%
    \hfill
    \subfloat[GP posterior samples]{\includegraphics[width=0.25\textwidth]{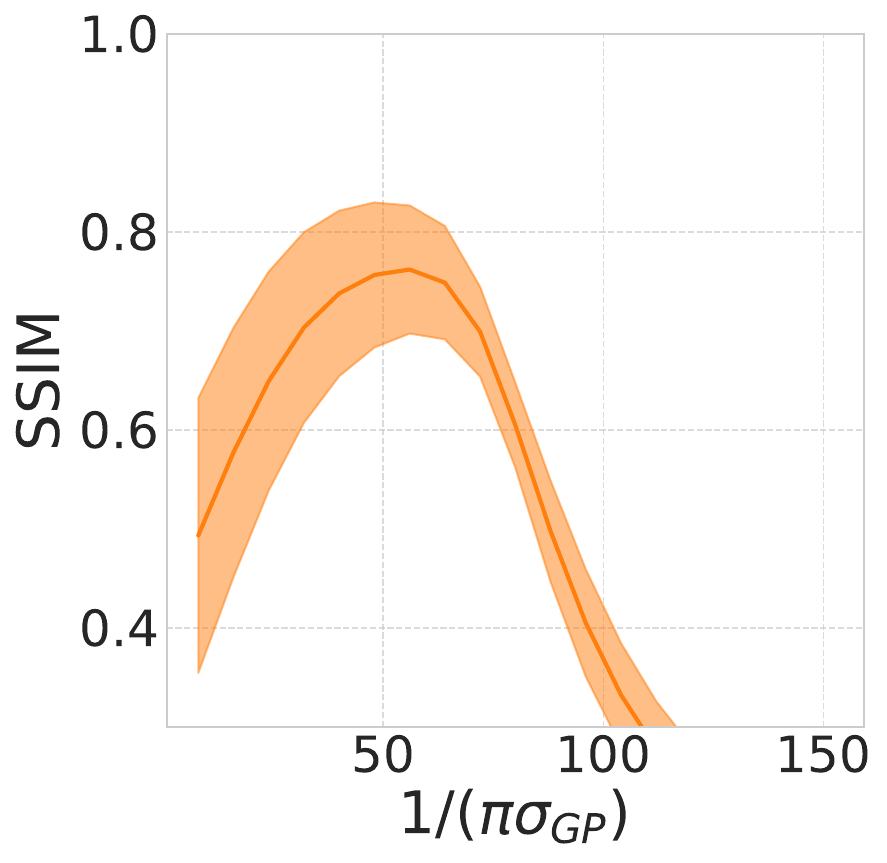}}%
\end{minipage}
\caption{PSNR and SSIM as a function of model's lengthscale ($\sigma_{RFF}$ for RP Flow and RFF MLP ; $\sigma_{GP}$ for the GPR) on the 16 training images. Mean metric is represented as a solid line and standard deviation in light envelope.}
\label{fig:im_ups_psnr_ssim_fct_rff}
\end{figure}
}

Once RP Flow is tuned to maximize the reconstruction quality, we tune its regularization parameter $\sigma_{Noise}$ ($\sigma$ in the main paper) to achieve calibrated predictions in a similar fashion to Gaussian Process regression. We find that $\sigma_{Noise}=0.06$ to be optimal for RP Flow as well as the GP baseline. Figure \ref{fig:im_ups_cal_tuning} displays the metrics as a function of $\sigma_{Noise}$. It is generally observed that, for all models, permitting noisy predictions leads to degraded reconstruction performance, resulting in lower PSNR and SSIM values. However, in the case of RP Flow, the posterior sampling strategy enables the recovery of high-quality samples, as evidenced by the stable PSNR and SSIM values shown in Figure \ref{fig:im_ups_cal_tuning}. This effect is further illustrated on a training image in Figure \ref{fig:im_ups_visual_training_noise}.

{
\captionsetup[subfloat]{labelformat=empty}
\begin{figure}[!ht]
    \centering
    \begin{minipage}{0.6\textwidth}
    \centering
    \subfloat{\includegraphics[width=0.33\textwidth]{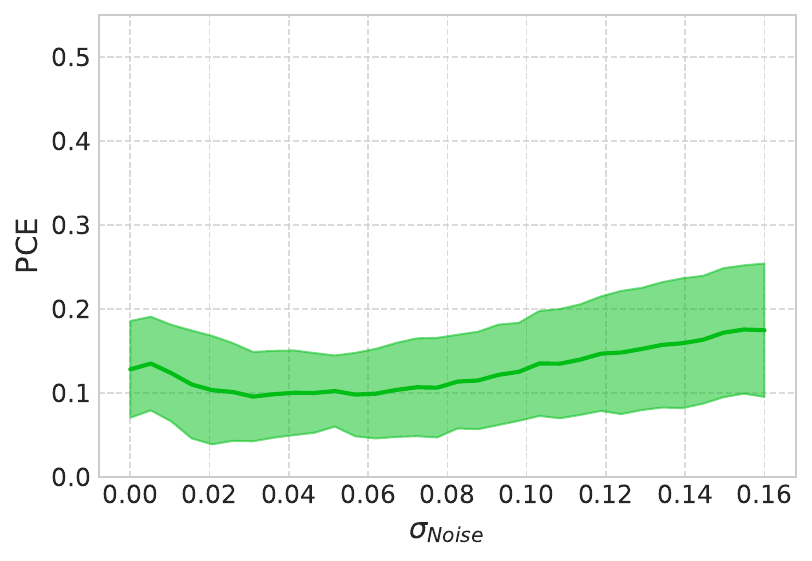}}%
    \hfill
    \subfloat[RP Flow Prior Samples]{\includegraphics[width=0.33\textwidth]{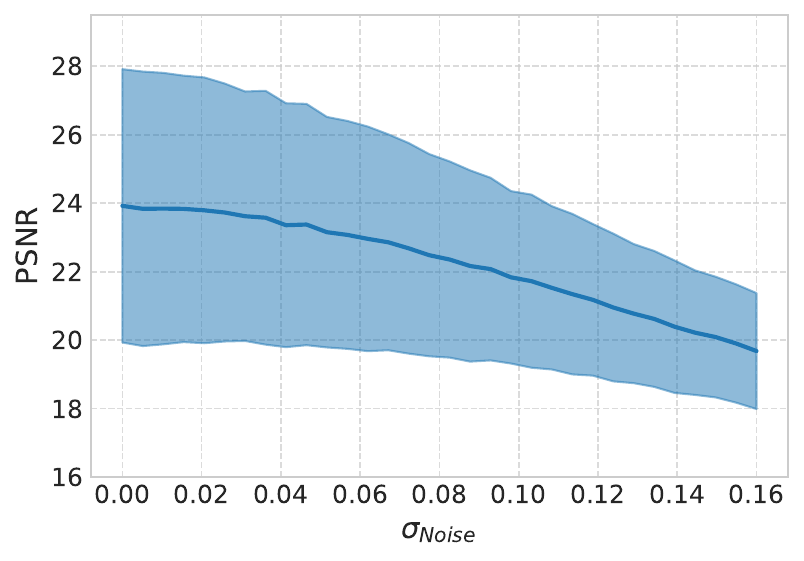}}%
    \hfill
    \subfloat{\includegraphics[width=0.33\textwidth]{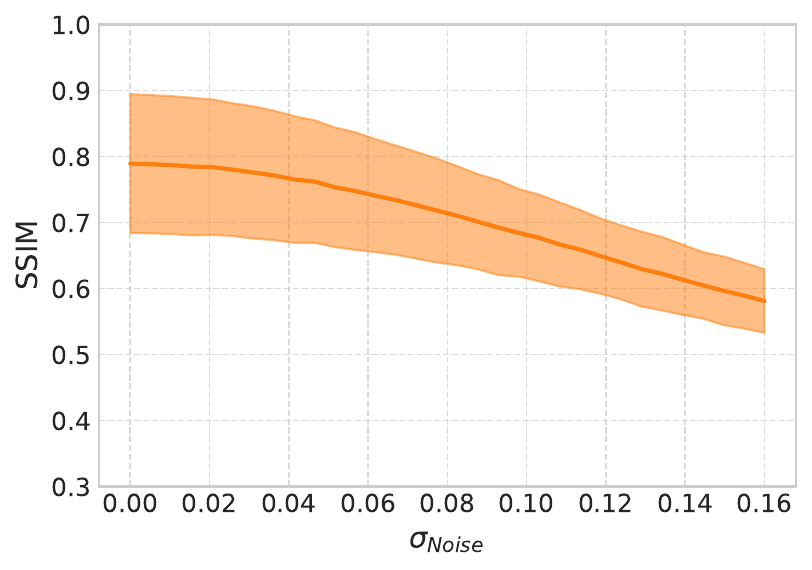}}%
    \\[0.5em]
    \hfill
    \hspace{0.33\textwidth}
    \subfloat[RP Flow Posterior Samples]{\includegraphics[width=0.33\textwidth]{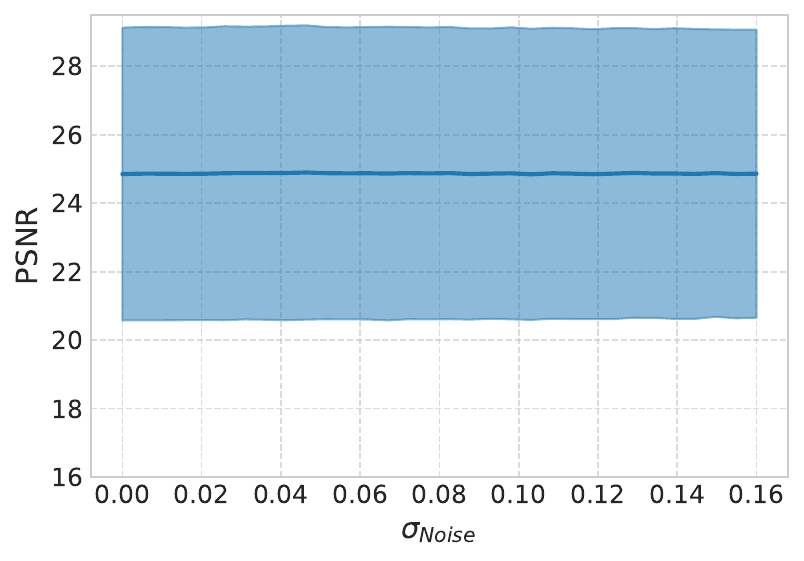}}%
    \hfill
    \subfloat{\includegraphics[width=0.33\textwidth]{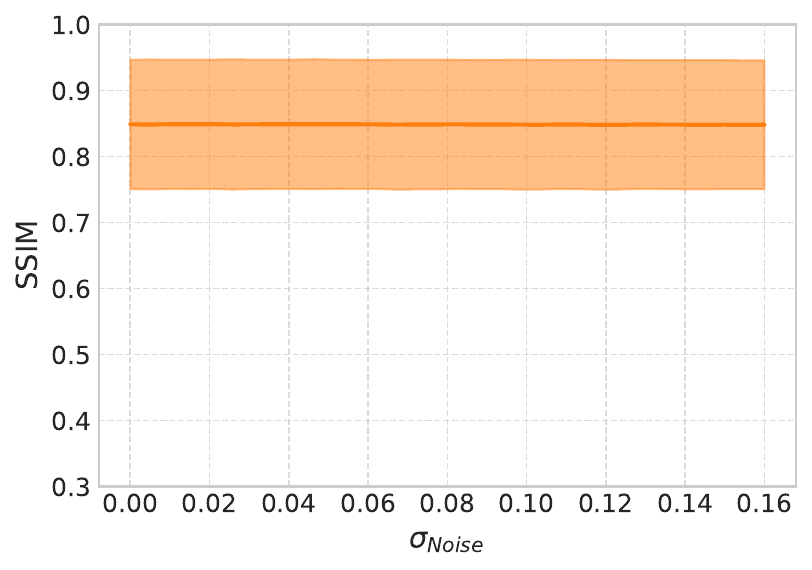}}%
    \\[0.5em]
    \subfloat{\includegraphics[width=0.33\textwidth]{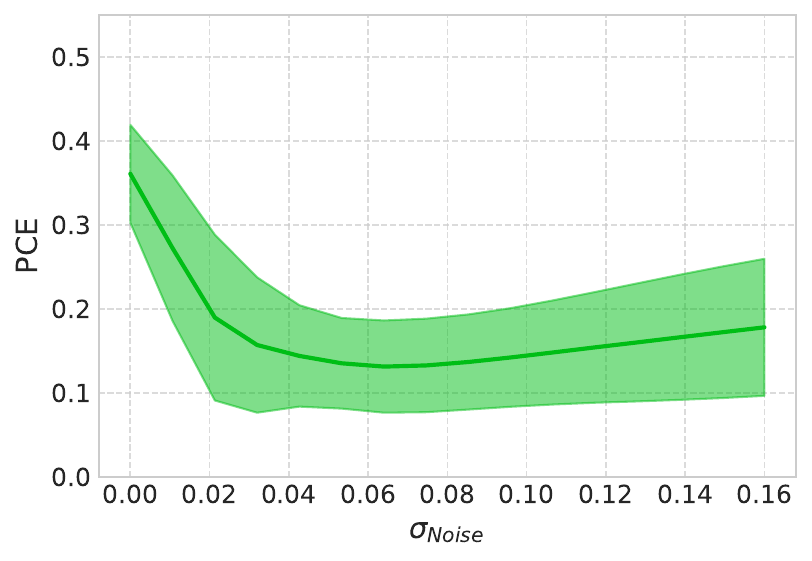}}%
    \hfill
    \subfloat[GP Samples]{\includegraphics[width=0.33\textwidth]{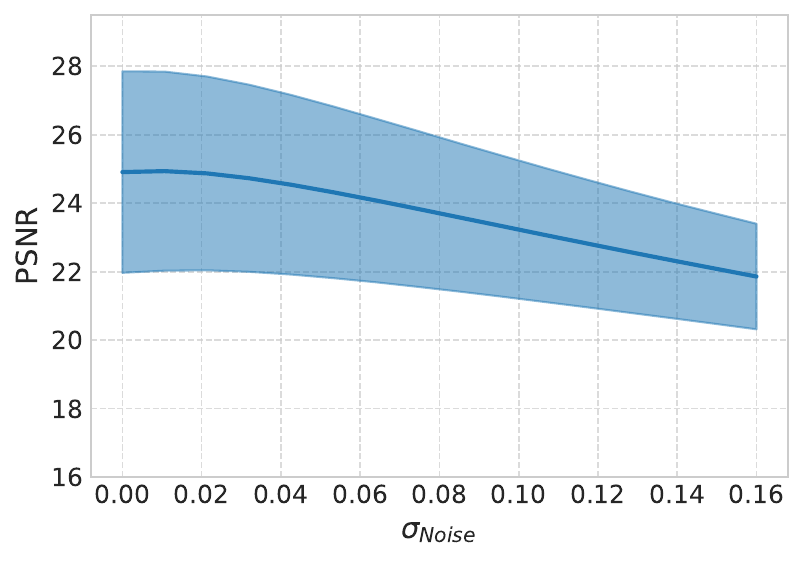}}%
    \hfill
    \subfloat{\includegraphics[width=0.33\textwidth]{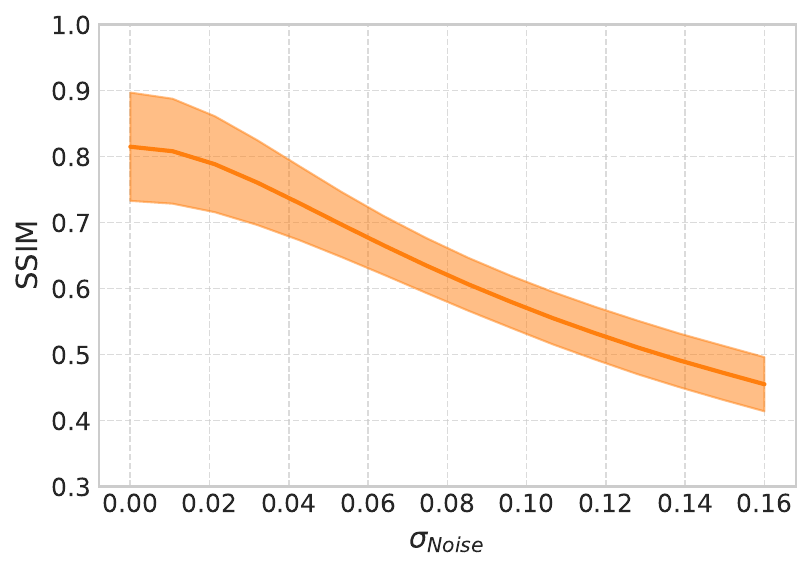}}%
    \end{minipage}
    \caption{Metrics as a function of $\sigma_{Noise}$ ($\sigma$ in the main paper) for RP Flow and GPR in the image upsampling task. Modeling noise in the observed data enables to calibrate the models but reduces the quality of the samples. Samples from RP Flow posterior stay consistent in quality when considering noisy observations, enabling the modeling of calibrated uncertainties without compromising in sample quality.}
\label{fig:im_ups_cal_tuning}
\end{figure}
}

\begin{figure}[!ht]
\centering
% \hfill
\subfloat[PSNR as a function of noise correlation.]{\includegraphics[width=0.35\textwidth]{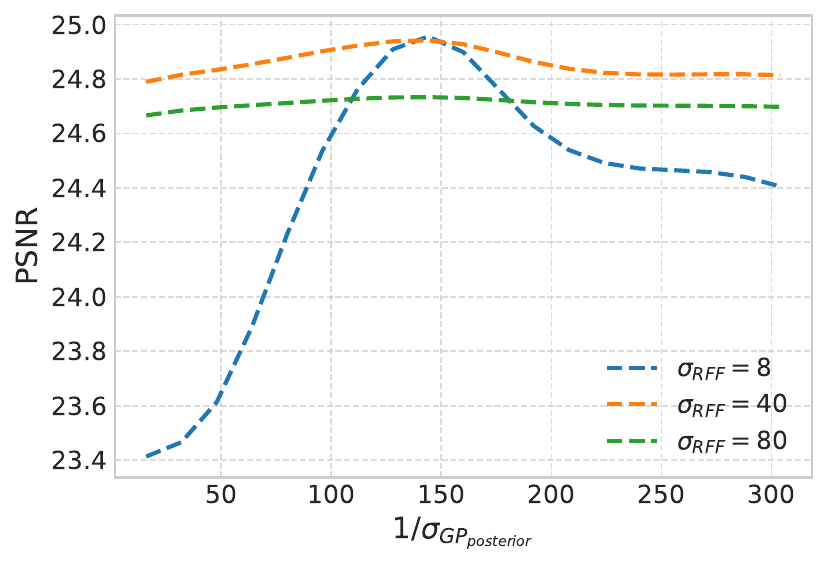}}%
\hspace{1cm}
\subfloat[SSIM as a function of noise correlation.]{\includegraphics[width=0.35\textwidth]{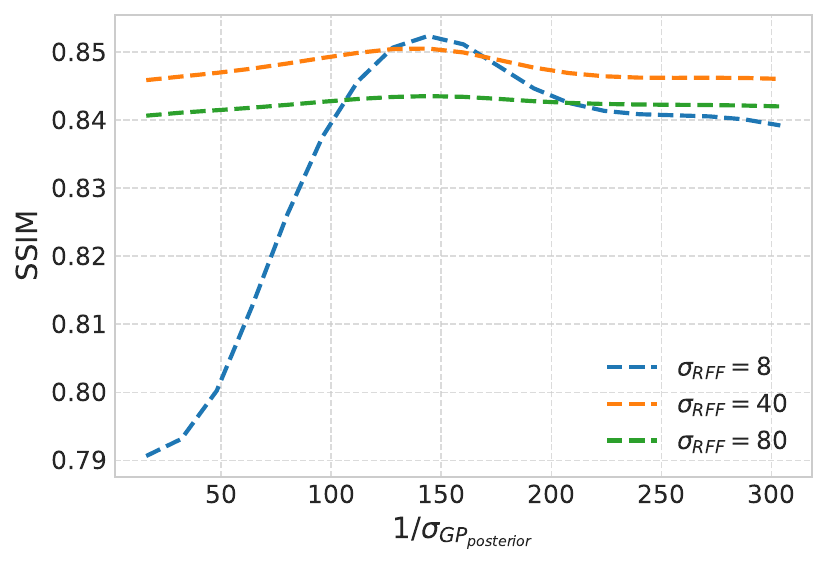}}%
\caption{PSNR and SSIM of Posterior RP Flow as a function of Gaussian Process lengthscale, for different model's RFF embedding. The different models show a maximum metric around $\sigma_{GP_{posterior}} = 0.012$, independently of the model RFF embedding. High frequency in the noise tend to mitigate the spectral bias induced by a low frequency RFF embedding.}
\label{fig:im_ups_sigma_post_gp}
\end{figure}

To sample from RP Flow's posterior, we need to tune another parameter ($\sigma_{GP_{posterior}}$), that is used as the lengthscale of the source posterior GP. In Figure \ref{fig:im_ups_sigma_post_gp}, we show the evolution of the metrics as a function of this parameter, for three different choices of $\sigma_{RFF}$. We found the optimal parameter to be $\sigma_{GP_{posterior}} = 0.008$, independently of the initial model's lengthscale.
Interestingly, the choice of $\sigma_{GP_{posterior}}$ that maximizes PSNR is generally higher-frequency than the one induced by RFFs. This is likely due to the fact that real data usually don't follow exactly a Gaussian covariance and high-frequencies are not well understood during training. We find that even with a low-frequency initial lengthscale $\sigma_{RFF}$, it is possible to boost the posterior metrics with a higher-frequency source posterior.

\begin{table}[ht]
    \begin{center}
        \begin{small}
            \begin{sc}
                % \resizebox{0.5\textwidth}{!}{
                \begin{tabular}{l|cc|cc}
                    \toprule
                    \multicolumn{1}{l}{} &
                    \multicolumn{2}{|c}{Upsampling} &
                    \multicolumn{2}{|c}{Random} \\
                    \midrule
                    Method & PSNR$\uparrow$ & SSIM$\uparrow$ & PSNR$\uparrow$ & SSIM$\uparrow$\\
                    \midrule
                    RP Flow (No position conditioning) & 21.72$\pm$3.60 & 0.58$\pm$0.12
                                                        & 21.41$\pm$3.58 & 0.57$\pm$0.12\\
                                                        
                    RP Flow (Positional Encoding)    & \underline{24.12$\pm$4.47} & \underline{0.70$\pm$0.10} 
                                             & \underline{23.51$\pm$4.35} & \underline{0.68$\pm$0.10}\\
                    
                    RP Flow (Random Fourier Features)      & \textbf{25.30$\pm$5.05} & \textbf{0.82$\pm$0.08} 
                                                    & \textbf{23.70$\pm$4.64} & \textbf{0.76$\pm$0.09}\\
                    
                    \bottomrule
                \end{tabular}
                % }
            \end{sc}
        \end{small}
    \end{center}
    \caption{Ablation study of RP Flow's positional encoding. Reconstruction metrics for the image regression tasks are reported for different choices of positional conditioning.}
    \label{tab:pos_cond}
\end{table}

The choice of Random Fourier Features is motivated by the modeling of the source space by a Gaussian Process. However, in practical applications, there is no guarantee that this modeling choice yields the best predictive performance. In Table \ref{tab:pos_cond}, we compare the version of RP Flow presented in the main paper with ablated variants in which the positional conditioning is either removed or replaced with a standard positional encoding. In both image regression settings, we observe, consistent to the INR case \cite{tancik2020fourier}, that RP Flow achieves its best performance when RFFs are used for positional conditioning.

Finally, we evaluate the performance of RP Flow on a compression task and compare it against a Bayesian INR. We report the results in Table \ref{tab:compression}. In this setting, RP Flow is parametrized with a RFF MLP, as in the main experiments of the paper, with the same choice of hyperparameters. The Bayesian INR is parametrized by a SIREN Network, following the configuration described in the original paper. All metrics are computed on the training pixels only, and samples are drawn from the prior of RP Flow. Interestingly, while the Bayesian INR achieves a higher PSNR, RP Flow attains better SSIM and PCE scores, indicating a stronger capture of image structure at the expense of increased pixel-wise error.

\begin{table}[!t]
    \begin{center}
        \begin{small}
            \begin{sc}
                % \resizebox{0.5\textwidth}{!}{
                \begin{tabular}{lccc}
                    \toprule
                    Method & PSNR$\uparrow$ & SSIM$\uparrow$ & PCE$\downarrow$\\
                    \midrule
                    Bayesian INR    & \textbf{28.62$\pm$8.37} & 0.79$\pm$0.32 & 0.38$\pm$0.08\\
                    RP Flow         & 26.11$\pm$1.65 & \textbf{0.88$\pm$0.03} & \textbf{0.18$\pm$0.06}\\                        
                    \bottomrule
                \end{tabular}
            \end{sc}
        \end{small}
    \end{center}
    \caption{Quantitative results of the compression task on the images dataset. Results are reported as mean $\pm$ standard deviation over 16 evaluation images. For RP Flow, samples are drawn from the prior.}
    \label{tab:compression}
\end{table}

\section{Additional results for the seismic interpolation task}
\label{ap:seismic}

In this section, we present the implementation details of the seismic interpolation task, followed by the results obtained under various training configurations. We develop the details for RP Flow as well as the RFF Network and the GP Regressors which can be considered as ablations of our method.

\subsection{Implementation details}

The seismic interpolation experiments are conducted on two different datasets. The first is a synthetic dataset comprising 32 seismic volumes, each with dimensions $513\times513\times256$. Each volume is resampled using bilinear interpolation, resulting in traces of dimension $128$. Among these 32 volumes, 16 are utilized for model tuning, while the remaining volumes are reserved for reporting the final results. The second dataset corresponds to the real F3 seismic dataset \cite{silva2019netherlands}. For evaluation purposes, a sub-volume of dimensions $513 \times 513 \times 128$ is extracted. Additionally, another sub-volume of dimensions $128 \times 128 \times 128$, located on the opposite side of the full seismic volume, is employed to tune the model hyperparameters.

The PSNR and PCE are computed on all traces unseen during training. The SSIM is computed on the whole volumes and the Wasserstein distance is computed between sets of 512 randomly sampled traces from the ground truth and predictions.

RP Flow is parametrized by 1D U-Nets modified from \citet{dhariwal2021diffusion} with different sizes. Specifically, the model is a U-Net with 1D ResNet blocks \cite{he2016deep}, Attention blocks \cite{vaswani2017attention}, and SiLU activations \cite{elfwing2018sigmoid}. Similarly to class-conditioned models, the RFF positional embedding is concatenated to the time dimension. The three instances of RP Flow (S, M, L) are built with increasing depth and width to match 1.7M, 10M and 50M parameters to demonstrate underfitting, well-tuned and overfitting models. We tuned each model on 16 volumes and find optimal parameters independently of the size to be $\sigma_{RFF}=6, \sigma_{Noise} = 0.012, \sigma_{GP_{Posterior}}=0.025$ in the synthetic case and $\sigma_{RFF}=6., \sigma_{Noise} = 0.05, \sigma_{GP_{Posterior}}=0.025$ for the real  noisy volume. The RFF Network is parametrized by a RFF MLP with 10 layers and 1024 hidden dimension, ReLU activations and a Sigmoid output function. The model is dimensioned to match the number of parameters of the optimal RP Flow M. We find the optimal lengthscale parameter to be  $\sigma_{RFF}=10$ in both the synthetic and real cases. The GPs are used  with a Gaussian kernel in a slice-by-slice manner, as it is typically done in geosciences to reduce computation costs and issues related to anisotropy. The same code is used for the GPR baselines and the creation of RP Flow posterior processes. For the baselines, we find optimal parameters to be $\sigma_{GP}=0.025$ for the noiseless GP and $\sigma_{GP}=0.25, \sigma_{Noise} = 1.0$ for the calibrated GP.

All Deep Learning models are trained for 10000 steps on batches of 1024 randomly sampled traces, using a MSE loss and the Adam optimizer \cite{kingma2014adam} with default parameters. A learning rate of $2\times10^{-4}$ is used for RP Flow and $1\times10^{-3}$ for the RFF MLP. A warm-up phase increases linearly the learning rate to its final value over the first 2500 steps. Exponential Moving Average \cite{polyak1992acceleration} is used with a decay of 0.999 for a more stable training. The models training and evaluation are computed on a single  NVIDIA L40s (48 GB memory) GPU although the models can also run on smaller GPUs. All the code is written using the PyTorch \cite{paszke2019pytorch} library.

We note that the Deep Gaussian Process baseline, implemented using the GPyTorch library \cite{gardner2018gpytorch}, relies on a sparse variational approximation with a fixed set of inducing points to ensure computational tractability. While this approximation is effective in many practical settings (e.g. Image Regression), it can introduce limitations when modeling highly stationary processes such as seismic volumes. In our experiments, we observed that, despite careful tuning of the number of layers, inducing points, and variational parameters, training often converged to poor representations for such data. We hypothesize that this behavior stems from the difficulty of capturing fine-grained, globally repetitive structure using a limited set of inducing locations, which may lead to an overly smooth or degenerate posterior approximation.

\subsection{Additional results}

In the main paper, we chose to evaluate the models using a setting with a regular training grid composed of one line every 64 units in each direction. This choice was made to reflect the level of sparsity typically observed in offshore wind farm seismic acquisitions. However, in practice, 2D grids can vary significantly: they may be denser or sparser, irregular, or even non-orthogonal. In this section, we evaluate the robustness of RP Flow to different training settings.

\begin{figure}[!ht]
    \centering
    \includegraphics[width=0.24\linewidth]{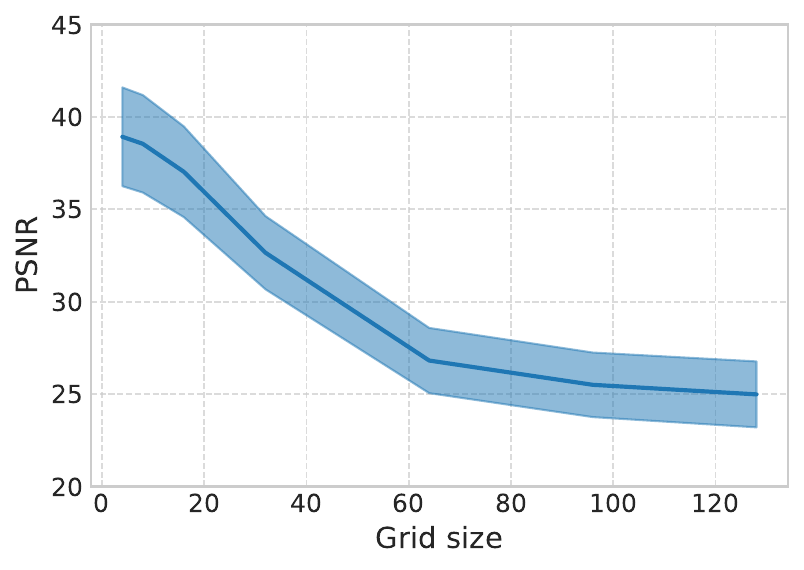}\hfill
    \includegraphics[width=0.24\linewidth]{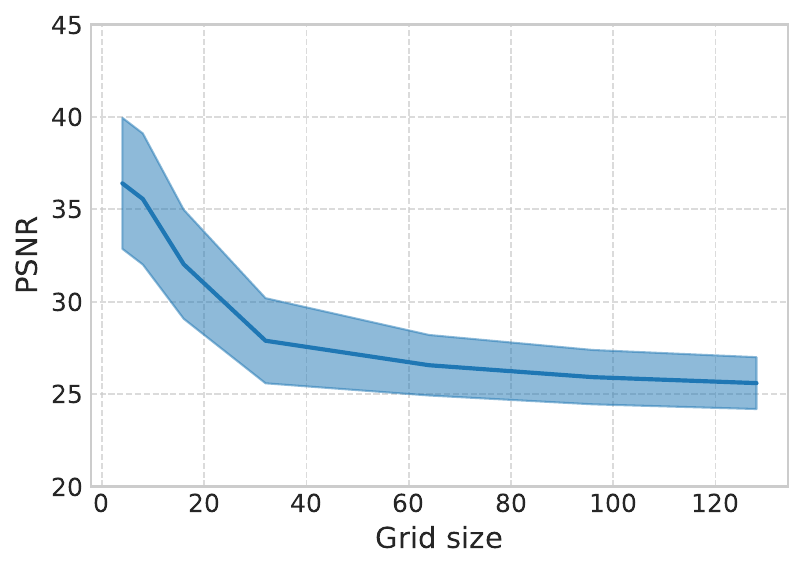}\hfill
    \includegraphics[width=0.24\linewidth]{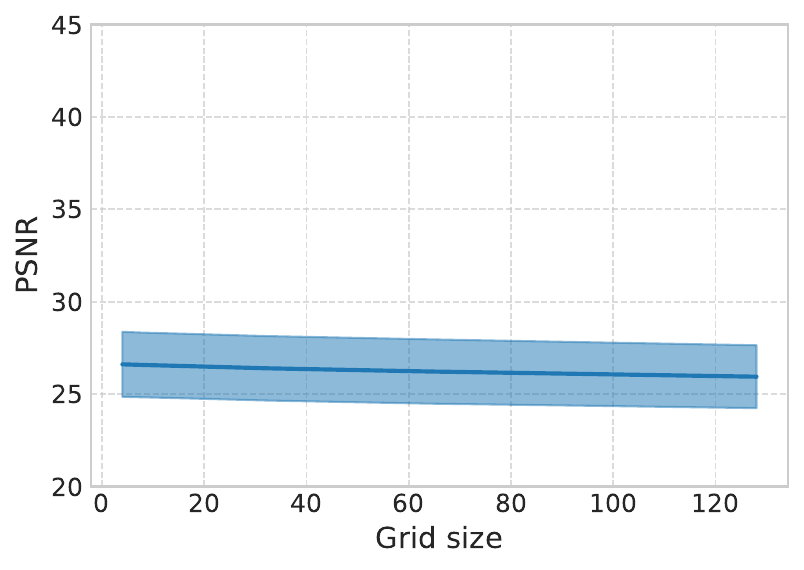}\hfill
    \includegraphics[width=0.24\linewidth]{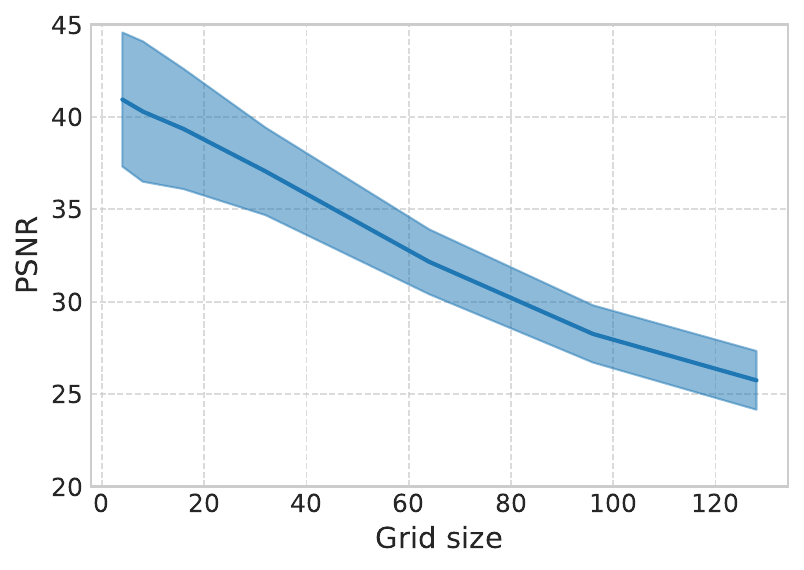}
    \\[0.2em]
    \subfloat[RFF Network]{\includegraphics[width=0.24\linewidth]{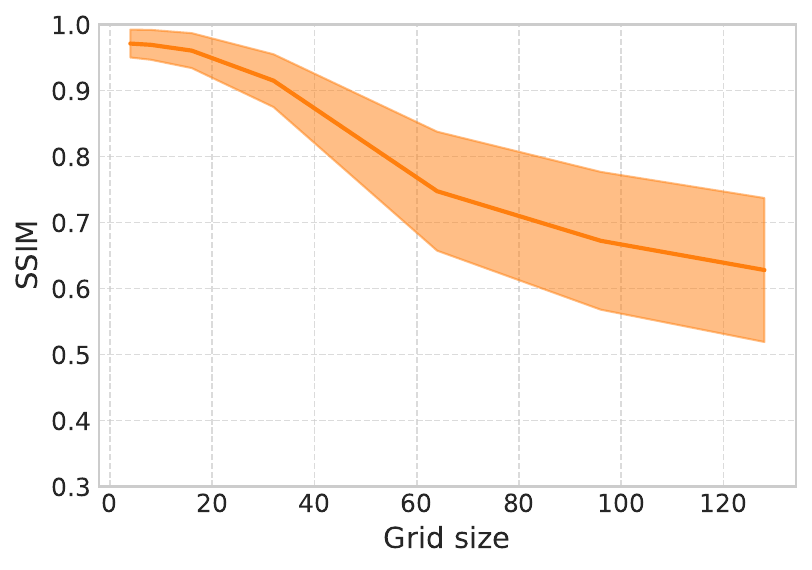}}\hfill
    \subfloat[GPR (noiseless)]{\includegraphics[width=0.24\linewidth]{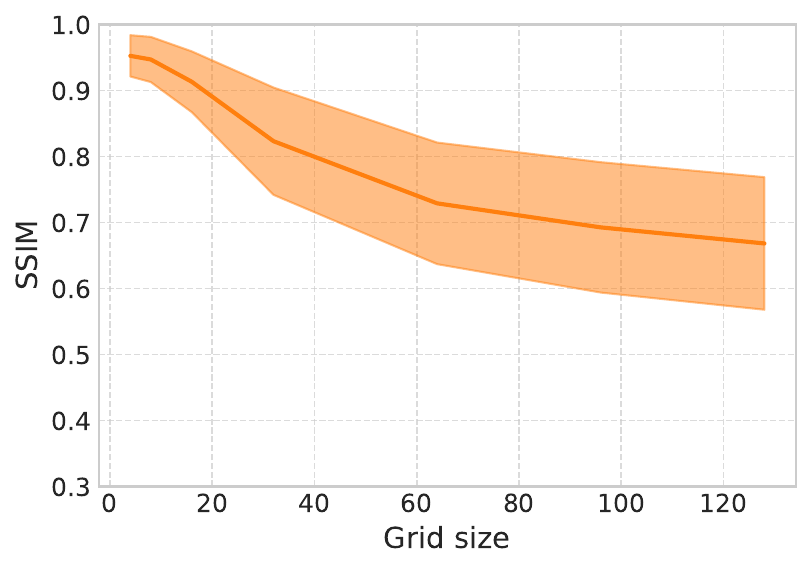}}\hfill
    \subfloat[GPR (calibrated)]{\includegraphics[width=0.24\linewidth]{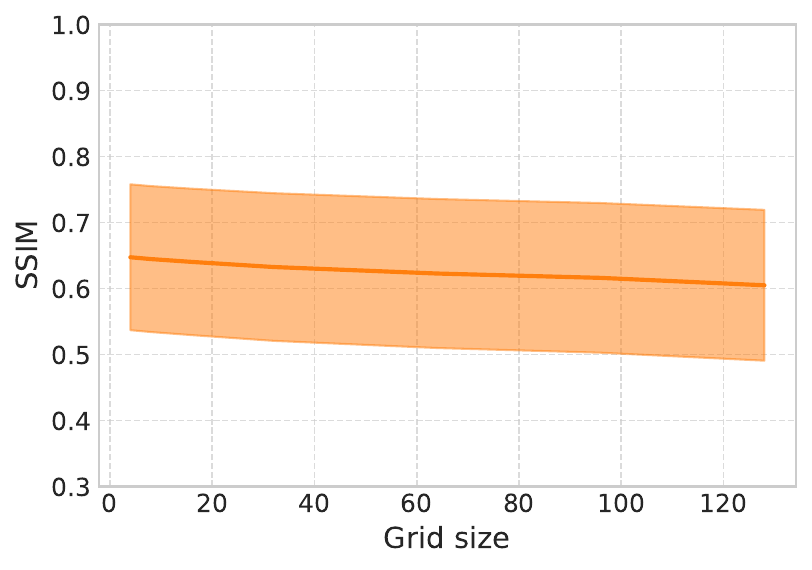}}\hfill
    \subfloat[RP Flow M]{\includegraphics[width=0.24\linewidth]{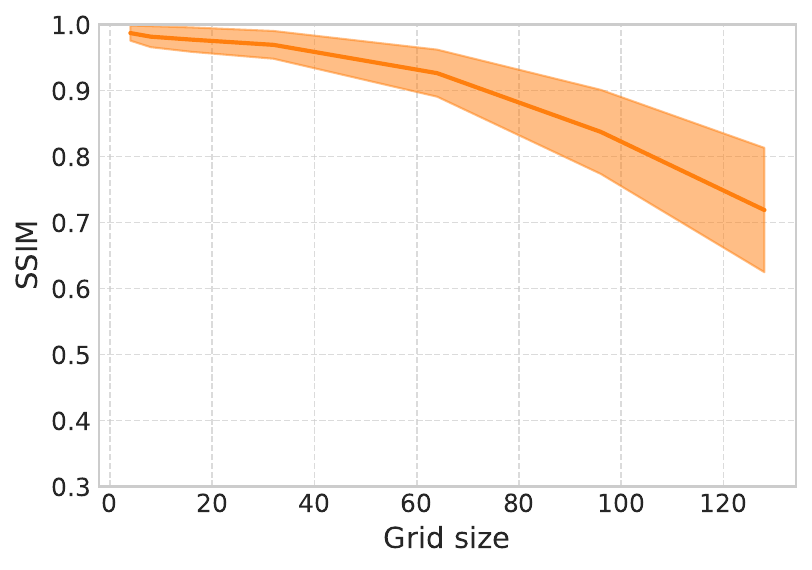}}
    \caption{Evolution of the reconstruction metrics as a function of the grid size $n$, in the case of a 2D training grid of one seismic line every $n$ in both directions.}
    \label{fig:incr_spars}
\end{figure}

In the first experiment, we maintain the regular grid assumption and investigate the effect of different line spacings. Figure \ref{fig:incr_spars} displays the reconstruction metrics as a function of the grid size for the different models. For dense grids, RP Flow is able to reconstruct high-quality volumes, while the reconstruction quality gradually degrades as sparsity increases. Nevertheless, RP Flow still performs reasonably well even in very sparse scenarios. We stops this experiment at a grid size of 128, as beyond this point the number of training samples becomes insufficient for efficient learning with Flow Matching, given a volume of size $513 \times 513 \times 128$. Notably, RP Flow consistently outperforms the considered baselines, even in dense-grid regimes where Kriging (noiseless GPR) is typically regarded as state of the art.

\begin{table}[!t]
    \begin{center}
        \begin{small}
            \begin{sc}
                \resizebox{\textwidth}{!}{\begin{tabular}{l|ccc|ccc}
                    \toprule
                    \multicolumn{1}{l}{} &
                    \multicolumn{3}{|c|}{Random} &
                    \multicolumn{3}{|c}{Irregular} \\
                    \midrule
                    Method & PSNR$\uparrow$ & SSIM$\uparrow$ & PCE$\downarrow$ & PSNR$\uparrow$ & SSIM$\uparrow$ & PCE$\downarrow$\\
                    \midrule
                    Rff Network & 
                    \textbf{37.46$\pm$2.99} & \underline{0.95$\pm$0.03} &   &  
                    \underline{28.68$\pm$2.21} & \underline{0.83$\pm$0.08} &   \\
                    GPR (noiseless) & 
                    33.41$\pm$2.86 & 0.92$\pm$0.04 & 0.12$\pm$0.06 &  
                    26.96$\pm$2.58 & 0.74$\pm$0.12 & 0.33$\pm$0.05 \\
                    GPR (calibrated)& 
                    26.41$\pm$1.70 & 0.63$\pm$0.11 & \underline{0.11$\pm$0.06} &  
                    26.28$\pm$1.34 & 0.62$\pm$0.11 & \underline{0.12$\pm$0.06} \\
                    RP Flow M ${(\#params=10M)}$ &
                    \underline{37.26$\pm$2.19} & \textbf{0.96$\pm$0.02} & \textbf{0.06$\pm$0.05} &  
                    \textbf{33.87$\pm$2.43} & \textbf{0.94$\pm$0.04} & \textbf{0.07$\pm$0.04} \\
                    \bottomrule
                \end{tabular}}
            \end{sc}
        \end{small}
    \end{center}
\caption{Quantitative results of the seismic interpolation task on the synthetic dataset for two different training configurations. \textit{Random} is the configuration where 5$\%$ of the total traces are sampled at random to train the models and \textit{Irregular} is the configuration where the training traces are placed on an irregular training grid of one line every 32 in one direction and one line every 128 in the other. Results are reported as mean $\pm$ standard deviation over 16 evaluation volumes. In both cases, RP Flow achieves better performances than in the experimental setup presented in the main paper although the number of training traces stays approximately the same.}
\label{tab:seismic_additional}
\end{table}

In the second experiment, we maintain the same ratio of training traces as in the setting presented in the main paper ($\sim4-5\%$), but consider two alternative configurations for the training grid. In the first, denoted as \textit{Random}, $5\%$ of the traces are sampled at random for training. The second configuration, denoted as \textit{Irregular}, uses a non-uniform grid where the spacing differs along the two axes: one line every 32 units in one direction and one line every 128 units in the other. Table \ref{tab:seismic_additional} displays the results of the different models in these configurations. 

The \textit{Random} configuration provides better information to the models than the grid setups, as it is more likely to cover a greater diversity of traces during the training phase compared to regular grids, and the average distance between training and evaluation traces is smaller. This trend is evident in the results, where the RFF Network, GPR (noiseless) and RP Flow achieve high PSNR and SSIM metrics, whereas the calibrated GP perform similarly to the previous experiments. Although this configuration is not common in practice, it demonstrates that RP Flow does not rely on assumptions regarding the spatial regularity of training points. This capability can be beneficial for other tasks, such as handling corrupted traces in preprocessed seismic data \cite{mandelli2019interpolation}. In the \textit{Irregular} grid setup, the reconstruction results are slightly better than the regular one for the RFF Network and RP Flow but stay similar for the GPs. These results are expected, as this configuration is very similar to the regular one. Nevertheless, it is encouraging to observe that RP Flow is robust to this setup, which is more representative of practical scenarios where data acquisition often occurs on irregular grids.

Interestingly, without modifying the calibration parameter, RP Flow achieves the same calibration performance as in the regular grid experiment, demonstrating its ability to reliably model calibrated uncertainties.

\newpage
\section{Additional Figures}

In this section, we provide additional visual results for both the image regression and the seismic interpolation tasks.

\begin{figure}[!ht]
    \centering
    \begin{minipage}{0.9\textwidth}
    \centering
    % ===== Row 1 =====
    \subfloat{\includegraphics[width=0.24\textwidth]{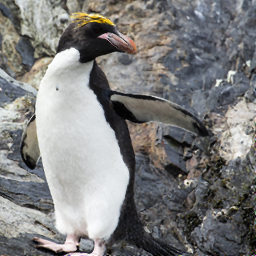}}%
    \hfill
    \subfloat{\includegraphics[width=0.24\textwidth]{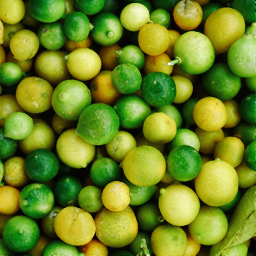}}%
    \hfill
    \subfloat{\includegraphics[width=0.24\textwidth]{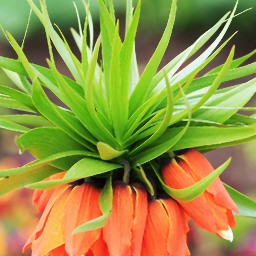}}%
    \hfill
    \subfloat{\includegraphics[width=0.24\textwidth]{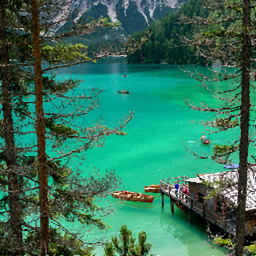}}%
    \\[0.4em]
    \subfloat{\includegraphics[width=0.24\textwidth]{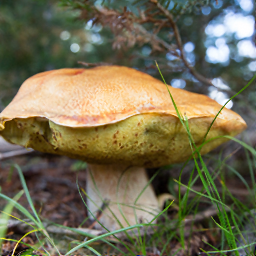}}%
    \hfill
    \subfloat{\includegraphics[width=0.24\textwidth]{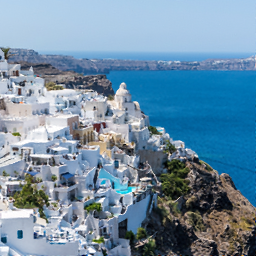}}%
    \hfill
    \subfloat{\includegraphics[width=0.24\textwidth]{figures/image_upsampling/ablation/samples_eval/RPF_post_0828.png}}%
    \hfill
    \subfloat{\includegraphics[width=0.24\textwidth]{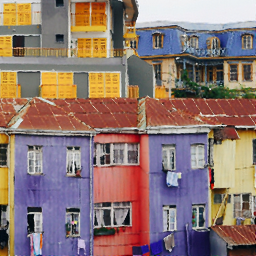}}%
    \\[0.4em]
    % ===== Row 2 =====
    \subfloat{\includegraphics[width=0.24\textwidth]{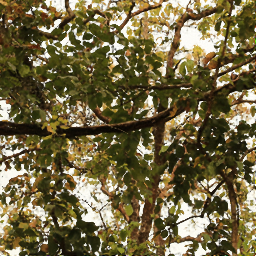}}%
    \hfill
    \subfloat{\includegraphics[width=0.24\textwidth]{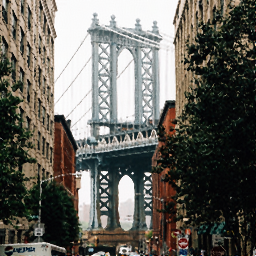}}%
    \hfill
    \subfloat{\includegraphics[width=0.24\textwidth]{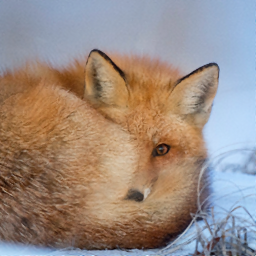}}%
    \hfill
    \subfloat{\includegraphics[width=0.24\textwidth]{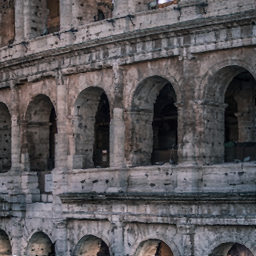}}%
    \\[0.4em]
    \subfloat{\includegraphics[width=0.24\textwidth]{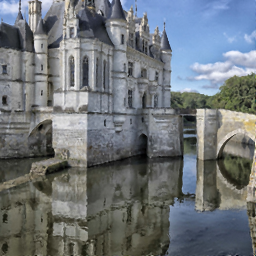}}%
    \hfill
    \subfloat{\includegraphics[width=0.24\textwidth]{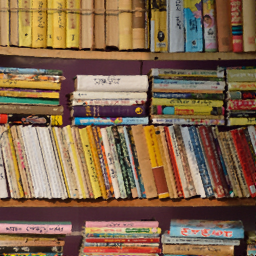}}%
    \hfill
    \subfloat{\includegraphics[width=0.24\textwidth]{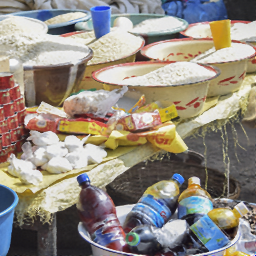}}%
    \hfill
    \subfloat{\includegraphics[width=0.24\textwidth]{figures/image_upsampling/ablation/samples_eval/RPF_post_0872.png}}%
    \\[0.5em]
    \end{minipage}
    \caption{Samples from RP Flow's posterior on the 16 test images, in the $4\times$ upsampling task.}
    \label{fig:im_ups_all_eval}
\end{figure}

{
\captionsetup[subfloat]{labelformat=empty}
\begin{figure}[!ht]
    \centering
    \begin{minipage}{0.9\textwidth}
    \centering
    % ===== Row 1 =====
    \subfloat{\includegraphics[width=0.19\textwidth]{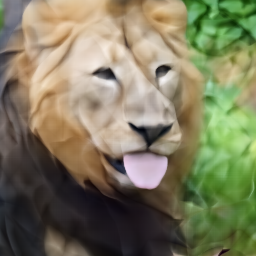}}%
    \hfill
    \subfloat{\includegraphics[width=0.19\textwidth]{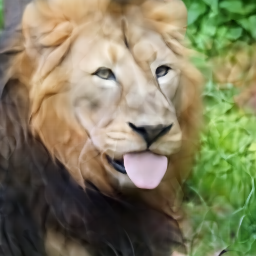}}%
    \hfill
    \subfloat[RP Flow Prior]{\includegraphics[width=0.19\textwidth]{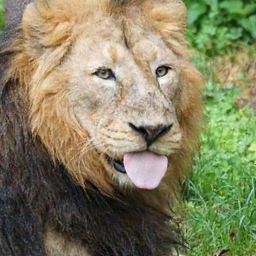}}%
    \hfill
    \subfloat{\includegraphics[width=0.19\textwidth]{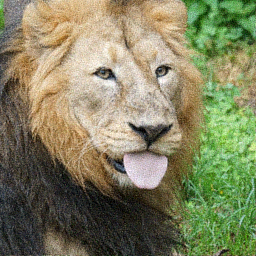}}%
    \hfill
    \subfloat{\includegraphics[width=0.19\textwidth]{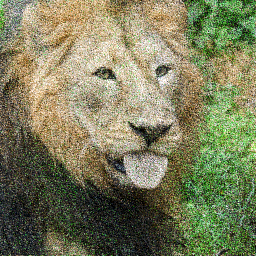}}%
    \\[0.5em]

    % ===== Row 2 =====
    \subfloat{\includegraphics[width=0.19\textwidth]{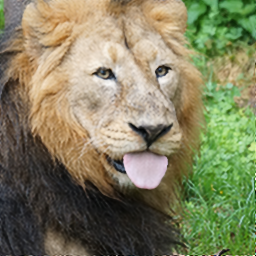}}%
    \hfill
    \subfloat{\includegraphics[width=0.19\textwidth]{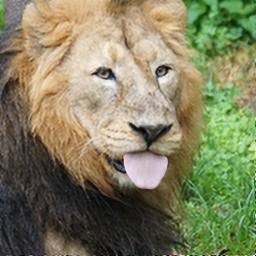}}%
    \hfill
    \subfloat[RP Flow Posterior]{\includegraphics[width=0.19\textwidth]{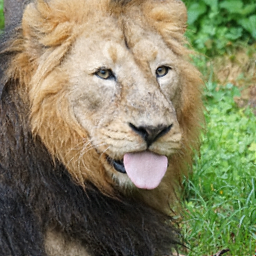}}%
    \hfill
    \subfloat{\includegraphics[width=0.19\textwidth]{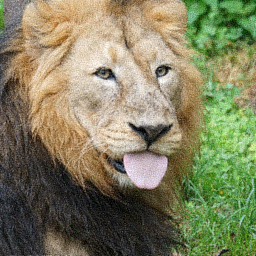}}%
    \hfill
    \subfloat{\includegraphics[width=0.19\textwidth]{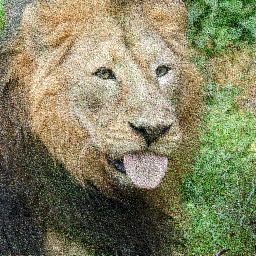}}%
    \\[0.5em]

    % ===== Row 3 =====
    \subfloat{\includegraphics[width=0.19\textwidth]{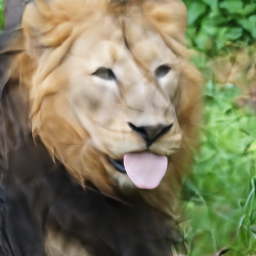}}%
    \hfill
    \subfloat{\includegraphics[width=0.19\textwidth]{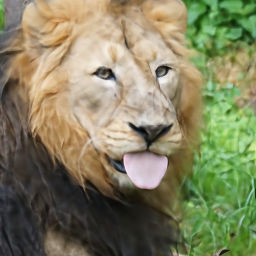}}%
    \hfill
    \subfloat[RFF MLP]{\includegraphics[width=0.19\textwidth]{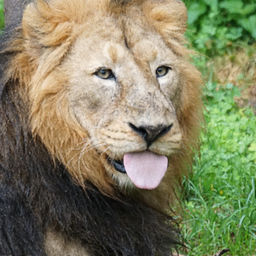}}%
    \hfill
    \subfloat{\includegraphics[width=0.19\textwidth]{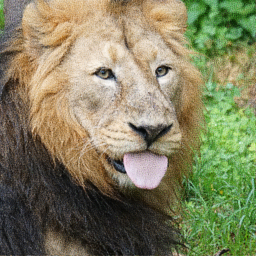}}%
    \hfill
    \subfloat{\includegraphics[width=0.19\textwidth]{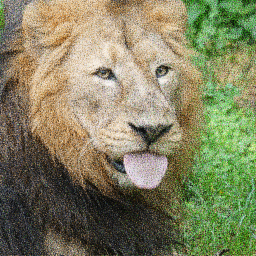}}%
    \\[0.5em]

    % ===== Row 4 =====
    \subfloat[\\$\sigma_{RFF}=4$]{\includegraphics[width=0.19\textwidth]{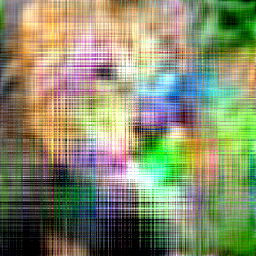}}%
    \hfill
    \subfloat[\\$\sigma_{RFF}=8$]{\includegraphics[width=0.19\textwidth]{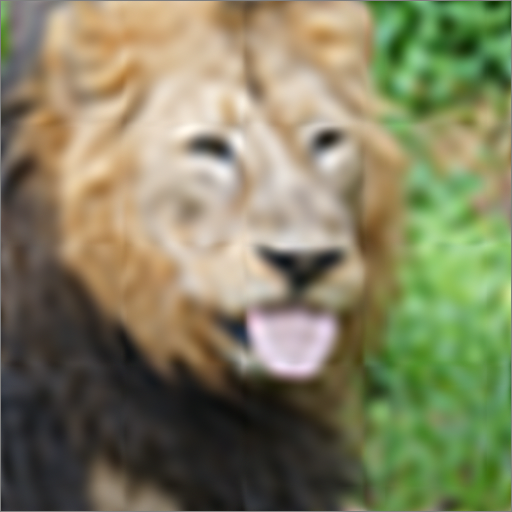}}%
    \hfill
    \subfloat[\centering GPR ($1/\pi\sigma_{RFF}$)\\$\sigma_{RFF}=64$]{\includegraphics[width=0.19\textwidth]{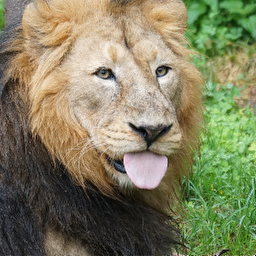}}%
    \hfill
    \subfloat[\\$\sigma_{RFF}=256$]{\includegraphics[width=0.19\textwidth]{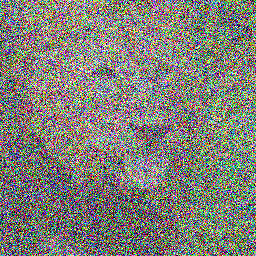}}%
    \hfill
    \subfloat[\\$\sigma_{RFF}=384$]{\includegraphics[width=0.19\textwidth]{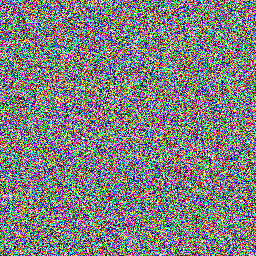}}%

    \end{minipage}
    \caption{Samples from each model on a training image, for the $4\times$ upsampling task. Each column corresponds to a choice of model's lengthscale, in increasing order. For every model, low-frequency lengthscales predict smooth images and high-frequency ones introduce noise in predictions.}
    \label{fig:im_ups_visual_rff}
\end{figure}
}

\begin{figure}[!ht]
    
\centering
\begin{minipage}{0.6\textwidth}
\centering
\subfloat[Ground Truth]{\includegraphics[width=0.24\textwidth]{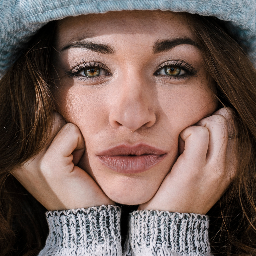}}%
\hfill
\subfloat[$\sigma_{Noise} = 0$]{\includegraphics[width=0.24\textwidth]{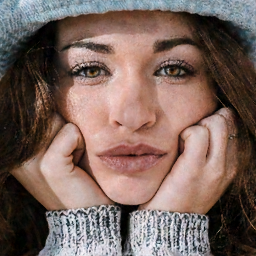}}%
\hfill
\subfloat[$\sigma_{Noise} = 0.08$]{\includegraphics[width=0.24\textwidth]{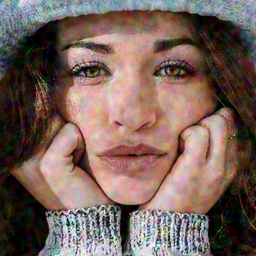}}%
\hfill
\subfloat[$\sigma_{Noise} = 0.16$]{\includegraphics[width=0.24\textwidth]{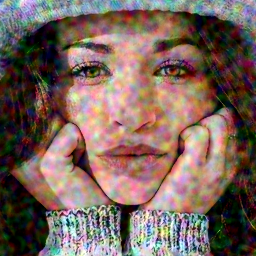}}%
\hfill
\\[0.5em]

\hspace{0.24\textwidth}\hfill
\subfloat[$\sigma_{Noise} = 0$]{\includegraphics[width=0.24\textwidth]{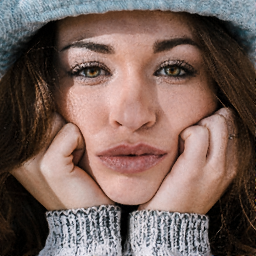}}%
\hfill
\subfloat[$\sigma_{Noise} = 0.08$]{\includegraphics[width=0.24\textwidth]{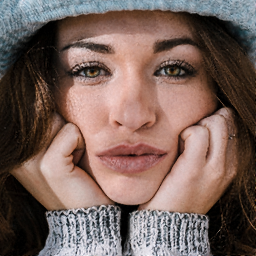}}%
\hfill
\subfloat[$\sigma_{Noise} = 0.16$]{\includegraphics[width=0.24\textwidth]{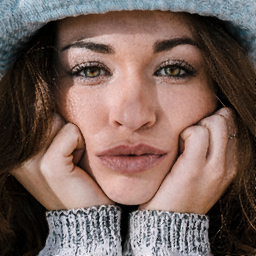}}%
\hfill

\end{minipage}

\caption{Samples from RP Flow prior (\textit{(b, c, d)}) and posterior (\textit{(e, f, g)}) on a validation image (\textit{a}), in the image upsampling task, for different values of $\sigma_{Noise}$. Incorporating noisy observations during training degrades the quality of prior samples, while posterior samples remain unaffected.}
\label{fig:im_ups_visual_training_noise}
\end{figure}

\begin{figure}[!ht]
    \centering
    \includegraphics[trim={.5cm 0 .5cm 0},clip, width=0.16\textwidth]{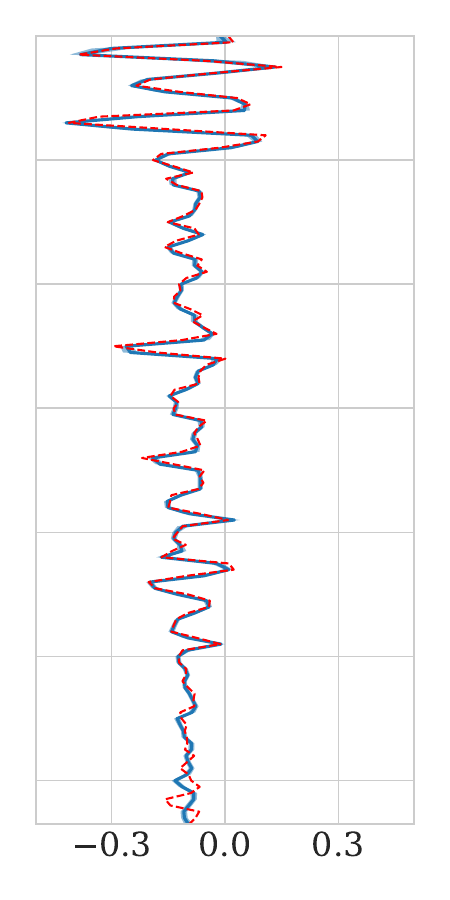}\hfill
    \includegraphics[trim={.5cm 0 .5cm 0},clip, width=0.16\textwidth]{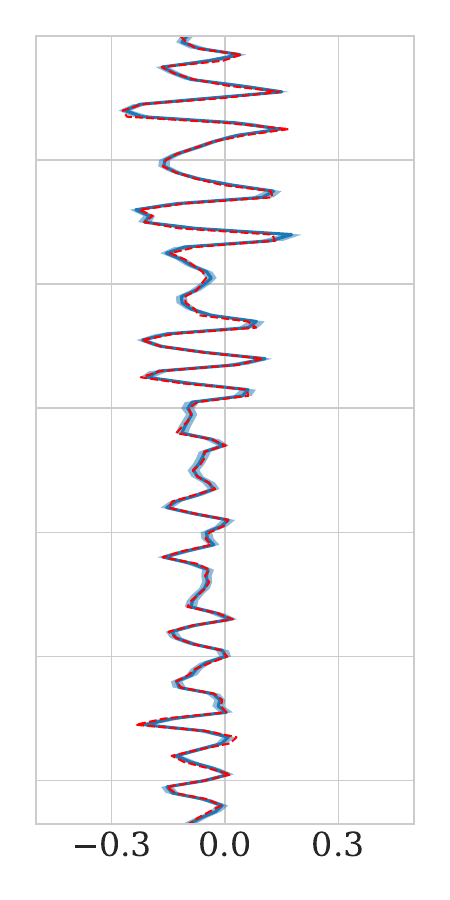}\hfill
    \includegraphics[trim={.5cm 0 .5cm 0},clip, width=0.16\textwidth]{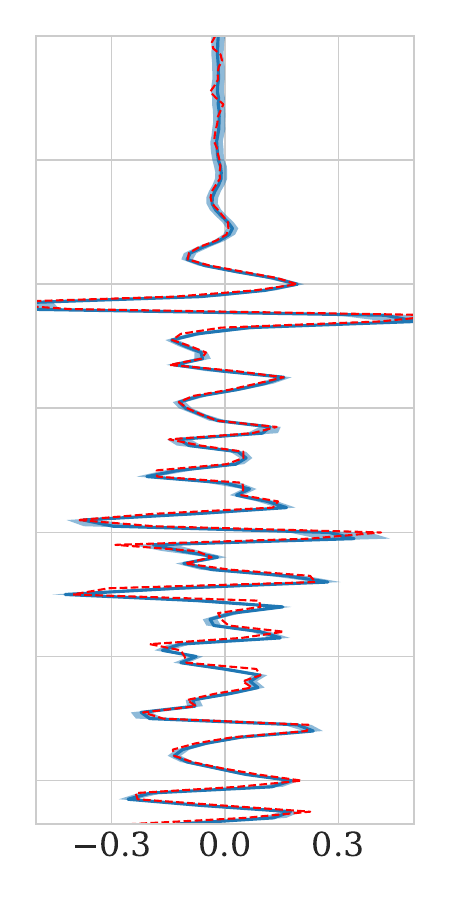}\hfill
    \includegraphics[trim={.5cm 0 .5cm 0},clip, width=0.16\textwidth]{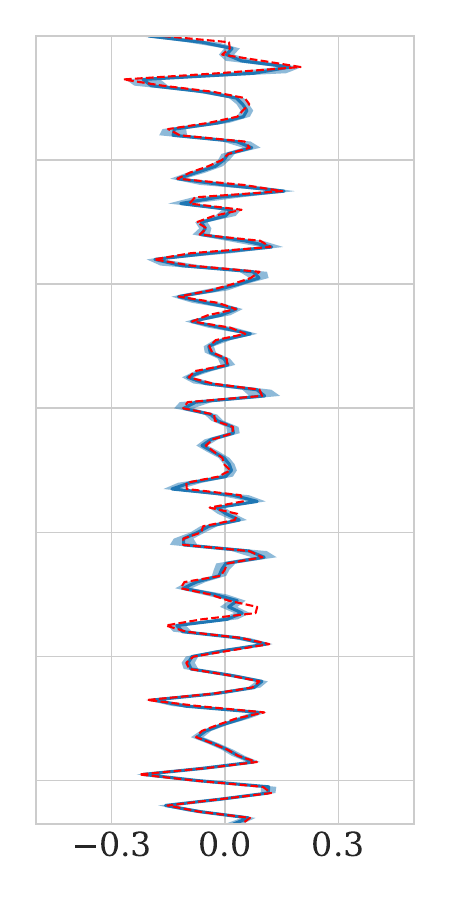}\hfill
    \includegraphics[trim={.5cm 0 .5cm 0},clip, width=0.16\textwidth]{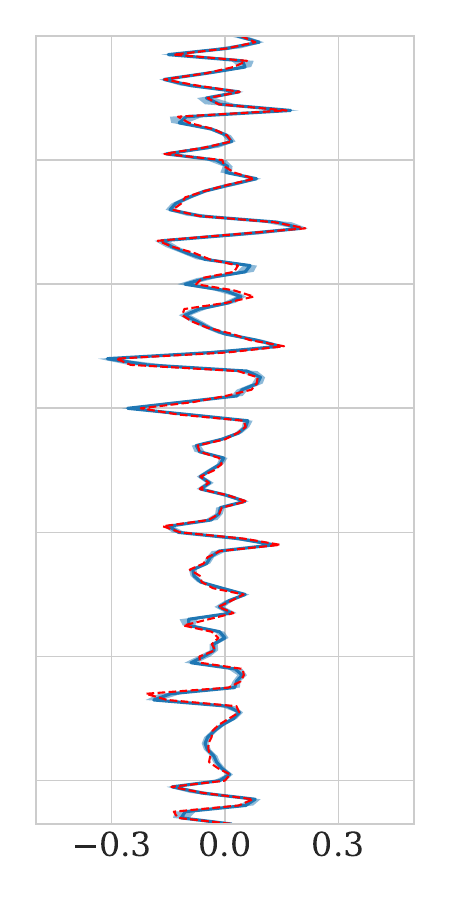}\hfill
    \includegraphics[trim={.5cm 0 .5cm 0},clip, width=0.16\textwidth]{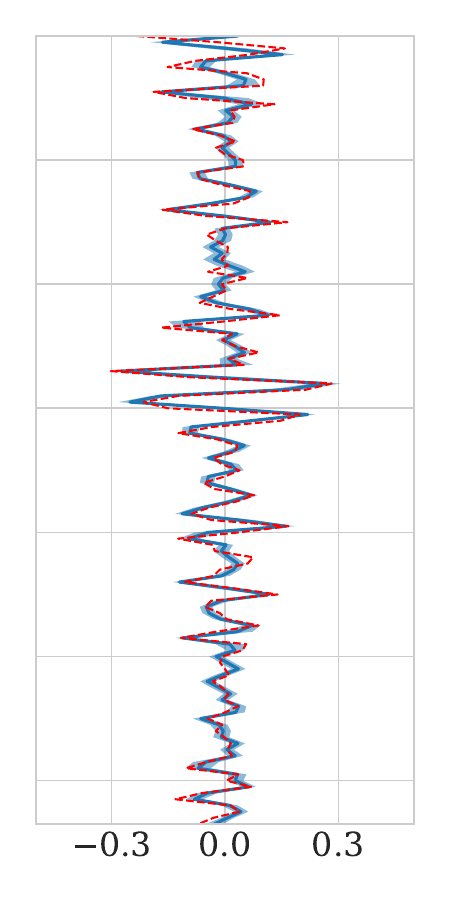}
    \\[0.4em]
    \includegraphics[trim={.5cm 0 .5cm 0},clip, width=0.16\textwidth]{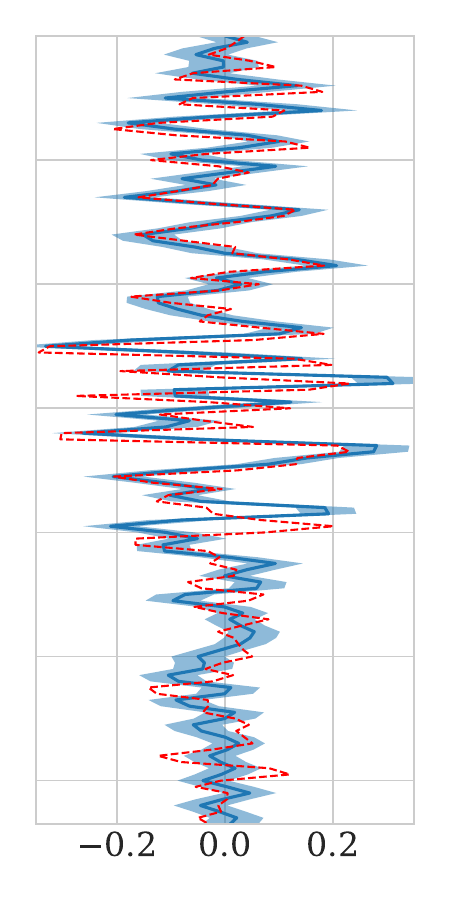}\hfill
    \includegraphics[trim={.5cm 0 .5cm 0},clip, width=0.16\textwidth]{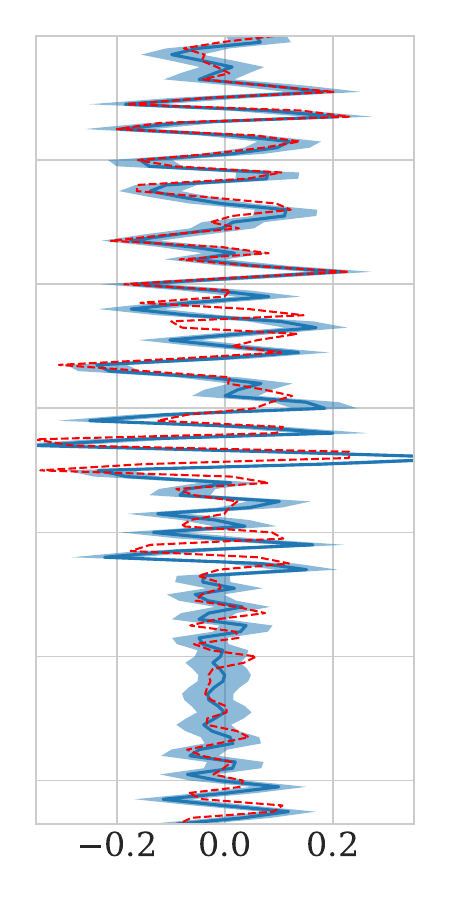}\hfill
    \includegraphics[trim={.5cm 0 .5cm 0},clip, width=0.16\textwidth]{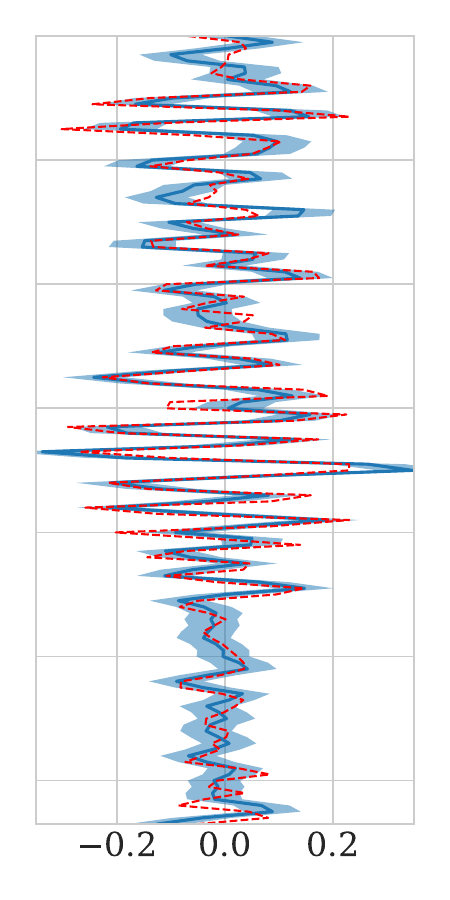}\hfill
    \includegraphics[trim={.5cm 0 .5cm 0},clip, width=0.16\textwidth]{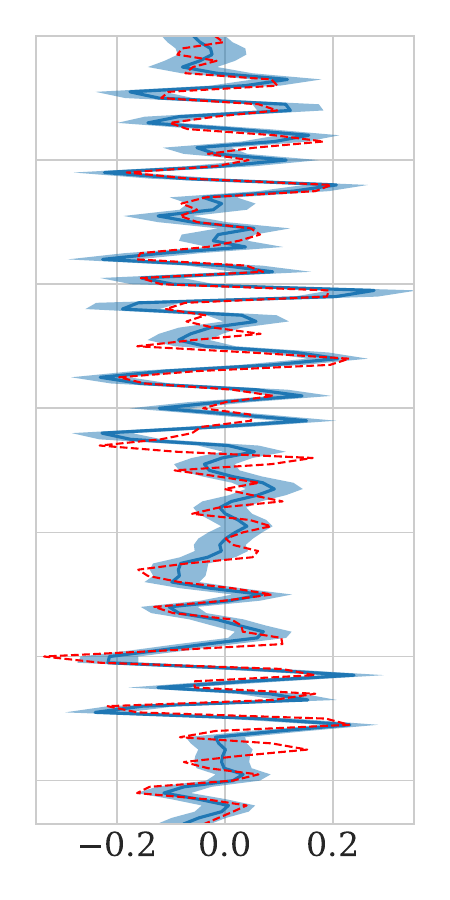}\hfill
    \includegraphics[trim={.5cm 0 .5cm 0},clip, width=0.16\textwidth]{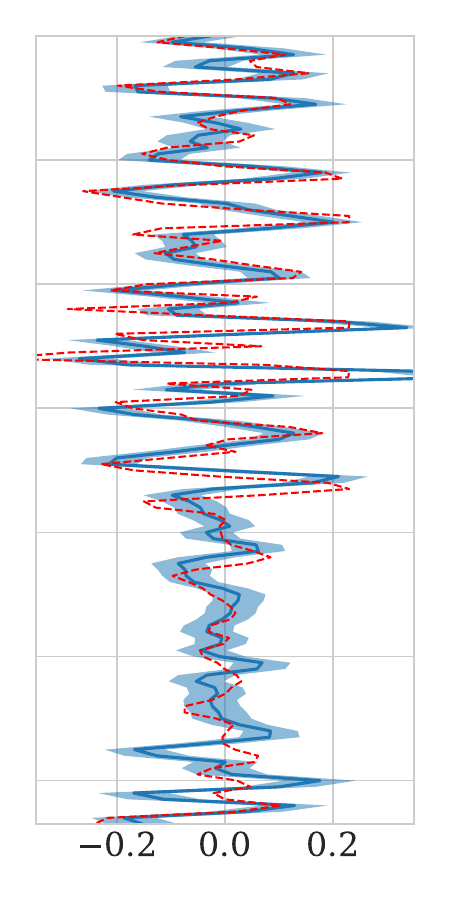}\hfill
    \includegraphics[trim={.5cm 0 .5cm 0},clip, width=0.16\textwidth]{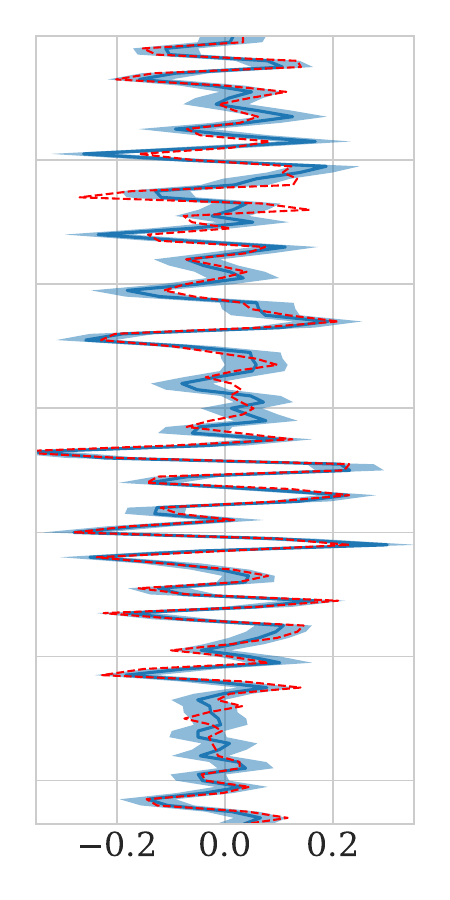}
    \caption{Predictive distribution of randomly sampled vertical traces over the 16 test synthetic seismic volumes (top) and the real seismic volume (bottom) by RP Flow M. Ground truth trace is represented in dashed red, mean prediction in solid blue and standard deviation in light blue envelope.}    % \label{fig:seismic_interpolation_f3}
\end{figure}

\begin{figure}[!ht]
    \centering
    \begin{tikzpicture}
    \node[anchor=south west, inner sep=0] (img) at (0,0)
      {\includegraphics[width=0.49\textwidth]{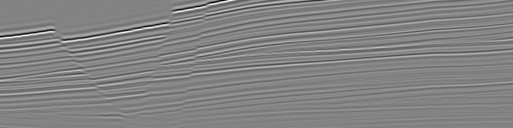}};
    \begin{scope}[x={(img.south east)}, y={(img.north west)}]
      \draw[red, thick] (0.001, 0) -- (0.001, 1);
      \draw[red, thick] (0.125, 0) -- (0.125, 1);
      \draw[red, thick] (0.25, 0) -- (0.25, 1);
      \draw[red, thick] (0.375, 0) -- (0.375, 1);
      \draw[red, thick] (0.5, 0) -- (0.5, 1);
      \draw[red, thick] (0.625, 0) -- (0.625, 1);
      \draw[red, thick] (0.75, 0) -- (0.75, 1);
      \draw[red, thick] (0.875, 0) -- (0.875, 1);
      \draw[red, thick] (.997, 0) -- (0.997, 1);
    \end{scope}
  \end{tikzpicture}\hfill
    \includegraphics[width=0.49\textwidth]{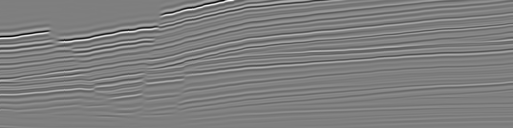}
    \\[0.5em]
    \includegraphics[width=0.49\textwidth]{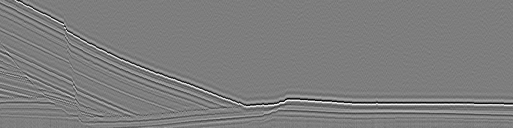}\hfill
    \includegraphics[width=0.49\textwidth]{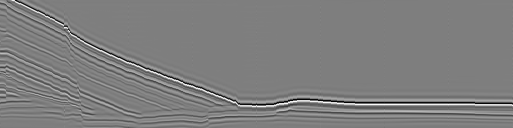}
    \\[0.5em]
    \includegraphics[width=0.49\textwidth]{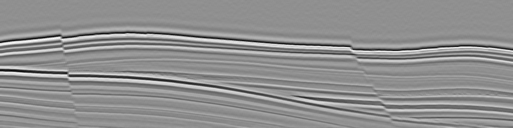}\hfill
    \includegraphics[width=0.49\textwidth]{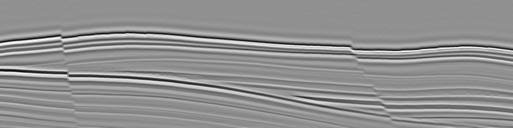}
    \\[0.5em]
    \includegraphics[width=0.49\textwidth]{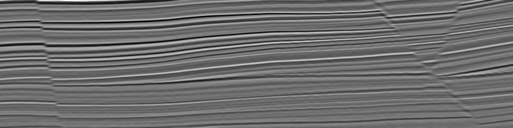}\hfill
    \includegraphics[width=0.49\textwidth]{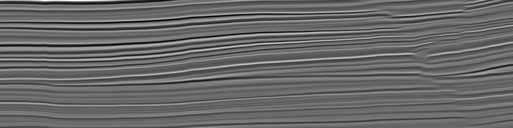}
    \\[0.5em]
    % \includegraphics[width=0.49\textwidth]{figures/seismic_interpolation/samples_synth/86886174_GT_4.png}\hfill
    % \includegraphics[width=0.49\textwidth]{figures/seismic_interpolation/samples_synth/86886174_rpf_M_4.png}
    % \\[0.5em]
    \includegraphics[width=0.49\textwidth]{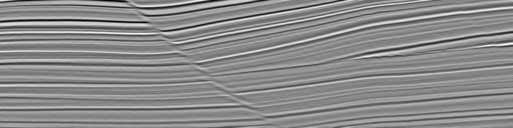}\hfill
    \includegraphics[width=0.49\textwidth]{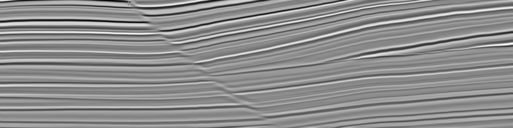}
    \caption{Randomly selected vertical sections from the 3D synthetic seismic test volumes. The ground truth section is shown on the left, while the predictions generated by RP Flow M are displayed on the right. Vertical traces that were seen during the model training are highlighted in red on the first GT section and is the same for the other sections.}
    % \label{fig:seismic_interpolation_f3}
\end{figure}

\begin{figure}[!ht]
    \centering
    \begin{tikzpicture}
    \node[anchor=south west, inner sep=0] (img) at (0,0)
      {\includegraphics[width=0.24\textwidth]{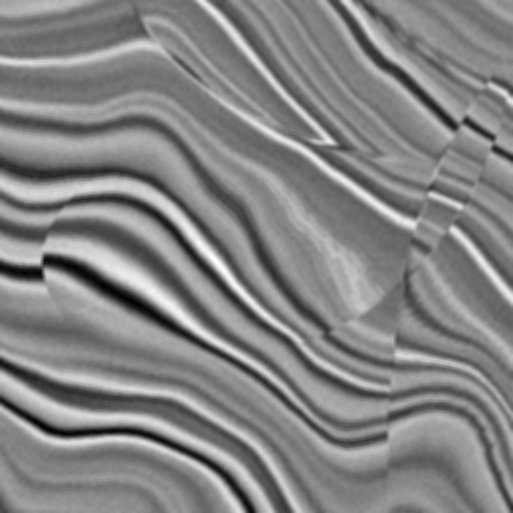}};
    \begin{scope}[x={(img.south east)}, y={(img.north west)}]
      \draw[red, thick] (0.001, 0) -- (0.001, 1);
      \draw[red, thick] (0.125, 0) -- (0.125, 1);
      \draw[red, thick] (0.25, 0) -- (0.25, 1);
      \draw[red, thick] (0.375, 0) -- (0.375, 1);
      \draw[red, thick] (0.5, 0) -- (0.5, 1);
      \draw[red, thick] (0.625, 0) -- (0.625, 1);
      \draw[red, thick] (0.75, 0) -- (0.75, 1);
      \draw[red, thick] (0.875, 0) -- (0.875, 1);
      \draw[red, thick] (.997, 0) -- (0.997, 1);
      
      \draw[red, thick] (0, 0.001) -- (1, 0.001);
      \draw[red, thick] (0, 0.125) -- (1, 0.125);
      \draw[red, thick] (0, 0.25) -- (1, 0.25);
      \draw[red, thick] (0, 0.375) -- (1, 0.375);
      \draw[red, thick] (0, 0.5) -- (1, 0.5);
      \draw[red, thick] (0, 0.625) -- (1, 0.625);
      \draw[red, thick] (0, 0.75) -- (1, 0.75);
      \draw[red, thick] (0, 0.875) -- (1, 0.875);
      \draw[red, thick] (0, .997) -- (1, 0.997);
    \end{scope}
  \end{tikzpicture}\hfill
    \includegraphics[width=0.24\textwidth]{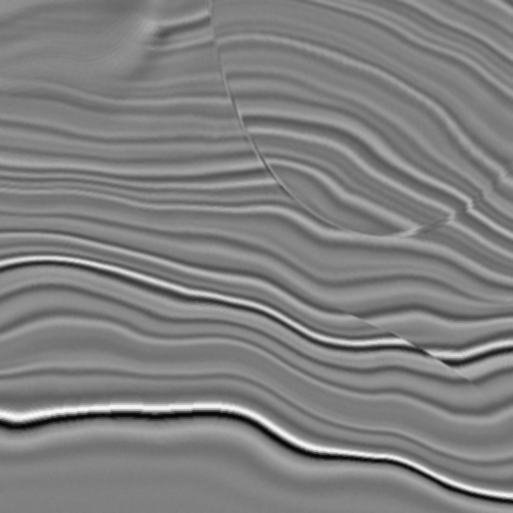}\hfill
    \includegraphics[width=0.24\textwidth]{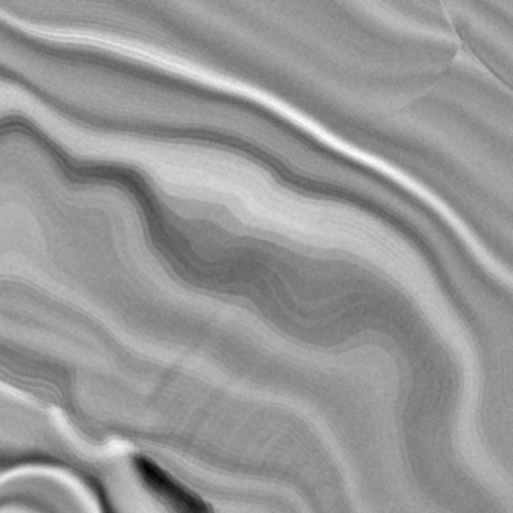}\hfill
    \includegraphics[width=0.24\textwidth]{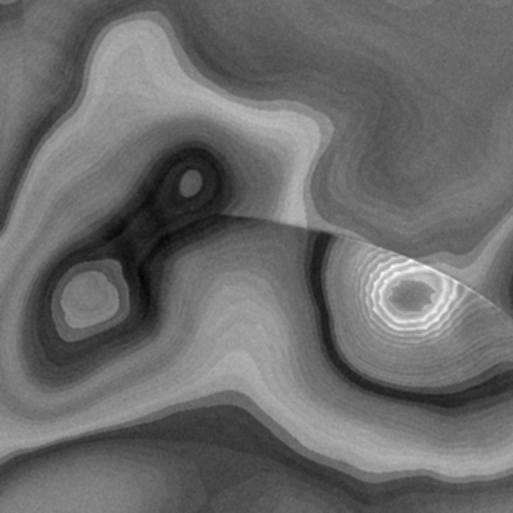}
    \\[0.5em]
    \includegraphics[width=0.24\textwidth]{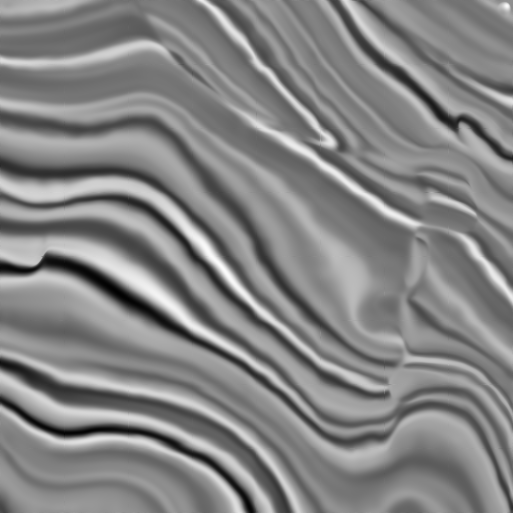}\hfill
    \includegraphics[width=0.24\textwidth]{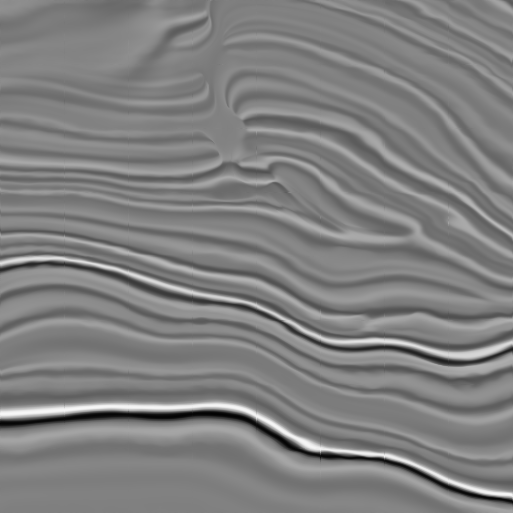}\hfill
    \includegraphics[width=0.24\textwidth]{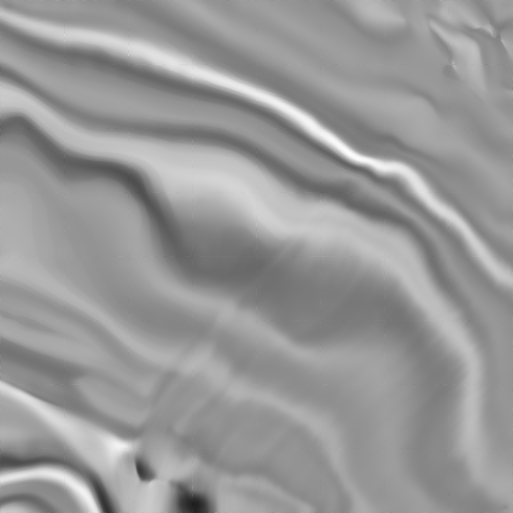}\hfill
    \includegraphics[width=0.24\textwidth]{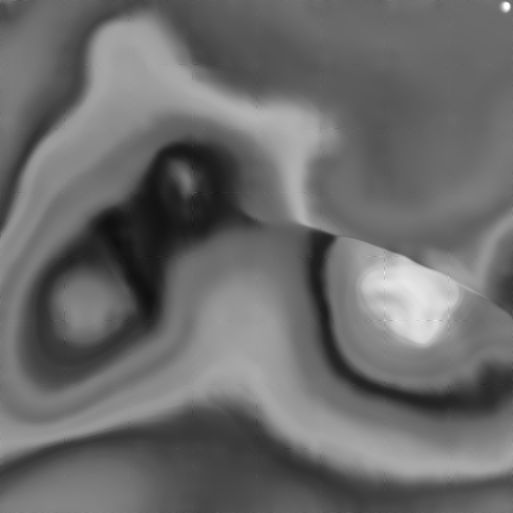}
    \caption{Randomly selected horizontal slices from the 3D synthetic seismic test volumes. The ground truth section is shown at the top, while the predictions generated by RP Flow M are displayed on the bottom. The training grid is highlighted in red on the first GT slice and is the same for the other slices.}
    % \label{fig:seismic_interpolation_f3}
\end{figure}

\begin{figure}[!ht]
    \centering
    \subfloat[Ground Truth]{\begin{tikzpicture}
    \node[anchor=south west, inner sep=0] (img) at (0,0)
      {\includegraphics[width=0.49\textwidth]{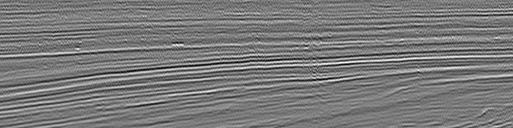}};
    \begin{scope}[x={(img.south east)}, y={(img.north west)}]
      \draw[red, thick] (0.001, 0) -- (0.001, 1);
      \draw[red, thick] (0.125, 0) -- (0.125, 1);
      \draw[red, thick] (0.25, 0) -- (0.25, 1);
      \draw[red, thick] (0.375, 0) -- (0.375, 1);
      \draw[red, thick] (0.5, 0) -- (0.5, 1);
      \draw[red, thick] (0.625, 0) -- (0.625, 1);
      \draw[red, thick] (0.75, 0) -- (0.75, 1);
      \draw[red, thick] (0.875, 0) -- (0.875, 1);
      \draw[red, thick] (.997, 0) -- (0.997, 1);
    \end{scope}
  \end{tikzpicture}}
    \\[0.5em]
    \subfloat[RFF Network]{\includegraphics[width=0.49\textwidth]{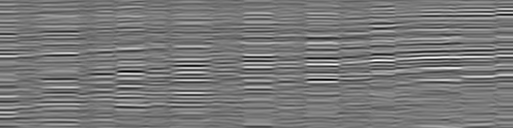}}\hfill
    \subfloat[GP (noiseless)]{\includegraphics[width=0.49\textwidth]{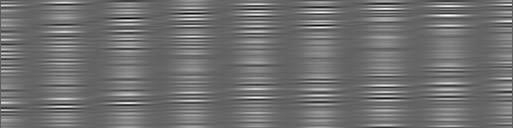}}
    \\[0.5em]
    \subfloat[GP (calibrated)]{\includegraphics[width=0.49\textwidth]{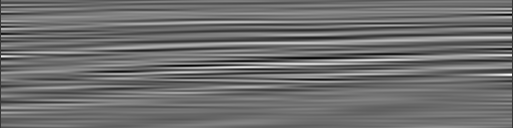}}\hfill
    \subfloat[RP Flow]{\includegraphics[width=0.49\textwidth]{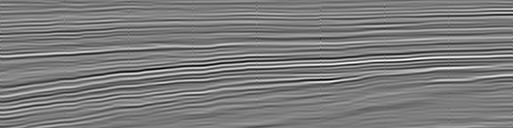}}
    \\[0.5em]
    \subfloat[Ground Truth]{\begin{tikzpicture}
    \node[anchor=south west, inner sep=0] (img) at (0,0)
      {\includegraphics[width=0.24\textwidth]{figures/seismic_interpolation/samples_f3/slice_gt.png}};
    \begin{scope}[x={(img.south east)}, y={(img.north west)}]
      \draw[red, thick] (0.001, 0) -- (0.001, 1);
      \draw[red, thick] (0.125, 0) -- (0.125, 1);
      \draw[red, thick] (0.25, 0) -- (0.25, 1);
      \draw[red, thick] (0.375, 0) -- (0.375, 1);
      \draw[red, thick] (0.5, 0) -- (0.5, 1);
      \draw[red, thick] (0.625, 0) -- (0.625, 1);
      \draw[red, thick] (0.75, 0) -- (0.75, 1);
      \draw[red, thick] (0.875, 0) -- (0.875, 1);
      \draw[red, thick] (.997, 0) -- (0.997, 1);
      
      \draw[red, thick] (0, 0.001) -- (1, 0.001);
      \draw[red, thick] (0, 0.125) -- (1, 0.125);
      \draw[red, thick] (0, 0.25) -- (1, 0.25);
      \draw[red, thick] (0, 0.375) -- (1, 0.375);
      \draw[red, thick] (0, 0.5) -- (1, 0.5);
      \draw[red, thick] (0, 0.625) -- (1, 0.625);
      \draw[red, thick] (0, 0.75) -- (1, 0.75);
      \draw[red, thick] (0, 0.875) -- (1, 0.875);
      \draw[red, thick] (0, .997) -- (1, 0.997);
    \end{scope}
  \end{tikzpicture}}
    \\[0.2em]
    \subfloat[RFF Network]{\includegraphics[width=0.24\textwidth]{figures/seismic_interpolation/samples_f3/slice_rff.png}}\hfill
    \subfloat[GP (noiseless)]{\includegraphics[width=0.24\textwidth]{figures/seismic_interpolation/samples_f3/slice_gp.png}}\hfill
    \subfloat[GP (calibrated)]{\includegraphics[width=0.24\textwidth]{figures/seismic_interpolation/samples_f3/slice_gp_cal.png}}\hfill
    \subfloat[RP Flow]{\includegraphics[width=0.24\textwidth]{figures/seismic_interpolation/samples_f3/slice_rpf.png}}
    \caption{Vertical section (top) and horizontal slice (bottom) of the 3D real seismic volume. The voxels that were seen during the models training are highlighted in red on the Ground truth.}
    % \label{fig:seismic_interpolation_f3}
\end{figure}

\end{document}